\documentclass{article}
\PassOptionsToPackage{numbers,sort&compress}{natbib}

\usepackage[final,main]{neurips2026}

\usepackage[utf8]{inputenc}
\usepackage[T1]{fontenc}
\usepackage{url}
\usepackage{booktabs}
\usepackage{amsfonts}
\usepackage{amsmath}
\usepackage{amssymb}
\usepackage{mathtools}
\usepackage{amsthm}
\usepackage{nicefrac}
\usepackage{microtype}
\usepackage[dvipsnames,table]{xcolor}
\usepackage{graphicx}
\usepackage{subcaption}
\usepackage{adjustbox}
\usepackage{multirow}
\usepackage{enumitem}
\usepackage{wrapfig}
\usepackage{pifont}
\usepackage{algorithm}
\usepackage{algorithmic}
\usepackage{thmtools}
\usepackage{thm-restate}
\usepackage{hyperref}
\usepackage[capitalize,noabbrev]{cleveref}

\definecolor{tablegray}{rgb}{0.9, 0.9, 0.9}

\newcommand{\pms}[2]{#1{\scriptstyle\pm}#2}

\newcommand{\esci}[3]{\ensuremath{\left(\pms{#1}{#2}\right)\ifnum#3=0\relax\else\mathrm{e}{#3}\fi}}
\newcommand{\LCFM}{\mathcal{L}_{\mathrm{CFM}}}

\theoremstyle{plain}
\newtheorem{theorem}{Theorem}
\newtheorem{lemma}{Lemma}
\newtheorem{proposition}{Proposition}
\newtheorem{corollary}{Corollary}
\theoremstyle{definition}
\newtheorem{definition}{Definition}

\theoremstyle{remark}
\newtheorem{remark}{Remark}

\newif\ifarxiv
\arxivtrue

\title{Improving Function Space Flow Matching with Kernel Optimal Transport}

\author{%
  Fred Xu \\
  University of California, Los Angeles \\
  \texttt{fredxu@cs.ucla.edu} \\
  \And
  Thomas Markovich \\
  Block, Inc. \\
  \texttt{tmarkovich@squareup.com} \\
  \AND
  Barbora Barancikova \\
  Imperial College London \\
  \texttt{b.barancikova23@imperial.ac.uk} \\
  \And
  Yizhou Sun \\
  University of California, Los Angeles \\
  \texttt{yzsun@cs.ucla.edu} \\
}

\begin{document}

\maketitle

\begin{abstract}
Generative models for function-valued data, such as time series and solutions of partial differential equations, must learn distributions over infinite-dimensional spaces. Functional Flow Matching (FFM) extends Flow Matching to this setting, learning a velocity field whose flow transports a Gaussian prior to the data distribution, but it inherits the independent endpoint pairing of standard Flow Matching: in each batch, prior and data samples are matched arbitrarily, so the conditional bridge must traverse both the shared global structure of the dataset and instance-specific residuals. In function space this limitation is harder to fix than in finite dimensions, since optimal transport (OT) on function spaces is delicate to formulate and a flat Euclidean surrogate ignores the geometry that distinguishes function-valued data. We propose kernel Functional Flow Matching (kFFM), which replaces the independent pairing by entropic OT under a kernel-induced cost, the coupling underlying the Hilbert Sinkhorn Divergence (HSD), while leaving the FFM neural-operator architecture unchanged. Theoretically, we prove that the kernel cost and the HSD objective are uniformly bounded and well-posed on Banach ambient spaces, derive an explicit error decomposition against quadratic-cost OT on compact metric spaces that isolates an irreducible kernel-cost mismatch term, and prove a discretization-invariance bound whose rate is governed by Sobolev regularity. Empirically, kFFM improves distributional matching over FFM, diffusion, adversarial, and finite-dimensional OT baselines on time-series and PDE benchmarks, with statistically significant paired-seed gains over FFM and improvements that persist under non-kernel and physics-based diagnostics, including a turbulent Navier--Stokes benchmark. Bounded kernel costs already outperform raw $L^2$ Sinkhorn, and function-space-aware kernels (signature, Sobolev RBF) give further gains on rough or path-valued data.
\end{abstract}

\section{Introduction}

Flow matching \citep{lipman2023flowmatchinggenerativemodeling} is a powerful framework for generative modeling that learns the velocity field of a probability path between a reference and a data distribution. Time series and PDE snapshots are naturally represented as functions, motivating \citet{kerrigan2023functionalflowmatching} to extend this framework to function space through Functional Flow Matching (FFM), which defines a path of Gaussian measures in a Hilbert space and parameterizes the velocity field with a neural operator. FFM enjoys resolution-invariance and competitive performance on sequence and PDE datasets, but inherits the independent endpoint coupling of base FM. Under this coupling, each Gaussian prior sample is paired with an arbitrary data function in a minibatch, and the conditional bridge must traverse both the shared global structure of the dataset and instance-specific residuals; the velocity field is then forced to discover the global structure and match each particular instance simultaneously. A natural fix is geometry-aware coupling: pairing each prior sample with a data sample that is already close in a meaningful geometry, so the bridge primarily transports the residual. This idea has improved finite-dimensional flow matching \citep{tong2024improvinggeneralizingflowbasedgenerative}, but transferring it to function space is non-trivial: direct optimal transport (OT) on function spaces is delicate to formulate and compute \citep{Oxtoby1946InvariantMeasures, DaPrato2006InfiniteDimensionalAnalysis, hosseini2024conditionaloptimaltransportfunction, divol2024optimaltransportmapestimation}, and a flat $L^2$ surrogate ignores the function-space geometry (Sobolev structure, path roughness) that distinguishes one function-valued sample from another.

We resolve this through a kernel-induced Hilbert space surrogate. Working in a Reproducing Kernel Hilbert Space \citep{Sriperumbudur2010Hilbert}, the entropic coupling underlying the Hilbert Sinkhorn Divergence \citep{Li_2021_CVPR} provides a tractable pairing rule whose kernel encodes prior knowledge about function-space geometry: Sobolev structure for PDE snapshots via the RBF kernel on $H^k$, and $p$-variation roughness for time series via the signature kernel \citep{cass2024lecturenotesroughpaths, chevyrev2022signaturemoments}. Our theory certifies this surrogate coupling (well-posedness with uniform bounds, an error decomposition with an irreducible kernel-mismatch term, and discretization invariance); it makes no straightness or sampling-efficiency claim in function space (Section~\ref{sec:exp}, Appendix~\ref{app:straightness}), and the sample-quality gains are an empirical claim that we test with paired-seed statistics, non-kernel metrics, physics diagnostics, and a turbulent Navier--Stokes stress test. The kernel-induced surrogate also outperforms raw $L^2$ Sinkhorn: relative to a flat-$L^2$ minibatch-OT baseline (CFM-OT($L^2$)) with the same Sinkhorn solver and neural-operator backbone, kFFM lowers the distributional error by 75\% on Gene Expression and 66\% on Economy. Function-space-aware kernels (Sobolev RBF, signature) add further gains on rough or path-valued data. Code for every experiment is publicly available.\footnote{\url{https://github.com/heraclixus/KFFM}} Our main contributions are the following:
\begin{enumerate}[topsep=0pt, itemsep=1pt, parsep=1pt, partopsep=0pt]
    \item We propose kernel Functional Flow Matching (kFFM), to our knowledge the first method to perform OT endpoint coupling in function-space flow matching: a kernel-induced cost couples prior and data samples before flow matching, while the FFM neural-operator architecture is unchanged.
    \item We establish three theoretical results for the function-space setting: uniform boundedness and well-posedness of the kernel cost and HSD on Banach ambient spaces; an explicit error decomposition against quadratic-cost OT on compact metric spaces (kernel mismatch, entropic bias, covering); and a discretization-invariance theorem with an $N^{-2(\alpha-k)/d}$ rate that ties the function-space objective to the truncated grid representation the model actually computes.
    \item We show empirically that kFFM improves population-level distributional quality on time series and PDE snapshots over functional generative baselines and a finite-dimensional OT coupling baseline, with paired-seed significance, gains that survive non-kernel and physics-based metrics including a turbulent Navier--Stokes benchmark, and a training overhead of only 5--7\% at the largest resolution we train. The coupling geometry matters differently across data classes: signature kernels for paths, RBF-type costs for fields.
\end{enumerate}

\section{Background}

For the FFM construction we follow \citet{kerrigan2023functionalflowmatching} and work in a separable Hilbert space $\mathcal{F}$ of functions $f : \Omega \to \mathbb{R}$, with $\Omega\subset\mathbb{R}^d$ compact, equipped with its Borel $\sigma$-algebra $\mathcal{B}(\mathcal{F})$; $\nu\in \mathcal{P}(\mathcal{F})$ denotes the data measure. Section \ref{sec:ffm_background} works directly with $\mathcal{F}$, while later subsections and Section \ref{Sec:method} use $\mathcal{X}$ for the metric sample space on which the OT coupling is defined. Additional background is deferred to Appendix \ref{app:a}.

\subsection{Flow Matching in Function Space}\label{sec:ffm_background}

Flow Matching \citep{lipman2023flowmatchinggenerativemodeling} learns the velocity field whose flow transports a prior distribution to the data distribution along a given probability path. In the Gaussian-measure formulation of FFM, a vector field $v : [0, 1] \times \mathcal{F} \to \mathcal{F}$ induces the flow $\partial_t \psi_t(g) = v_t(\psi_t(g))$, $\psi_0(g) = g$, and the path $\mu_t = [\psi_t]_\# \mu_0$ from $\mu_0 \in \mathcal{P}(\mathcal{F})$. FFM constructs this path by averaging endpoint-conditioned Gaussian bridges over $f \sim \nu$: if $\mu_0^f = \mu_0$ and $\mu_1^f$ is concentrated around $f$, then $\mu_t(A) = \int \mu_t^f(A)\,d\nu(f)$ satisfies $\mu_1 = \nu$. For Gaussian bridges with $\mu_0 = \mathcal{N}(0, C_0)$ and $\mu_t^f = \mathcal{N}(m_t^f, (\sigma_t^f)^2C_0)$, the conditional velocity is
\begin{equation*}
    v_t^f(g) = \frac{(\sigma_t^f)'}{\sigma_t^f}(g - m_t^f) + \frac{d}{dt}m_t^f, \qquad g \sim \mu^f_t,
\end{equation*}
and the standard FFM parametrization $m_t^f = tf$, $\sigma_t^f = 1 - (1 - \sigma_{\min})t$ corresponds to Gaussian OT between the bridge marginals, while the endpoint pairing itself remains independent \citep{tong2024improvinggeneralizingflowbasedgenerative}. A neural operator $v_\theta(g,t)$, here a Fourier Neural Operator (FNO) \citep{li2021fourierneuraloperatorparametric}, is trained with the Conditional Flow Matching (CFM) loss
\begin{equation}\label{eq:cfm}
    \mathcal{L}_{CFM}(\theta):=\mathbb{E}_{t\sim U[0,1], f\sim \nu, g\sim \mu_t^f} \left[ \|v_t^f(g)-v_\theta(g,t) \|_{\mathcal{F}}^2\right],
\end{equation}
which under suitable regularity conditions \citep{lipman2023flowmatchinggenerativemodeling,kerrigan2023functionalflowmatching} has the same minimizers as the unconditional loss $\mathcal{L}_{FM}(\theta):=\mathbb{E}_{t\sim U[0,1], g\sim \mu_t} [ \|v_t(g)-v_\theta(g,t) \|_{\mathcal{F}}^2]$; the absolute-continuity assumption behind this equivalence is spelled out in Appendix \ref{app:cameron_martin}. In Section \ref{Sec:method} we keep the same probability path but replace the independent pairing of its endpoints by an OT-coupled pairing.

\subsection{Optimal Transport and Kernel-based Optimal Transport}

For $(\mathcal{X},d)$ a Polish space, $\mu, \nu \in \mathcal{P}(\mathcal{X})$, and a cost $c$, the static OT problem is $\mathcal{W}(\mu, \nu) := \inf_{\pi \in \Pi(\mu, \nu)} \int c\,d\pi$ \citep{Villani2008OptimalTO}, and its entropic regularization \citep{peyre2020computationaloptimaltransport}
\begin{align}\label{eq:ot_entropic_reg}
    W_{\epsilon}(\mu, \nu) := \inf_{\pi\in \Pi(\mu, \nu)} \int c(x,y)\,d\pi(x,y) + \epsilon\,\mathrm{KL}(\pi \,\|\, \mu \otimes \nu)
\end{align}
underlies Sinkhorn methods; the debiased Sinkhorn divergence subtracts the corresponding self-costs \citep{feydy2018interpolatingoptimaltransportmmd, sejourne2019sinkhorndivergencesunbalancedoptimal, flamary2021pot}. OT-based couplings improve finite-dimensional flow matching by computing a static coupling between reference and data samples before interpolation \citep{tong2024improvinggeneralizingflowbasedgenerative, kornilov2024optimalflowmatchinglearning}. Adapting this idea to function-valued data is non-trivial: no Lebesgue-like reference measure exists on function space \citep{Oxtoby1946InvariantMeasures, eldredge2016analysisprobabilityinfinitedimensionalspaces}, so densities are ill-posed, and Gaussian-measure and Cameron--Martin assumptions \citep{DaPrato2006InfiniteDimensionalAnalysis} are limiting when data are non-Gaussian (PDE states, rough paths). This motivates embedding the measures into a Reproducing Kernel Hilbert Space (RKHS) \citep{Sriperumbudur2010Hilbert}, where the coupling is computed under a kernel-induced cost.

\begin{definition}[Hilbert Sinkhorn Divergence \citep{Li_2021_CVPR}] \label{def:hsd}
Let $(\mathcal{X},d)$ be a metric space, $\mu, \nu \in \mathcal{P}(\mathcal{X})$, and $\kappa:\mathcal{X}\times \mathcal{X}\rightarrow \mathbb{R}$ a measurable symmetric positive-definite kernel with RKHS $\mathcal{H}_\kappa$ and canonical feature map $\phi: \mathcal{X}\rightarrow \mathcal{H}_\kappa$. The entropic OT functional on the embedded space is
\begin{align}\label{eq:hsd_h}
    \mathsf{OT}^{\mathcal{H}_\kappa}_{\epsilon}(\phi_\# \mu, \phi_\# \nu)
    :=
    \inf_{\pi_\phi \in \Pi(\phi_\#\mu, \phi_\#\nu)}
    \Big[
    \int\|u-v\|_{\mathcal{H}_\kappa}^2\,d\pi_\phi(u,v)
    + \epsilon\,\mathrm{KL}\!\left(\pi_\phi\,\middle\|\,\phi_\#\mu\otimes\phi_\#\nu\right)
    \Big],
\end{align}
and the Hilbert Sinkhorn divergence is the debiased quantity
$S_\epsilon(\phi_\# \mu, \phi_\# \nu) := \mathsf{OT}^{\mathcal{H}_\kappa}_{\epsilon}(\phi_\# \mu, \phi_\# \nu) -\tfrac12 \mathsf{OT}^{\mathcal{H}_\kappa}_{\epsilon}(\phi_\# \mu, \phi_\# \mu) -\tfrac12 \mathsf{OT}^{\mathcal{H}_\kappa}_{\epsilon}(\phi_\# \nu, \phi_\# \nu)$.
\end{definition}
By the kernel trick, the functional in Equation \eqref{eq:hsd_h} can be written directly on $\mathcal{X}$ with the RKHS-induced cost $c_{\kappa}(x,y) := \|\phi(x)-\phi(y)\|^2_{\mathcal{H}_\kappa} = \kappa(x,x)+\kappa(y,y)-2\kappa(x,y)$ \citep[restated as Proposition~\ref{prop:equiv_hsd_rd} in Appendix~\ref{app:b}]{Li_2021_CVPR}:
\begin{align}\label{eq:hsd_x}
    \mathsf{OT}_{\kappa,\epsilon}(\mu,\nu)
    :=
    \inf_{\pi\in\Pi(\mu,\nu)}
    \Big[
    \int_{\mathcal{X}\times \mathcal{X}} c_{\kappa}(x,y)\, d\pi(x,y)
    +\epsilon\,\mathrm{KL}\!\left(\pi\,\middle\|\,\mu\otimes\nu\right)
    \Big],
\end{align}
with minimizers corresponding under pushforward, and $S_{\kappa,\epsilon}(\mu,\nu) := \mathsf{OT}_{\kappa,\epsilon}(\mu,\nu) - \tfrac12 \mathsf{OT}_{\kappa,\epsilon}(\mu,\mu) - \tfrac12 \mathsf{OT}_{\kappa,\epsilon}(\nu,\nu)=S_{\epsilon}(\phi_\#\mu, \phi_\#\nu)$. Previous work takes $\mathcal{X}\subset\mathbb{R}^d$, where common kernels apply directly, the divergence enjoys statistical consistency with finite-sample bounds (restated in Appendix \ref{app:b}), and the discrete entropic OT problem is convex \citep{feydy2018interpolatingoptimaltransportmmd}. Section \ref{Sec:method} generalizes $\mathcal{X}$ to function-valued sample spaces.

\subsection{Path Signature and Signature Kernel}\label{sec:signature_background}

Let $C_p(\mathbb{R}^d) = C_p([0, T], \mathbb{R}^d)$ denote the continuous paths from $[0,T]$ to $\mathbb{R}^d$ with finite $p$-variation, $p \in [1, 2)$. The signature of $x \in C_p(\mathbb{R}^d)$ is the sequence of iterated Young integrals $S(x) = (1, S^{(1)}(x), S^{(2)}(x), \ldots)$ with
$S^{(m)}(x) = \int_{0 < t_1 < \cdots < t_m < T} dx_{t_1} \otimes \cdots \otimes dx_{t_m} \in (\mathbb{R}^d)^{\otimes m}$, an element of the free tensor algebra $T((\mathbb{R}^d)) = \prod_{n=0}^\infty (\mathbb R^d)^{\otimes n}$.
\begin{definition}[Signature Kernel \citep{cass2024lecturenotesroughpaths}]\label{def:sig_ker}
    The signature kernel $\kappa_{\mathrm{sig}}: C_p(\mathbb{R}^d)\times C_p(\mathbb{R}^d)\rightarrow \mathbb{R}$ is
    $\kappa_{\mathrm{sig}}(x,y) = \langle S(x), S(y) \rangle := \sum_{m=0}^\infty \langle S^{(m)}(x), S^{(m)}(y) \rangle_{(\mathbb{R}^d)^{\otimes m}}$, where $\langle \cdot, \cdot \rangle_{(\mathbb{R}^d)^{\otimes m}}$ is the Hilbert--Schmidt inner product.
\end{definition}
The signature kernel is symmetric and positive definite and hence induces a unique RKHS $\mathcal{H}_{\mathrm{sig}}$; on compact subsets, the time-augmented signature kernel is universal and characteristic \citep{cass2024lecturenotesroughpaths}. We compute it with \texttt{pySigLib} \citep{shmelev2025pysiglibfastsignaturebased}.

\section{Functional Flow Matching with Kernel Optimal Transport}\label{Sec:method}

Throughout this section $\mathcal{X}$ is a general metric space, the sample space on which data and reference measures live and on which the kernel OT coupling acts.

\subsection{Coupling with Kernel Optimal Transport}\label{sec:coupling}

kFFM first couples prior and data samples through entropic OT under a kernel-induced cost, Equation \eqref{eq:hsd_x}. The kernel reflects the metric space where the data live. On the Sobolev space $\mathcal{X} = H^k(\Omega)$ (functions with $k$ weak derivatives in $L^2$, the natural home of PDE solutions \citep{Evans2010}), the RBF kernel on $H^k$ rewards pairs close in a derivative-aware norm; on path space $\mathcal{X} \subset C_p([0,T],\mathbb{R}^d)$, $p\in[1,2)$ (smooth and rough time series \citep{cass2024lecturenotesroughpaths}), the signature kernel rewards reparametrization-invariant path similarity. Well-posedness for each choice is established in Section \ref{sec:theory} and validated in Section \ref{sec:exp}; with empirical measures the minibatch problem is solved by the Sinkhorn algorithm \citep{flamary2021pot}.

Given a plan $\pi$ between prior draws $f_0 \sim \mu_0$ and data samples $f_1 \sim \mu_1$, kFFM uses the same probability path as FFM, with the paired prior sample in the role of the base noise:
\begin{gather*}
    g_t^{f_0,f_1} = t f_1 + \sigma_t f_0, \qquad \sigma_t = 1-(1-\sigma_{\min})t, \qquad (f_0,f_1)\sim\pi, \\
    v_t^{f_0,f_1}\big(g_t^{f_0,f_1}\big) = \partial_t g_t^{f_0,f_1} = f_1-(1-\sigma_{\min})f_0 = \frac{f_1-(1-\sigma_{\min})\,g_t^{f_0,f_1}}{\sigma_t},
\end{gather*}
so that $g_0=f_0\sim\mu_0$, $g_1=f_1+\sigma_{\min}f_0$, and the conditional velocity is the FFM conditional field of Section \ref{sec:ffm_background} evaluated with the coupled pair; the FFM path $tf+\sigma_t\xi$ is recovered exactly for the independent coupling $\pi=\mu_0\otimes\mu_1$.

\textbf{What the coupling changes.} Up to the $\sigma_{\min}$ term, the regression target is the displacement $u := f_1-f_0$, the \emph{residual} between the paired data and prior samples, which carries all the endpoint information. Under independent coupling this residual is the difference of two unrelated draws, a large and high-variance regression target; under kernel-OT coupling each $f_1$ is paired with a geometrically close $f_0$, so the network regresses smaller, structured residuals. The neural operator (an FNO) is unchanged relative to FFM; only the minibatch coupling, and through it the law of the conditional paths, is modified. Algorithm \ref{alg:mbotcfm} summarizes training; implementation defaults and good practices are collected in Appendix \ref{app:practices}. The reference distribution $\mu_0$ is a Gaussian process (GP) or Gaussian random field \citep{gardner2018gpytorch}, or white noise on the grid where the data are rougher than a smooth GP prior (the base measure is part of the selected configuration, Section \ref{sec:quality}); in practice both measures are replaced by their empirical estimators, with the coupling computed from a kernel-induced cost matrix \citep{Li_2021_CVPR, flamary2021pot}:
\begin{align}\label{eq:emp_est}
    \mu_n = \sum_{i=1}^n \hat{\mu}_i\delta_{x_i}, \qquad \nu_n=\sum_{j=1}^n \hat{\nu}_j\delta_{y_j}, \qquad
    C^{\kappa}_{ij} = \kappa(x_i,x_i) + \kappa(y_j,y_j)-2\kappa(x_i,y_j).
\end{align}

\begin{algorithm}[t]
  \caption{Functional Flow Matching with Kernel Optimal Transport}
  \label{alg:mbotcfm}
  \begin{algorithmic}[1]
    \REQUIRE Prior distribution $\mu_0=\mathcal N(0,C_0)$, data distribution $\mu_1$, kernel function $\kappa$, entropy regularization $\epsilon > 0$, batch size $b$, initial network $v_{\theta}$, $\sigma_{\min}>0$.
    \STATE \textbf{Training loop (repeat until convergence):}
      \STATE \quad Sample batches of size $b$: $f_0 \sim \mu_0$, $f_1 \sim \mu_1$
      \STATE \quad Solve the entropic kernel OT problem in Equation \eqref{eq:hsd_x} with $\kappa, \epsilon$ (cost matrix in Equation \eqref{eq:emp_est}) to get coupling $\pi$.
      \STATE \quad $(f_0, f_1) \sim \pi$
      \STATE \quad $t \sim \mathcal{U}(0, 1),\quad \sigma_t \leftarrow 1-(1-\sigma_{\min})t$
      \STATE \quad $g \leftarrow t f_1 + \sigma_t f_0$ \hfill \textit{// the paired prior sample plays the role of the base noise}
      \STATE \quad $v_t^{f_0,f_1}(g) \leftarrow \big(f_1-(1-\sigma_{\min})\,g\big)/\sigma_t \;=\; f_1-(1-\sigma_{\min})f_0$
      \STATE \quad $\mathcal{L}_{CFM}(\theta) \gets \left\| v_t^{f_0,f_1}(g) - v_\theta(g,t)\right\|_{\mathcal{F}}^2$
      \STATE \quad $\theta \gets \mathrm{Update}\!\left(\theta, \nabla_\theta \LCFM(\theta)\right)$
    \STATE \textbf{Output:} $v_{\theta}$.
  \end{algorithmic}
\end{algorithm}

\textbf{Where the kernel enters.} The kernel enters at exactly one place: the cost $c_\kappa$ used by minibatch Sinkhorn to \emph{pair} prior and data samples. No sample or model input is projected into the RKHS; the FNO acts on the original discretized functions, the paths are constructed in the original space, and generation happens there. The training coupling is the entropic plan induced by $c_\kappa$, computed per minibatch with log-domain Sinkhorn on a median-normalized cost and sampled from the plan (Appendix \ref{app:practices}); the debiasing self-terms of the HSD (Definition \ref{def:hsd}) do not depend on $\pi$ and cannot change the argmin plan, so HSD is our \emph{analysis} object (what Theorems \ref{thm:regularity}--\ref{thm:approx_error_correct} control), not the training objective. Finally, at a fixed batch size $b$ minibatch OT induces the expected minibatch plan $\bar\pi_b=\mathbb{E}[\hat\pi_b]$ \citep{fatras2020minibatchwasserstein, fatras2021minibatchot}, a valid coupling of the true marginals for every $b$, so the CFM consistency argument applies and $b$ only changes which conditional paths are used; as $b\to\infty$, $\bar\pi_b$ converges to the population entropic kernel-OT plan.

\subsection{Theoretical Results for Sobolev and Path Space}\label{sec:theory}

The theory certifies the surrogate coupling; it does not by itself explain why sample quality improves, which Section \ref{sec:exp} establishes empirically. It answers three questions (proofs in Appendix \ref{app:b}): is the kernel-OT objective well-defined and well-conditioned on the function spaces used by kFFM; how far is the RKHS-induced cost from quadratic-cost OT, and which parts of the gap are reducible; and does the continuum formulation survive the Fourier truncation used by the FNO and the Sinkhorn solver?

\begin{restatable}[Well-posedness and uniform boundedness]{theorem}{regularitythm}\label{thm:regularity} Let $(\mathcal X,\|\cdot\|_{\mathcal X})$ be a real Banach space and let
$d_{\mathcal X}(f,g):=\|f-g\|_{\mathcal X}$.
Let $\mu,\nu\in\mathcal P(\mathcal X)$ and let $\kappa:\mathcal X\times\mathcal X\to\mathbb R$ be measurable, symmetric, and positive definite with RKHS $\mathcal H_\kappa$.
Assume additionally that $\kappa$ is bounded:
$B_\kappa:=\sup_{f\in\mathcal X} \kappa(f,f) < \infty$.
Then the kernel cost satisfies $0\le c_\kappa(f,g)\le 4B_\kappa$ for all $f,g\in\mathcal X$, and for every $\varepsilon>0$ the Hilbert Sinkhorn divergence
$S_{\kappa,\varepsilon}(\mu,\nu)$ (Definition~\ref{def:hsd}) is finite for all $\mu,\nu\in\mathcal P(\mathcal X)$, with $|S_{\kappa,\varepsilon}(\mu,\nu)|\le 8B_\kappa$.
\end{restatable}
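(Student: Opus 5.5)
The plan is to first bound the kernel cost pointwise, and then bound each of the three entropic OT terms separately using the product coupling and nonnegativity.

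\textbf{Step 1: the pointwise cost bound.} Because $\kappa$ is positive definite with canonical feature map $\phi$, we have $c_\kappa(f,g)=\|\phi(f)-\phi(g)\|_{\mathcal H_\kappa}^2\ge 0$. For the upper bound, the reproducing property gives $\|\phi(f)\|_{\mathcal H_\kappa}^2=\kappa(f,f)\le B_\kappa$. Cauchy--Schwarz then gives $|\kappa(f,g)|\le\sqrt{\kappa(f,f)\kappa(g,g)}\le B_\kappa$. Hence
\[
c_\kappa(f,g)=\kappa(f,f)+\kappa(g,g)-2\kappa(f,g)\le 4B_\kappa .
\]
Alternatively, $(\|\phi(f)\|+\|\phi(g)\|)^2\le 4B_\kappa$ gives the same bound. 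Nothing about the Banach structure of $\mathcal X$ is used beyond measurability, so $c_\kappa$ is a bounded measurable cost on $\mathcal X\times\mathcal X$.

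\textbf{Step 2: bounds on $\mathsf{OT}_{\kappa,\varepsilon}$.} Fix $\mu,\nu\in\mathcal P(\mathcal X)$. The product coupling $\pi=\mu\otimes\nu\in\Pi(\mu,\nu)$ has $\mathrm{KL}(\pi\|\mu\otimes\nu)=0$. Therefore
\[
\mathsf{OT}_{\kappa,\varepsilon}(\mu,\nu)\le\int c_\kappa\,d(\mu\otimes\nu)\le 4B_\kappa<\infty .
\]
Conversely, every admissible $\pi$ has $\int c_\kappa\,d\pi\ge 0$ and $\mathrm{KL}\ge 0$, so $\mathsf{OT}_{\kappa,\varepsilon}(\mu,\nu)\ge 0$. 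The integrand is bounded and measurable, so the integral is well defined for any coupling, and the infimum is over a nonempty set. This gives $0\le \mathsf{OT}_{\kappa,\varepsilon}(\cdot,\cdot)\le 4B_\kappa$ for every pair of measures, including the self-terms $(\mu,\mu)$ and $(\nu,\nu)$. Existence of a minimizer is not needed for finiteness, so no tightness or compactness argument is required.

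\textbf{Step 3: bounding the debiased divergence.} By definition,
\[
S_{\kappa,\varepsilon}=\mathsf{OT}_{\kappa,\varepsilon}(\mu,\nu)-\tfrac12\mathsf{OT}_{\kappa,\varepsilon}(\mu,\mu)-\tfrac12\mathsf{OT}_{\kappa,\varepsilon}(\nu,\nu).
\]
Each term lies in $[0,4B_\kappa]$, so the triangle inequality gives
\[
|S_{\kappa,\varepsilon}|\le 4B_\kappa+2B_\kappa+2B_\kappa=8B_\kappa .
\]
A sharper bound of $4B_\kappa$ is actually available, since $S$ lies in $[-4B_\kappa,4B_\kappa]$, but the stated bound suffices. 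Finally, I would invoke Proposition~\ref{prop:equiv_hsd_rd} to identify $S_{\kappa,\varepsilon}(\mu,\nu)$ with $S_\varepsilon(\phi_\#\mu,\phi_\#\nu)$ from Definition~\ref{def:hsd}.

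The only delicate point is whether that equivalence holds when $\mathcal X$ is a general Banach space rather than a subset of $\mathbb R^d$. Measurability of $\phi$ into the separable part of $\mathcal H_\kappa$ and the correspondence of couplings under pushforward must be checked, including the fact that the KL divergence is preserved. I expect this to be the main obstacle. I would handle it by working directly with the $\mathcal X$-side formulation in Equation~\eqref{eq:hsd_x}, which is how the statement is phrased. Every bound above is proved there without reference to $\mathcal H_\kappa$-valued measures.
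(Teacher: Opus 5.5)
Your proposal is correct and matches the paper's proof: the reproducing property gives $c_\kappa\le 4B_\kappa$ (your alternative $(\|\phi(f)\|+\|\phi(g)\|)^2$ bound plays the role of the paper's $2\|\phi(f)\|^2+2\|\phi(g)\|^2$ step), the product coupling with zero KL and nonnegativity place each entropic term in $[0,4B_\kappa]$, and the triangle inequality gives $8B_\kappa$. Like the paper, you work directly with the $\mathcal X$-side formulation, and your side remark that these ranges already give $|S_{\kappa,\varepsilon}|\le 4B_\kappa$ is also correct.
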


Finiteness per se is not the binding constraint: under a second-moment assumption on $\nu$ the $L^2$ cost is already integrable, and for the Gaussian prior $\mathbb{E}\|f_0\|^2=\operatorname{tr}(C_0)<\infty$. Theorem \ref{thm:regularity} delivers more. (a) \emph{Uniform} boundedness with no assumption on the measures, which conditions the entropic solver: the Gibbs kernel $e^{-c_\kappa/\varepsilon}$ stays bounded away from $0$ and $1$ across batches, so Sinkhorn converges independently of outlier pairs, unlike unbounded $L^2$ costs on heavy-tailed batches (Section \ref{sec:efficiency} quantifies the resulting speed-up). (b) Generality: the statement holds on Banach spaces without Hilbert structure, which licenses the signature kernel on path space, our strongest configuration. (c) Geometry injection: the kernel is the interface through which domain structure enters the coupling. For Sobolev data in $H^{k}(\Omega)$ we use $\kappa_{\mathrm{RBF}}^{(k)}(x,y) := \exp(-\|x-y\|^2_{H^k}/2\sigma^2)$, which is bounded and positive definite \citep{Christmann_kernel_2010, ziegel2022characteristickernelshilbertspaces}; for paths we use the signature kernel (Definition \ref{def:sig_ker}), universal and characteristic on compact sets \citep{cass2024lecturenotesroughpaths}, bounded there by continuity, and used for two-sample tests on path data \citep{chevyrev2022signaturemoments}, applied to time-augmented paths (Appendix \ref{app:a}, Definition \ref{def:time_augmentation}) to avoid its tree-like equivalence.

The entropic OT functional transports with the cost induced by the feature map rather than the original metric cost. On compact subsets of the ambient space ($B_R^{H^\alpha}$ is compact in $H^k$ for $\alpha>k$; compact sets of paths for the signature kernel, with truncation discussed in Appendix \ref{app:b}), the next theorem separates the resulting kernel-cost mismatch from entropic and covering effects.

\begin{restatable}[Error decomposition]{theorem}{approximationthm}
\label{thm:approx_error_correct}
Let $(\mathcal{X},d)$ be a compact metric space and let $\mu,\nu\in\mathcal P(\mathcal X)$. Define the target quadratic cost $c(x,y)=d(x,y)^2$. Let $\kappa:\mathcal X\times\mathcal X\to\mathbb R$ be a bounded positive-definite kernel with RKHS-induced cost
$c_\kappa(x,y):=\kappa(x,x)+\kappa(y,y)-2\kappa(x,y)$,
and let
$\Delta_\kappa:=\sup_{x,y\in\mathcal X}|c_\kappa(x,y)-c(x,y)|$.
Let $\mathsf{OT}_{\kappa,\varepsilon}$ denote the entropic OT functional in Equation \eqref{eq:hsd_x} with cost $c_\kappa$, and let $M:=\operatorname{diam}(\mathcal X)<\infty$. Then for every $\varepsilon>0$ and every $\delta\in(0,M]$:
\begin{align}\label{eq:bound_abstract}
\left|\mathsf{OT}_{\kappa,\varepsilon}(\mu,\nu)-\mathcal W(\mu,\nu)\right|
\leq
\Delta_\kappa +
2\varepsilon\log \mathcal N(\mathcal{X},\delta;d)
+
8M\delta,
\end{align}
where $\mathcal N(\mathcal{X},\delta;d)$ is the $\delta$-covering number of $\mathcal{X}$ in the metric $d$.
\end{restatable}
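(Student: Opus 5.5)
The plan is a triangle inequality through the intermediate quantity $\mathsf{OT}_{c,\varepsilon}(\mu,\nu)$, the entropic OT functional with the \emph{true} quadratic cost $c=d^2$:
\[
\bigl|\mathsf{OT}_{\kappa,\varepsilon}-\mathcal W\bigr|\le \bigl|\mathsf{OT}_{\kappa,\varepsilon}-\mathsf{OT}_{c,\varepsilon}\bigr|+\bigl|\mathsf{OT}_{c,\varepsilon}-\mathcal W\bigr|.
\]
The first term is the kernel-mismatch part and should be at most $\Delta_\kappa$. The second is the entropic bias, which should be controlled by $2\varepsilon\log\mathcal N(\mathcal X,\delta;d)+8M\delta$.

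\textbf{Kernel mismatch.} Both functionals are infima over the same feasible set $\Pi(\mu,\nu)$ and share the same KL term. For every $\pi$, the objectives therefore differ by $\int (c_\kappa-c)\,d\pi$, whose absolute value is at most $\Delta_\kappa$. The infima inherit this bound. Finiteness of both sides is guaranteed: $c_\kappa\le 4B_\kappa$ by Theorem~\ref{thm:regularity}, and $c\le M^2$ by compactness.

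\textbf{Entropic bias.} The lower bound is immediate: KL is nonnegative, so $\mathsf{OT}_{c,\varepsilon}\ge\mathcal W$. For the upper bound I would construct a block approximation of an optimal plan $\pi^*$, which exists by compactness and continuity of $c$.
\begin{enumerate}
\item Take a $\delta$-cover $x_1,\dots,x_N$ with $N=\mathcal N(\mathcal X,\delta;d)$. Form the associated measurable Voronoi-type partition $A_1,\dots,A_N$, with $\operatorname{diam}A_i\le 2\delta$.
\item Define the block plan $\pi_\delta=\sum_{i,j}\pi^*(A_i\times A_j)\,\dfrac{\mu|_{A_i}\otimes\nu|_{A_j}}{\mu(A_i)\nu(A_j)}$, skipping null cells. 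This has marginals $\mu$ and $\nu$.
\item Its density with respect to $\mu\otimes\nu$ on $A_i\times A_j$ is $\pi^*(A_i\times A_j)/(\mu(A_i)\nu(A_j))$. Hence $\mathrm{KL}(\pi_\delta\|\mu\otimes\nu)$ equals the discrete mutual information of the cell-index pair $(I,J)$ under $\pi^*$. That is at most $\min(H(I),H(J))\le\log N$, and in any case at most $H(I)+H(J)\le 2\log N$.
\item For the cost, a point of $A_i\times A_j$ is compared with a point of the same cell pair under $\pi^*$. Each coordinate moves at most $2\delta$, so $d$ changes by at most $4\delta$. Since $|a^2-b^2|\le|a-b|(a+b)\le 4\delta\cdot 2M$, the cost changes by at most $8M\delta$.
\end{enumerate}
Because $\pi^*$ and $\pi_\delta$ assign the same mass to every cell product $A_i\times A_j$, we get $\int c\,d\pi_\delta\le\int c\,d\pi^*+8M\delta$. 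Therefore
\[
\mathsf{OT}_{c,\varepsilon}\le\mathcal W+8M\delta+2\varepsilon\log N,
\]
which is the claimed bound, with a slack factor of $2$ in the entropy term.

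\textbf{Main obstacle.} The delicate step is the cost comparison inside the block plan. The clean argument compares each cell pair with its representative value $c(x_i,x_j)$: the deviation $|c(x,y)-c(x_i,x_j)|\le 8M\delta$ holds pointwise. I would then integrate this over both $\pi^*$ and $\pi_\delta$, which agree on cell masses, giving $\bigl|\int c\,d\pi_\delta-\int c\,d\pi^*\bigr|\le 2\cdot 4M\delta\cdot 2$-type constants. I would then track constants carefully, using $\operatorname{diam}A_i\le2\delta$ and $d\le M$, so that they come out as $8M\delta$ rather than $16M\delta$. If they do not, the fix is to bound via the representative: $|d(x,y)-d(x_i,x_j)|\le 2\delta$ when using radius-$\delta$ balls around the centers, so each integral deviates by at most $2\delta\cdot 2M=4M\delta$, and the two together give $8M\delta$.
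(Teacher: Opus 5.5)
Your proposal is correct and is essentially the paper's proof: the paper also takes an optimal plan for $\mathcal W$ and replaces it by the block plan $\sum_{i,j}p_{ij}\,\mu_i\otimes\nu_j$ on a $\delta$-partition. It reduces the KL term to the discrete $\mathrm{KL}(p\|a\otimes b)\le 2\log\mathcal N$ and gets $8M\delta$ by comparing both plans to the representative cost $d(x_i,x_j)^2$ (your ``fix''); the only structural difference is that the paper swaps in $\Delta_\kappa$ directly inside its lower and upper bounds instead of passing through the intermediate $\mathsf{OT}_{c,\varepsilon}$. Your step-4 oscillation argument already gives $8M\delta$ with no further constant-tracking, and your mutual-information bound $\mathrm{KL}(p\|a\otimes b)=I(I;J)\le\min(H(I),H(J))\le\log\mathcal N$ is valid, so your argument actually sharpens the entropic term to $\varepsilon\log\mathcal N(\mathcal X,\delta;d)$.
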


Theorem \ref{thm:approx_error_correct} is an error decomposition, not an approximation guarantee: the entropic and covering terms vanish as $\varepsilon,\delta\to0$, but $\Delta_\kappa$ is \emph{irreducible} and does not shrink with more samples or finer grids. For the RBF kernel, $c_{\mathrm{RBF}}=2(1-e^{-\|f-g\|^2/2\sigma^2})$ is a rescaled $L^2$ cost up to relative error $O((D/\sigma)^2)$, with $D$ the batch diameter; with median-normalized costs $\Delta_\kappa$ is small at large bandwidth, and on smooth Navier--Stokes fields the induced plans agree across costs (total variation $\le0.02$; Appendix \ref{app:mismatch}). For the signature kernel $\Delta_\kappa$ is large by design: the theorem then controls the deviation from Wasserstein with the \emph{chosen} cost, not proximity to $L^2$ OT, and the pair rankings differ measurably from $L^2$ (rank correlation $0.33$--$0.37$ on AEMET and gene expression), which is where the largest gains occur. In low-regularity regimes the bound degrades on two fronts: $\Delta_\kappa$ grows because a smooth kernel cost saturates on rough functions, and the covering number blows up as the effective smoothness approaches the embedding threshold, making the entropic term large for any practical $\varepsilon$. The benchmarks of Section \ref{sec:quality} sit in the smooth regime where the bound is informative; Section \ref{sec:turbulent} stress-tests the turbulent regime empirically.

Theorems \ref{thm:regularity} and \ref{thm:approx_error_correct} are stated at the continuum level, while the model operates on a finite-rank spectral representation: PDE snapshots are represented in a Fourier subspace of dimension proportional to $N$, which both the FNO and the Sinkhorn solver consume. Our final theorem quantifies the resulting gap at a rate governed by Sobolev regularity; the path-space analogue for signature truncation is discussed in Appendix \ref{app:b}.

\begin{restatable}[Discretization invariance]{theorem}{discretizationthm}\label{thm:discretization}
Let $\Omega\subset\mathbb R^d$ be a bounded periodic domain and let $\alpha>k\ge 0$. Set $\mathcal X:=B_R^{H^\alpha}:=\{f\in H^\alpha(\Omega):\|f\|_{H^\alpha}\le R\}$, viewed as a compact subset of $H^k(\Omega)$, and let $\mu,\nu\in\mathcal P(\mathcal X)$. For the Fourier basis $\{e_n\}_{n\in\mathbb Z^d}$, let $V_N:=\mathrm{span}\{e_n:|n|^2\le N^{2/d}\}$ and let $P_N:H^\alpha(\Omega)\to V_N$ denote the orthogonal projection; hence $\dim V_N\asymp N$. Write $\mu_N:=(P_N)_\#\mu$ and $\nu_N:=(P_N)_\#\nu$. Define the Sobolev-RBF kernel $\kappa_{\mathrm{RBF}}^{(k)}(f,g):=\exp(-\|f-g\|^2_{H^k}/2\sigma^2)$ on $H^k$, and write $\mathsf{KOT}^{(k)}_{\varepsilon}:=\mathsf{OT}_{\kappa_{\mathrm{RBF}}^{(k)},\varepsilon}$ for the corresponding entropic kernel-OT functional from Theorem \ref{thm:approx_error_correct}. Then for every $\varepsilon>0$,
\[
\bigl|\mathsf{KOT}^{(k)}_{\varepsilon}(\mu,\nu) - \mathsf{KOT}^{(k)}_{\varepsilon}(\mu_N,\nu_N)\bigr|
\;\le\;
\frac{4R^2}{\sigma^2}\, N^{-2(\alpha-k)/d}.
\]
Equivalently, in terms of grid spacing $h=N^{-1/d}$, the discretization error decays as $O(h^{2(\alpha-k)})$.
\end{restatable}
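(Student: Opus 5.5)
The plan is to reduce the comparison of the two entropic problems to a comparison of \emph{costs} on the same pair of marginals $\mu,\nu$. Then a sup-norm bound on the cost difference, together with the Sobolev tail estimate for $I-P_N$, gives the rate.

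\textbf{Step 1 (lifting the projected problem).} Write $c(f,g):=c_{\kappa^{(k)}_{\mathrm{RBF}}}(f,g)=2\bigl(1-e^{-\|f-g\|_{H^k}^2/2\sigma^2}\bigr)$ and $c_N(f,g):=c(P_Nf,P_Ng)$. I claim
\[
\mathsf{KOT}^{(k)}_{\varepsilon}(\mu_N,\nu_N)=\inf_{\pi\in\Pi(\mu,\nu)}\Bigl[\int c_N\,d\pi+\varepsilon\,\mathrm{KL}(\pi\,\|\,\mu\otimes\nu)\Bigr].
\]
The argument has two directions.

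For ``$\le$'', push any $\pi\in\Pi(\mu,\nu)$ forward by $P_N\times P_N$. This yields a coupling of $(\mu_N,\nu_N)$ with the same transport cost. Its KL divergence relative to $\mu_N\otimes\nu_N=(P_N\times P_N)_\#(\mu\otimes\nu)$ can only decrease, by the data-processing inequality.

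For ``$\ge$'', take $\pi_N\in\Pi(\mu_N,\nu_N)$ with finite KL, so $d\pi_N=w\,d(\mu_N\otimes\nu_N)$. Define the lift $d\pi:=w(P_Nf,P_Ng)\,d(\mu\otimes\nu)(f,g)$. A change of variables shows that $\pi$ has marginals $\mu,\nu$, that $\int c_N\,d\pi=\int c\,d\pi_N$, and that $\mathrm{KL}(\pi\,\|\,\mu\otimes\nu)=\mathrm{KL}(\pi_N\,\|\,\mu_N\otimes\nu_N)$. The marginal check uses that $\int w(u,v)\,d\nu_N(v)=1$ for $\mu_N$-a.e.\ $u$.

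\textbf{Step 2 (cost stability).} The two entropic functionals now share the feasible set $\Pi(\mu,\nu)$ and the same entropic term, so
\[
\bigl|\mathsf{KOT}^{(k)}_{\varepsilon}(\mu,\nu)-\mathsf{KOT}^{(k)}_{\varepsilon}(\mu_N,\nu_N)\bigr|\le\sup_{f,g\in\mathcal X}|c(f,g)-c_N(f,g)|.
\]
This holds because an inf of functionals differing by at most $\eta$ pointwise differs by at most $\eta$. Both infima are finite since $c\le 4$ by Theorem~\ref{thm:regularity} (here $B_\kappa=1$).

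\textbf{Step 3 (tail estimate).} Set $h=f-g$. Since $\phi(a)=2(1-e^{-a/2\sigma^2})$ is $1/\sigma^2$-Lipschitz on $a\ge0$, we get $|c-c_N|\le\sigma^{-2}\bigl(\|h\|_{H^k}^2-\|P_Nh\|_{H^k}^2\bigr)$. Because $P_N$ is a Fourier truncation, it is orthogonal simultaneously in every $H^s$, so this difference equals $\|(I-P_N)h\|_{H^k}^2=\sum_{|n|^2>N^{2/d}}(1+|n|^2)^k|\hat h_n|^2$.

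On the tail, $(1+|n|^2)^{k}\le(1+|n|^2)^{\alpha}\,N^{-2(\alpha-k)/d}$. Hence $\|(I-P_N)h\|_{H^k}^2\le N^{-2(\alpha-k)/d}\|h\|_{H^\alpha}^2\le 4R^2N^{-2(\alpha-k)/d}$, using $\|h\|_{H^\alpha}\le 2R$. Combining Steps 2 and 3 gives the stated bound, and $h=N^{-1/d}$ gives the $O(h^{2(\alpha-k)})$ form.

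\textbf{Main obstacle.} The delicate point is Step 1, specifically the measure-theoretic lift. One must ensure that the lifted density is measurable and that the KL divergences match exactly, not merely satisfy an inequality. Matching requires that $\mu_N\otimes\nu_N$ is exactly the pushforward of $\mu\otimes\nu$, which holds because $P_N\times P_N$ acts coordinatewise.
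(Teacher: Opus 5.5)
Your proof is correct and follows essentially the same route as the paper: pushforward plus data processing gives one inequality, a lift of the projected plan with identical KL gives the other, and the $1/\sigma^2$-Lipschitz bound on the RBF cost together with the Fourier-tail estimate gives the rate. Your density lift $w(P_Nf,P_Ng)\,d(\mu\otimes\nu)$ is the same plan as the paper's disintegration-based gluing but is built more directly, and packaging both directions as an exact identity over $\Pi(\mu,\nu)$ also avoids needing optimizers to exist.
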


Combining Theorems \ref{thm:approx_error_correct} and \ref{thm:discretization} bounds the gap between the computed finite-rank kernel-OT objective and continuum quadratic OT in the $H^k$ metric by four interpretable terms: kernel-cost mismatch, entropic regularization, covering, and discretization (Corollary \ref{cor:composed} in Appendix \ref{app:b_composed}). Only the discretization term vanishes with projection rank $N$, at rate $N^{-2(\alpha-k)/d}$ driven by the regularity gap $\alpha-k$, the formal counterpart of the empirical resolution-invariance in Section \ref{sec:exp}; the mismatch term depends only on kernel design (zero for the linear kernel, finite for RBF and controlled by data scale and bandwidth $\sigma$). The theorem relies on compact Sobolev embedding \citep{Evans2010}; our smooth PDE benchmarks sit in that regime \citep{CamliyurtKukavicaVicol2020_AnalyticityBoundary}.

\section{Experiments}\label{sec:exp}

\textbf{Datasets.} We use two classes of data. \emph{Sequence datasets}: AEMET \citep{FebreroBandeOviedoDeLaFuente2012fdausc}, gene expression \citep{Orlando2008GlobalControl}, and the economics dataset \citep{BoltVanZanden2020MaddisonStyle2020Update} from \citet{kerrigan2023functionalflowmatching}, plus financial time series from the Heston stochastic volatility model \citep{Heston1993ClosedFormSV}, treated as realizations of measures on path space $C_p(\mathbb{R})$. \emph{PDE snapshots}: the 1D Korteweg--de Vries (KdV) equation, the incompressible 2D Navier--Stokes equation \citep{Li2022LearningChaoticDynamics}, and their stochastic counterparts \citep{salvi2022neuralstochasticpdesresolutioninvariant}, treated as measures on Sobolev space $H^k(\Omega)$; Section \ref{sec:turbulent} adds a turbulent 2D Navier--Stokes benchmark.

\textbf{Baselines.} Denoising Diffusion Operator (DDO) with NCSN noise scale \citep{Lim2023aScoreBasedFunctionSpace}, functional DDPM \citep{Kerrigan2023DiffusionInfiniteDimensions}, Generative Adversarial Neural Operator (GANO) \citep{Rahman2022GANO}, and the original FFM \citep{kerrigan2023functionalflowmatching}. We also include a direct finite-dimensional OT baseline, CFM-OT($L^2$) \citep{tong2024improvinggeneralizingflowbasedgenerative}, which uses the same minibatch Sinkhorn solver and FNO backbone as our method but applies entropic OT to flattened grid vectors with the unbounded $L^2$ cost and the white-noise base of finite-dimensional flow matching; kFFM's Euclidean--RBF configuration instead uses the bounded RBF cost on Euclidean distances. kFFM variants are named by cost, kFFM-Sig (signature), kFFM-Sob (Sobolev-RBF), kFFM-RBF (Euclidean--RBF), kFFM-Euc (raw $L^2$), and tagged by base measure, gp (GP prior) or wn (white noise). Datasets, architectures, and training budgets, identical across methods, are detailed in Appendix \ref{app:c}.

\textbf{Metrics.} The maximum mean discrepancy with an RBF kernel (MMD-RBF) is the primary population-level metric, computed with the unbiased estimator (so small negative values occur). To avoid judging an RBF-type coupling by an RBF-MMD alone, we also report non-kernel metrics (sliced Wasserstein distance and marginal $W_1$), the pointwise diagnostics of \citet{kerrigan2023functionalflowmatching} (autocorrelation error on sequences, log-spectral distance on PDEs, mean/variance summaries in Appendix \ref{app:c}), and, on Navier--Stokes, physics diagnostics that capture non-Gaussian structure: enstrophy-distribution $W_1$, vorticity-PDF $W_1$, skewness and excess-kurtosis errors of the pooled vorticity, and log-spectral error. Results aggregate 10 seeds (20 where marked); all methods share seeds, so tests are paired by seed.

\subsection{Distributional Quality}\label{sec:quality}

Table \ref{tab:kernel_vs_l2_main} reports MMD-RBF against all baseline families. The ``kFFM (selected)'' column reports, per dataset, the configuration (OT cost and Gaussian base measure) selected on a held-out validation split by sliced Wasserstein distance, a non-kernel criterion (Appendix \ref{app:selection}); pointwise diagnostics by kernel variant appear in Appendix \ref{app:detailed}. kFFM is the strongest method on every row, and the selected configurations follow a practitioner-usable pattern: the signature kernel with a GP base on path-space datasets, where function-space geometry matters most (except short-horizon Heston, where the top configurations are statistically indistinguishable and the rule picks the bounded RBF cost with a white-noise base); the Sobolev-RBF cost with a GP base on the KdV family; and the bounded Euclidean--RBF cost with a white-noise base on the Navier--Stokes family, whose rough fields are poorly matched by a smooth GP prior.

\begin{table}[t]
\small
\setlength{\tabcolsep}{3pt}
\renewcommand{\arraystretch}{0.98}
\centering
\caption{Population-level distributional quality measured by MMD-RBF (lower is better; unbiased estimator, so small negative values occur). ``kFFM (selected)'' is the per-dataset configuration chosen on a held-out validation split by sliced Wasserstein distance (Appendix~\ref{app:selection}); each cell is tagged (cost/base) with Sig = signature kernel, RBF = Euclidean--RBF kernel, Sob-RBF = Sobolev-RBF kernel, gp = Gaussian-process prior, wn = white noise. The kFFM and FFM columns are the matched-seed runs of Table~\ref{tab:paired_tests} ($n$ as given there); the remaining baselines are averaged over 10 seeds. ``Gain (\%)'' is relative to the strongest non-kFFM baseline in the row and is omitted (--) where that baseline sits at the noise floor of the unbiased estimator (non-positive MMD), where ratios are not meaningful; Table~\ref{tab:paired_tests} reports paired tests instead.}
\label{tab:kernel_vs_l2_main}
\resizebox{\linewidth}{!}{%
\begin{tabular}{lccccccc}
\toprule
Dataset & kFFM (selected) & CFM-OT($L^2$) & FFM & DDPM & DDO/NCSN & GANO & Gain (\%) \\
\midrule
Heston & \textbf{kFFM (RBF/wn)}: $-1.30 \times 10^{-4}$ & $-8.26 \times 10^{-5}$ & $3.10 \times 10^{-4}$ & $1.10 \times 10^{-3}$ & $5.53 \times 10^{-2}$ & $3.92 \times 10^{-2}$ & -- \\
AEMET & \textbf{kFFM (Sig/gp)}: $-5.10 \times 10^{-3}$ & $-4.11 \times 10^{-3}$ & $-3.24 \times 10^{-3}$ & $-3.99 \times 10^{-3}$ & $2.44 \times 10^{-1}$ & $5.24 \times 10^{-1}$ & -- \\
KdV & \textbf{kFFM (Sob-RBF/gp)}: $-2.56 \times 10^{-3}$ & $-1.32 \times 10^{-3}$ & $-1.16 \times 10^{-3}$ & $4.84 \times 10^{-2}$ & $2.37 \times 10^{-1}$ & $4.12 \times 10^{-1}$ & -- \\
Economy & \textbf{kFFM (Sig/gp)}: $2.89 \times 10^{-3}$ & $8.46 \times 10^{-3}$ & $7.30 \times 10^{-3}$ & $9.97 \times 10^{-3}$ & $1.57 \times 10^{-1}$ & $8.98 \times 10^{-1}$ & $\mathbf{60.4}$ \\
Gene Expr. & \textbf{kFFM (Sig/gp)}: $7.11 \times 10^{-3}$ & $2.80 \times 10^{-2}$ & $1.78 \times 10^{-2}$ & $5.32 \times 10^{-2}$ & $3.42 \times 10^{-2}$ & $4.17 \times 10^{-1}$ & $\mathbf{60.1}$ \\
Stoch.\ KdV & \textbf{kFFM (Sob-RBF/gp)}: $1.00 \times 10^{-4}$ & $6.58 \times 10^{-4}$ & $8.80 \times 10^{-4}$ & $3.84 \times 10^{-4}$ & $4.48 \times 10^{-2}$ & $2.39 \times 10^{-2}$ & $\mathbf{74.0}$ \\
Stoch.\ NS & \textbf{kFFM (RBF/wn)}: $2.80 \times 10^{-3}$ & $4.22 \times 10^{-3}$ & $9.68 \times 10^{-2}$ & $1.20 \times 10^{-1}$ & $2.95 \times 10^{-1}$ & $2.52 \times 10^{-1}$ & $\mathbf{33.6}$ \\
Navier--Stokes & \textbf{kFFM (RBF/wn)}: $1.63 \times 10^{-3}$ & $1.78 \times 10^{-3}$ & $1.31 \times 10^{-1}$ & $3.80 \times 10^{-1}$ & $7.35 \times 10^{-1}$ & $3.00 \times 10^{-1}$ & $\mathbf{8.4}$ \\
\bottomrule
\end{tabular}}
\end{table}

\textbf{Uncertainty and paired tests.} Since unbiased MMD values can be tiny or negative, percentage gains are uninformative near the noise floor. Table \ref{tab:paired_tests} therefore reports per-seed mean$\pm$std and seed-paired tests against FFM at matched architecture and training budget: kFFM improves on FFM on every dataset at level $0.05$ under both a paired $t$-test and a Wilcoxon signed-rank test, and wins on every shared seed for the four largest-gap datasets. In the GP-base rows only the coupling changes; in the white-noise rows (Heston, Navier--Stokes family) the base measure changes too, and at that base the effect of the kernel cost is isolated by the comparison against CFM-OT($L^2$) in Table \ref{tab:kernel_vs_l2_main}.

\begin{table}[t]
\small
\setlength{\tabcolsep}{4pt}
\centering
\caption{kFFM (selected) vs.\ FFM with uncertainty, at matched architecture and training budget. MMD-RBF$\times10^{3}$ as mean$\pm$std over $n$ shared seeds (fixed before outcomes); $p$-values from a seed-paired $t$-test and a Wilcoxon signed-rank test. In the gp rows only the coupling changes at a fixed prior; in the wn rows the selected kFFM also uses a white-noise base whereas FFM keeps its GP prior (at that base, the comparison isolating the kernel cost is against CFM-OT($L^2$) in Table~\ref{tab:kernel_vs_l2_main}). Economy averages the population and GDP series within each seed. kFFM wins on every shared seed for the first four rows.}
\label{tab:paired_tests}
\begin{adjustbox}{max width=\linewidth}
\begin{tabular}{lccccc}
\toprule
Dataset & $n$ & FFM & kFFM (selected) & paired $t$ $p$-value & Wilcoxon $p$-value \\
\midrule
Gene Expr.     & 10 & $17.8\pm3.4$   & $\mathbf{7.11\pm2.0}$ (Sig/gp)      & $2.0\times10^{-5}$ & $0.002$ \\
Economy        & 10 & $7.30\pm1.8$   & $\mathbf{2.89\pm0.97}$ (Sig/gp)     & $2.5\times10^{-5}$ & $0.002$ \\
Stoch.\ NS     & 10 & $96.8\pm21$    & $\mathbf{2.80\pm1.3}$ (RBF/wn)      & $1.7\times10^{-7}$ & $0.002$ \\
Navier--Stokes & 10 & $131\pm60$     & $\mathbf{1.63\pm2.6}$ (RBF/wn)      & $8.6\times10^{-5}$ & $0.002$ \\
AEMET          & 20 & $-3.24\pm2.1$  & $\mathbf{-5.10\pm2.6}$ (Sig/gp)     & $0.015$ & $0.014$ \\
Heston         & 20 & $0.31\pm0.80$  & $\mathbf{-0.13\pm0.30}$ (RBF/wn)    & $0.021$ & $0.011$ \\
KdV            & 20 & $-1.16\pm2.1$  & $\mathbf{-2.56\pm1.0}$ (Sob-RBF/gp) & $0.039$ & $0.027$ \\
Stoch.\ KdV    & 20 & $0.88\pm1.1$   & $\mathbf{0.10\pm0.33}$ (Sob-RBF/gp) & $0.037$ & $0.037$ \\
\bottomrule
\end{tabular}
\end{adjustbox}
\end{table}

\textbf{Non-kernel metrics.} The gains survive the non-kernel metrics (Table \ref{tab:nonkernel}): on the same runs, kFFM improves sliced Wasserstein distance and marginal $W_1$ over FFM on every dataset, by $18$--$20\%$ on Gene Expression and Economy, where each comparison is significant when paired by seed, and by $2.9$--$4.1\times$ on Navier--Stokes and Stoch.\ NS; for the signature configurations the coupling and evaluation kernels already differ. The pointwise diagnostics also expose what MMD understates: CFM-OT($L^2$) buys its pointwise-marginal fit with $105\times$ and $66\times$ worse autocorrelation error than kFFM-Sig on Economy and Gene Expression (Table \ref{tab:structure_main}); function-level structure is where the kernel coupling is strongest.

\begin{table}[t]
\small
\setlength{\tabcolsep}{3.5pt}
\centering
\caption{Non-kernel metrics for kFFM (selected configuration in parentheses) vs.\ FFM on the runs of Table~\ref{tab:paired_tests}; lower is better. Sequence rows: mean$\pm$std over 10 shared seeds (Economy averages the population and GDP series within each seed), each comparison significant when paired by seed (Wilcoxon $p=0.002$); Navier--Stokes rows are means. Appendix Table~\ref{tab:nonkernel_full} adds the autocorrelation and log-spectrum errors.}
\label{tab:nonkernel}
\begin{tabular}{lcccc}
\toprule
 & \multicolumn{2}{c}{Sliced-$W$} & \multicolumn{2}{c}{Marginal-$W_1$} \\
\cmidrule(lr){2-3}\cmidrule(lr){4-5}
Dataset & FFM & kFFM & FFM & kFFM \\
\midrule
Gene Expr.\ (Sig/gp) & $0.106\pm0.006$ & $\mathbf{0.087\pm0.006}$ & $0.074\pm0.005$ & $\mathbf{0.059\pm0.004}$ \\
Economy (Sig/gp) & $0.0291\pm0.0019$ & $\mathbf{0.0233\pm0.0012}$ & $0.0171\pm0.0016$ & $\mathbf{0.0137\pm0.0009}$ \\
Navier--Stokes (RBF/wn) & $0.315$ & $\mathbf{0.081}$ & $0.239$ & $\mathbf{0.058}$ \\
Stoch.\ NS (RBF/wn) & $1.06$ & $\mathbf{0.365}$ & $0.795$ & $\mathbf{0.249}$ \\
\bottomrule
\end{tabular}
\end{table}

\textbf{Bounded cost or function-space geometry?} Both effects are real and dominate on different data classes. On smooth fields the RBF coupling is a monotone transform of the $L^2$ cost (rank correlation $\approx1$ on training batches; Appendix \ref{app:mismatch}), so it reuses the $L^2$ geometry while compressing outliers, which buys conditioning, $\epsilon$-robustness, and faster Sinkhorn, and matches or improves the unbounded-$L^2$ coupling at the same base measure (Stoch.\ NS: $2.80$ vs.\ $4.22\times10^{-3}$, Table \ref{tab:kernel_vs_l2_main}). On paths, no Euclidean cost, raw or bounded, recovers the signature gains (matched runs, MMD$\times10^{3}$; Gene Expr.: kFFM-Euc $20.0$, kFFM-RBF $20.7$ vs.\ kFFM-Sig $7.1$; Economy: $7.55$/$7.60$ vs.\ $2.9$), and the signature kernel genuinely re-ranks pairs (rank correlation $0.33$--$0.37$ with $L^2$); a synthetic example in Appendix \ref{app:toy} isolates both failure modes of $L^2$ pairing. Qualitative samples and super-resolution results are shown in Appendix \ref{app:c} (Figures \ref{fig:aemet_main} and \ref{fig:ns_main}).

\textbf{Kernel choice and sensitivity.} Re-aggregating the one-at-a-time sweeps of Appendix \ref{app:sensitivity}: results are indistinguishable within seed-to-seed variation for RBF bandwidths $\sigma\in[0.1,2]$ on every dataset, flat across three orders of magnitude of the Sinkhorn regularization ($\epsilon\in[0.001,1]$) for bounded costs (the raw $L^2$ cost degrades by $\approx10\times$ at $\epsilon\ge0.5$ on long volatility paths), and unchanged across the signature hyperparameters (dyadic order $0$--$3$, static bandwidth $0.1$--$5$); over batch sizes $b\in\{64,\dots,512\}$, kFFM tracks FFM across the $8\times$ range (Appendix \ref{app:practices}). The kernel \emph{family} is the only choice that matters, and the rule we recommend is simple and consistent with the theory: match the kernel to the data class (signature for path-like data; Sobolev or Euclidean RBF for smooth fields, where the Euclidean RBF is a reasonable default; RBF or $L^2$ on very long paths, where signature-kernel memory grows with path length), set the bandwidth by the median heuristic, and validate with a non-kernel metric on a held-out split (Appendix \ref{app:selection}). The properties the theorems require (boundedness, characteristicness) hold for every kernel we consider, so the theory constrains the family only weakly and the selection rule does the rest.

\subsection{Turbulent Regime and Physics Diagnostics}\label{sec:turbulent}

The Navier--Stokes benchmark above ($\nu=10^{-3}$) is smooth, the regime where our theory is informative. To test the low-regularity regime where Theorem \ref{thm:approx_error_correct} is weakest, we run the canonical turbulent FNO benchmark \citep{li2021fourierneuraloperatorparametric}: 2D Navier--Stokes at $\nu=10^{-5}$ ($1200$ trajectories, from which we take the spun-up snapshots at $t\ge10$; $\mathrm{Re}\approx2000$; seven methods, five seeds, identical FNO backbone and training budget). Table \ref{tab:turbulent_ns} reports MMD, sliced-W, and the physics diagnostics; Figure \ref{fig:turbulent_stats} shows the pooled vorticity PDF and energy spectrum, and vorticity samples are in Appendix \ref{app:turbulent}. (i) kFFM (selected) attains the best value on five of seven metrics and ties the best on MMD; held-out sliced-W rejects the GP-base arms ($^\dagger$ rows; $0.19$ vs.\ $\ge0.60$), and among the tied white-noise arms the rule keeps the Euclidean-RBF configuration selected on standard NS; CFM-OT($L^2$) at the same base overlaps kFFM within std, with means favoring kFFM on five of seven metrics. (ii) OT-coupled flow matching is far more robust than the score-based and diffusion alternatives: $23$--$32\times$ better vorticity-PDF $W_1$, $5.7$--$6.9\times$ better enstrophy $W_1$, and $17$--$43\times$ better vorticity-kurtosis error (Figure \ref{fig:turbulent_stats}). (iii) Against FFM run with its GP prior, kFFM improves every metric except kurtosis (a tie), with seed-paired significance (MMD $73\times$ lower, $p=3\times10^{-4}$; sliced-W $3.1\times$, $p=1.3\times10^{-6}$; log-spectrum $3.2\times$, $p=0.005$; vorticity-PDF $2.9\times$, $p=0.007$; skewness $18\times$, $p=0.016$). This gain combines the base-measure change and the coupling: the three GP-base arms are indistinguishable, so in this regime the white-noise base does most of the work, and at that base the kernel and raw-$L^2$ costs tie, as in (i). All methods, ours included, still leave headroom on sliced-W in this regime (Section \ref{sec:conclusion}).

\begin{table}[t]
\footnotesize
\setlength{\tabcolsep}{3.5pt}
\centering
\caption{Turbulent 2D Navier--Stokes ($\nu=10^{-5}$, $1200$ trajectories, $\mathrm{Re}\approx2000$): distributional and physics diagnostics, lower is better, mean$\pm$std over 5 seeds with identical FNO backbone and training budget. Rows marked $^\dagger$ use the GP base measure; kFFM and CFM-OT use the white-noise base chosen by the validation rule (Appendix~\ref{app:selection}). Best per column in bold (on MMD, kFFM and CFM-OT tie within std). Unscaled values and full configurations are in Appendix Table~\ref{tab:turbulent_ns_full}.}
\label{tab:turbulent_ns}
\begin{adjustbox}{max width=\linewidth}
\begin{tabular}{lccccccc}
\toprule
Method & MMD & Sliced-W & Vort-PDF $W_1$ & Enstrophy $W_1$ & Skew err. & Kurt err. & log-Spec \\
 & ($\times10^{-3}$) & & ($\times10^{-2}$) & & ($\times10^{-2}$) & ($\times10^{-2}$) & ($\times10^{-2}$) \\
\midrule
kFFM (selected)   & $\mathbf{1.48\pm0.6}$ & $\mathbf{0.193\pm0.01}$ & $\mathbf{3.8\pm1}$ & $\mathbf{0.155\pm0.02}$ & $\mathbf{0.27\pm0.3}$ & $7.8\pm1$ & $\mathbf{2.0\pm0.8}$ \\
CFM-OT($L^2$)     & $\mathbf{1.47\pm1.0}$ & $0.194\pm0.01$ & $4.1\pm1$ & $0.157\pm0.01$ & $0.40\pm0.2$ & $7.5\pm1$ & $2.2\pm0.7$ \\
kFFM-Sob$^\dagger$ & $111\pm20$ & $0.612\pm0.05$ & $10.4\pm3$ & $0.194\pm0.05$ & $4.6\pm2$ & $\mathbf{6.8\pm5}$ & $6.5\pm1$ \\
kFFM-Euc$^\dagger$ & $109\pm20$ & $0.623\pm0.03$ & $10.4\pm4$ & $0.197\pm0.06$ & $5.0\pm3$ & $7.7\pm4$ & $6.7\pm1$ \\
FFM$^\dagger$      & $108\pm20$ & $0.601\pm0.02$ & $10.7\pm4$ & $0.201\pm0.06$ & $4.9\pm3$ & $7.2\pm5$ & $6.5\pm1$ \\
DDO/NCSN          & $569\pm9$  & $1.39\pm0.06$ & $122\pm1$ & $1.07\pm0.02$ & $0.83\pm1$ & $129\pm1$ & $93.7\pm2$ \\
DDPM              & $380\pm20$ & $1.18\pm0.1$ & $84.8\pm3$ & $0.889\pm0.02$ & $7.9\pm5$ & $337\pm200$ & $11.7\pm3$ \\
\bottomrule
\end{tabular}
\end{adjustbox}
\end{table}

\begin{figure}[t]
\centering
\includegraphics[width=\linewidth]{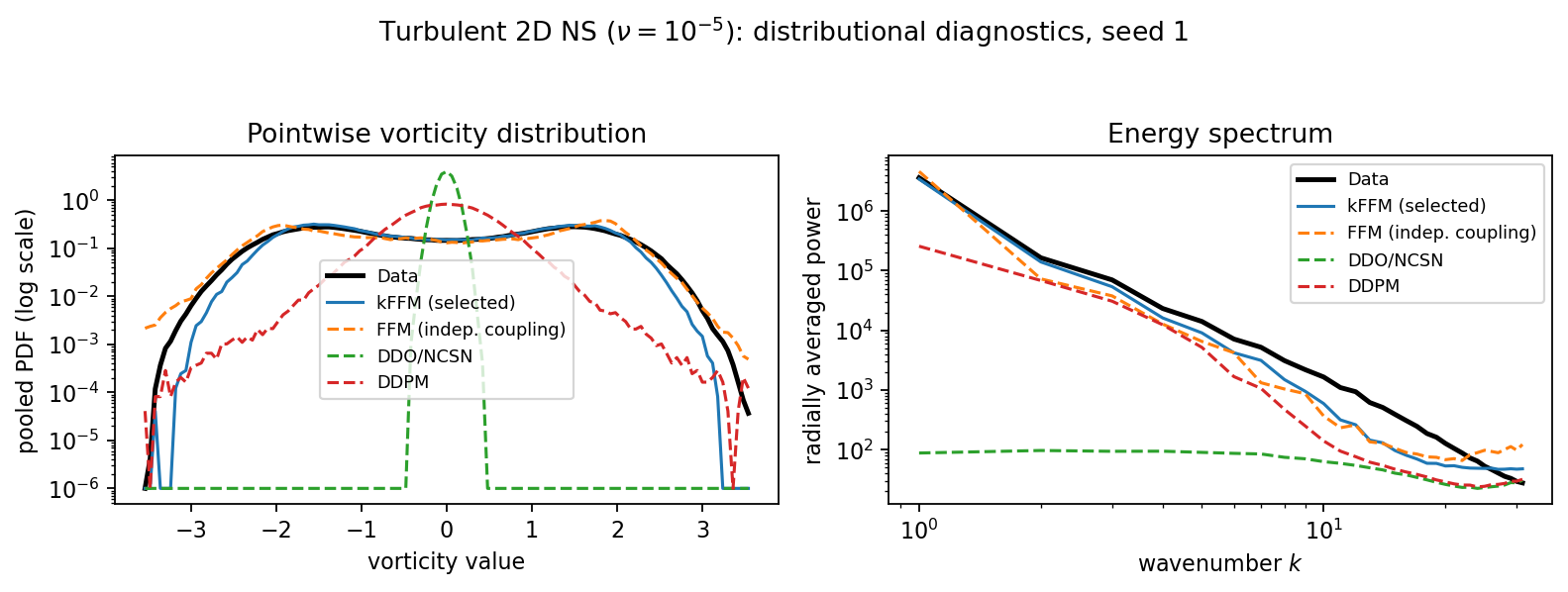}
\caption{Turbulent 2D Navier--Stokes ($\nu=10^{-5}$, seed 1): pooled vorticity PDF (log scale) and radially averaged energy spectrum. The data PDF is bimodal and heavy-tailed; kFFM (selected) tracks it, DDO/NCSN collapses to a narrow near-Gaussian spike, and DDPM is unimodal with the wrong shape. The flow-matching spectra track the data across scales; DDO/NCSN's is nearly flat.}
\label{fig:turbulent_stats}
\end{figure}

\subsection{Cost, Efficiency, and Convergence}\label{sec:efficiency}

\textbf{Compute.} Measured on every benchmark (Appendix \ref{app:compute}), the end-to-end training overhead of the per-minibatch Sinkhorn solve over FFM is $5$--$7\%$ at the largest resolution we train ($128^2$ Navier--Stokes); the $O(b^2)$ coupling cost is independent of model size and resolution, so its relative weight shrinks as the backbone grows. Peak coupling memory is $80$--$151$\,MB ($1.5$--$9\%$ of the FNO's forward-plus-backward peak), and per-batch coupling time is comparable to one FNO step. The bounded RBF cost is about $8\times$ faster than unbounded-$L^2$ Sinkhorn at $b=512$ and trains $15$--$25\%$ faster end-to-end, the conditioning effect of Theorem \ref{thm:regularity}; giving FFM the extra budget as additional epochs does not close the gap.

\textbf{No straightness claim.} We claim no straightness theorem: Appendix \ref{app:straightness} (Proposition \ref{prop:coupling_only}, Corollary \ref{cor:straightness_invariance}) shows that kFFM changes only the endpoint coupling while the affine interpolation template is fixed, and Hilbert-space Gaussian geometry does not inherit the Euclidean straightness picture that motivates finite-dimensional OT-CFM \citep{yun2025gaussianoptimaltransportbreniers}. We therefore make no efficiency claim; Appendix \ref{app:convergence} reports only the training-time convergence of the target metric, where kFFM ends training at a better MMD-RBF than FFM on every dataset.

\section{Related Work}\label{sec:related}

Minibatch OT coupling for flow matching was introduced by \citet{tong2024improvinggeneralizingflowbasedgenerative} and \citet{pooladian2023multisampleflowmatchingstraightening}, with follow-ups refining the transport objective \citep{kornilov2024optimalflowmatchinglearning, yue2025oatfmoptimalaccelerationtransport} and analyses of minibatch plans \citep{fatras2020minibatchwasserstein, fatras2021minibatchot}; all assume $\mathbb{R}^d$ sample spaces. In function space, FFM \citep{kerrigan2023functionalflowmatching} lifts flow matching to Hilbert spaces via Gaussian measures, alongside functional diffusion \citep{Kerrigan2023DiffusionInfiniteDimensions, Lim2023aScoreBasedFunctionSpace} and adversarial neural operators \citep{Rahman2022GANO}. Concurrent function-space flow matching changes the objective, the task, or the prior, but not the coupling: Functional Rectified Flow \citep{zhang2025functionalrectifiedflow} straightens flows by re-training on model-generated endpoint pairs across training rounds, a mechanism complementary to and composable with our within-batch, data-geometry coupling; Operator Flow Matching \citep{lee2025operatorflowmatching} and flow matching with GP priors \citep{kollovieh2025flowmatchinggp} target conditional time-series forecasting with an independent prior-data coupling. Kernel embeddings of measures \citep{Sriperumbudur2010Hilbert, gretton2008kernelmethodtwosampleproblem} underlie MMD and two-sample testing, recently on path space with the signature kernel \citep{chevyrev2022signaturemoments}; Sinkhorn divergences interpolate between OT and MMD \citep{feydy2018interpolatingoptimaltransportmmd}, and the Hilbert Sinkhorn Divergence \citep{Li_2021_CVPR} kernelizes them. To our knowledge, ours is the first work to perform OT endpoint coupling in function-space flow matching; Appendix \ref{app:d} gives an extended discussion, including MMD gradient flows.

\section{Conclusion and Limitations}\label{sec:conclusion}

kFFM replaces the independent endpoint pairing of FFM with a geometry-aware kernel OT coupling while keeping the neural-operator backbone. Theoretically, the kernel-OT surrogate is uniformly bounded and well-posed on Banach spaces, admits an explicit error decomposition against quadratic-cost OT that isolates an irreducible kernel-cost mismatch, and is discretization-invariant at rate $N^{-2(\alpha-k)/d}$. Empirically, kFFM improves distributional matching on time-series and PDE benchmarks with paired-seed significance, and the gains survive non-kernel metrics, physics diagnostics, and a turbulent Navier--Stokes stress test, at a training overhead of $5$--$7\%$ at the largest resolution we train.

\textbf{Limitations.} Several deserve plain statement. First, our theory concerns the surrogate coupling: it certifies well-posedness, an error decomposition, and discretization invariance, but says nothing about straightness or sampling efficiency in function space, on which we make no claim, and it does not by itself explain the sample-quality gains, which remain an empirical finding. Second, the theory is least informative in low-regularity regimes: on the turbulent benchmark the prior-data $L^2$ distances more than double relative to the standard one (median $91$ vs.\ $40$), pushing the RBF cost into saturation, where $\Delta_\kappa$ grows and the covering term of Theorem \ref{thm:approx_error_correct} is large for any practical $\epsilon$; there the empirical results carry the method. Third, turbulent small-scale statistics remain an open problem for function-space generative models. On current evidence the failure sits mainly with the score-based and diffusion baselines (vorticity-kurtosis errors of $1.3$--$3.4$ and enstrophy $W_1\approx1$), but all methods, ours included, leave headroom on sliced-W ($0.19$ for the best method), and in this regime the white-noise base rather than the coupling does most of the work; OT-coupled flow matching is currently the strongest option. Finally, signature-kernel memory grows with path length ($0.63$\,GB at $100$ time points vs.\ $12.5$\,GB at $512$), so very long paths fall back to RBF or $L^2$ costs. Future work includes stronger function-space OT formulations \citep{yun2025gaussianoptimaltransportbreniers, kornilov2024optimalflowmatchinglearning} and base measures and costs designed for rough, turbulent data.

\begin{ack}
Barbora Barancikova is supported by UK Research and Innovation [UKRI Centre for Doctoral Training in AI for Healthcare grant number EP/S023283/1]. Fred Xu is partially supported by NSF grant 2531008 and was supported by Block, Inc.\ during an internship.
\end{ack}

\typeout{MAINTEXTEND=\thepage}
\bibliographystyle{plainnat}
\bibliography{references}

\clearpage
\appendix
\typeout{APPENDIXPAGE=\thepage}

\section{Additional Technical Background}\label{app:a}

This appendix collects the mathematical background on which the paper relies: probability measures on function spaces, reproducing kernel Hilbert spaces and kernel embeddings, path signatures, and optimal transport.

\subsection{Probability Measure in Function Space}

In this section we provide the definitions and necessary theoretical results for probability measure in function space, highlighting the analytical difficulties in general. We give high-level intuition and omit the proofs; for details we refer the reader to the textbooks of \citet{rudin1974functional} and \citet{DaPrato2006InfiniteDimensionalAnalysis}.

\begin{definition}[Hilbert Space]
    A Hilbert space $\mathcal{H}$ is an inner-product space that is complete in the norm induced by its inner product, defined as $\langle \cdot, \cdot \rangle: \mathcal{H}\times \mathcal{H}\rightarrow \mathbb{F}$, where $\mathbb{F} = \mathbb{R}$ for real Hilbert space. 
\end{definition}

Importantly, the Hilbert space's inner product structure gives it a metric space with the norm $||x|| :=\sqrt{\langle x,x\rangle}$ and metric $d(x,y) = ||x-y||, \quad \forall x,y\in \mathcal{H}$. Let $B(x,r)$ be the open balls defined using the metric $d$, then the Borel $\sigma$-algebra of $\mathcal{H}$, denoted $\mathcal{B}(\mathcal{H})$, is generated by the open balls defined in this way:
\begin{align*}
    \mathcal{B}(\mathcal{H}) := \sigma(\{B(x,r):x\in \mathcal{H}, r>0\}).
\end{align*}

Another desirable property of the Hilbert space is eigenfunction spectral decomposition for bounded linear operators:

\begin{theorem}
    Let $T:\mathcal{H}\rightarrow \mathcal{H}$ be a compact, self-adjoint, bounded linear operator. Then there exists an orthonormal set of eigenvectors $\{e_n\}_{n \in I}$ and real eigenvalues $\{\lambda_n\}_{n\in I}$ such that:
    \begin{enumerate}
        \item $Te_n = \lambda_ne_n$
        \item $\lambda_n\rightarrow 0$ if $I$ is infinite.
        \item $\lambda_i \neq 0$ has finite multiplicity.
        \item The closure of span of non-zero eigenvectors produces closure of $Ran(T)$. 
        \item The spectral decomposition of $T$ is:
        \begin{align*}
            Tx = \sum_{i\in I}\lambda_i\langle x,e_i\rangle e_i, \qquad x\in \mathcal{H}
        \end{align*}
    \end{enumerate}
\end{theorem}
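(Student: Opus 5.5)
The plan is the classical variational argument: build the eigenvectors one at a time by maximizing the Rayleigh quotient, then read off properties 1--5 from the construction. Two routine facts come first. For self-adjoint $T$, $\langle Tx,x\rangle\in\mathbb R$, so every eigenvalue is real. Eigenvectors with distinct eigenvalues are orthogonal, since $(\lambda-\mu)\langle x,y\rangle=\langle Tx,y\rangle-\langle x,Ty\rangle=0$.

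\textbf{Key lemma: an extremal eigenvalue exists.} I will show that one of $\pm\|T\|$ is an eigenvalue. First I prove $\|T\|=\sup_{\|x\|=1}|\langle Tx,x\rangle|=:m$ using the polarization identity
\[
4\,\mathrm{Re}\langle Tx,y\rangle=\langle T(x+y),x+y\rangle-\langle T(x-y),x-y\rangle
\]
together with the parallelogram law. This gives $|\langle Tx,y\rangle|\le m$ for unit $x,y$, and hence $\|T\|\le m$; the reverse inequality is Cauchy--Schwarz.

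Next I take unit vectors $x_n$ with $\langle Tx_n,x_n\rangle\to\lambda\in\{\pm\|T\|\}$. Expanding the square gives
\[
\|Tx_n-\lambda x_n\|^2\le \|T\|^2-2\lambda\langle Tx_n,x_n\rangle+\lambda^2\to 0 .
\]
By compactness a subsequence has $Tx_n\to y$. Then $\lambda x_n\to y$, so if $\lambda\neq0$ the $x_n$ converge to $x=y/\lambda$, with $\|x\|=1$ and $Tx=\lambda x$. If $\|T\|=0$ the statement is trivial, since $T=0$ and the set of non-zero eigenvectors is empty. I expect this step, especially the identity $\|T\|=m$, to be the main technical obstacle; everything after it is bookkeeping.

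\textbf{Iteration.} Let $e_1$ be the unit eigenvector just found, with eigenvalue $\lambda_1$, $|\lambda_1|=\|T\|$. Set $\mathcal H_2=\{e_1\}^\perp$. Self-adjointness makes $\mathcal H_2$ invariant under $T$, and $T|_{\mathcal H_2}$ is again compact and self-adjoint. I apply the key lemma repeatedly, producing an orthonormal sequence $e_n$ with $|\lambda_1|\ge|\lambda_2|\ge\cdots$ and $|\lambda_{n+1}|=\|T|_{\{e_1,\dots,e_n\}^\perp}\|$. The process stops if some restriction is $0$; otherwise $I$ is infinite.

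\textbf{Limit and multiplicity (properties 2 and 3).} Suppose $I$ is infinite and $|\lambda_n|\ge\delta>0$ for all $n$. Then for $n\neq m$,
\[
\|Te_n-Te_m\|^2=\lambda_n^2+\lambda_m^2\ge2\delta^2 ,
\]
so $(Te_n)$ has no Cauchy subsequence, contradicting compactness; hence $\lambda_n\to0$. The same computation, applied to an orthonormal basis of a single eigenspace $\ker(T-\lambda)$ with $\lambda\ne0$, shows that such an eigenspace is finite-dimensional. Every non-zero eigenvalue appears in the list: an eigenvector orthogonal to all $e_n$ would have eigenvalue of modulus at most $\|T|_{\{e_1,\dots,e_n\}^\perp}\|=|\lambda_{n+1}|\to0$.

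\textbf{Decomposition and range (properties 4 and 5).} Let $M=\overline{\mathrm{span}}\{e_n\}$. Then $M^\perp$ is $T$-invariant and $\|T|_{M^\perp}\|\le|\lambda_{n+1}|$ for every $n$, so $T|_{M^\perp}=0$. For any $x$, Bessel's inequality gives the convergent expansion
\[
x=\sum_i\langle x,e_i\rangle e_i+x^\perp,\qquad x^\perp\in M^\perp\subset\ker T .
\]
Applying $T$ and using continuity to pass inside the sum yields $Tx=\sum_i\lambda_i\langle x,e_i\rangle e_i$. This formula shows $\mathrm{Ran}(T)\subset M$. Conversely, each $e_i=T(e_i/\lambda_i)$ lies in $\mathrm{Ran}(T)$, so $M\subset\overline{\mathrm{Ran}(T)}$ and therefore $\overline{\mathrm{Ran}(T)}=M$.
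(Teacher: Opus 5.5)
Your proof is correct. It is the classical variational argument, and each step checks out. In particular, the identity $|\lambda_{n+1}|=\|T|_{\{e_1,\dots,e_n\}^\perp}\|$, combined with $\lambda_n\to 0$, forces $T|_{M^\perp}=0$, and that yields both the expansion and the identification of the range.

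The paper does not prove this statement. It is quoted as background in Appendix~\ref{app:a}, where proofs are explicitly omitted and deferred to the textbooks of Rudin and Da Prato, so your argument serves as a self-contained substitute rather than a variant of an in-paper proof.

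A different standard textbook route uses Riesz--Schauder theory: the nonzero spectrum of a compact operator consists of eigenvalues with finite-dimensional eigenspaces, accumulating only at $0$. Combined with $\|T\|=r(T)$ for normal $T$, this gives the result. Compared with that route:
\begin{itemize}
\item Your approach is more elementary and works verbatim over real scalars, which suits the real Hilbert spaces used throughout the paper.
\item The spectral-theoretic route also covers compact normal operators, but needs heavier machinery and typically complex scalars.
\end{itemize}

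One point is worth making explicit if property 4 is read as the closed span of \emph{all} eigenvectors with nonzero eigenvalue, rather than of the chosen $e_n$. Any such $v$ with $Tv=\mu v$, $\mu\neq 0$, satisfies $v=T(v/\mu)\in\mathrm{Ran}(T)\subset M$. Hence the two closed spans coincide and your conclusion $\overline{\mathrm{Ran}(T)}=M$ covers that reading too.
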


In the main text, we adopt a measure-theoretic view of probability, which is crucial for function spaces. This is due to the following result, known as ``no Lebesgue measure in infinite dimensions.'' The statement usually takes the form: If $\mathcal{X}$ is an infinite-dimensional separable normed linear space, then there is no nontrivial, translation-invariant, $\sigma$-additive Borel measure $\mu$ on $\mathcal{X}$ that is finite on every nonempty open ball \citep{yamasaki1985measures}. The implication of this result is that there is no universal definition of \textbf{probability density} as in $\mathbb{R}^d$, and only the more general Radon-Nikodym derivative can be defined between two measures that are absolutely continuous.

\begin{definition}[Singular and Absolute Continuity]
    For a measurable space $(\Omega,\mathcal{F})$, two measures $\mu, \nu$ are mutually singular, written $\mu \perp \nu$, if there exists $A\in \mathcal{F}$, such that $\mu(A) =0$ and $\nu(\Omega \setminus A)=0$. $\mu$ is said to be absolutely continuous w.r.t $\nu$, denoted $\mu \ll \nu$, if $\nu(A)=0 \Longrightarrow\mu(A)=0$. If $\mu \ll  \nu$ and $\nu \ll \mu$, then we say that $\mu \sim \nu$, or that they are equivalent.
\end{definition}

\begin{definition}[Radon-Nikodym Derivative] If $\mu \ll  \nu$ and $\mu,\nu$ are $\sigma-$finite, then there exists a measurable function $f: \Omega\rightarrow [0,\infty]$ such that for all $A\in \mathcal{F}$:
\begin{align*}
    \mu(A) = \int_A fd\nu,
\end{align*}

where $f$ is called the Radon-Nikodym derivative of $\mu$
 w.r.t $\nu$, denoted $\frac{d\mu}{d\nu}$. When the measure space is $(\mathbb{R}^d, \mathcal{B}(\mathbb{R}^d))$ and $\nu$ is the Lebesgue measure, $f$ is also called the probability density function of distribution $\mu$.    
\end{definition}

Though there is no infinite-dimensional Lebesgue measure, and hence no universal way to define density, the most feasible and well-understood way to have a Lebesgue-like measure is to define Gaussian measures: 

\begin{definition}[Gaussian Measure]
    Let $a\in \mathcal{H}$ and $Q \in L_1^+(\mathcal{H})$, the space of positive, symmetric, and trace class linear operators, then the Gaussian measure $\mu:=N_{a,Q}$ on $(\mathcal{H},\mathcal{B}(\mathcal
    H))$ with mean $a$ and covariance operator $Q$ is defined by the Fourier transform: 
    \begin{align*}
        \widehat{N_{a,Q}}(h) = \exp\{i\langle a,h\rangle -\frac{1}{2} \langle Qh, h\rangle\}
    \end{align*}
\end{definition}

It can be shown that for every choice of $a, Q$, there exists a unique Gaussian measure. In \citet{DaPrato2006InfiniteDimensionalAnalysis}, Chapter 1, the interesting argument for constructing this measure is by the theory of infinite product measure and the isomorphism between the sequence space $l^2$ and the Hilbert space $\mathcal{H}$, where the projection map defined using the indices of the spectral decomposition is mapped with Gaussian measures in $\mathbb{R}$ (also known as 1D Gaussian distribution). 

Still, for one Gaussian measure to be absolutely continuous with respect to another, strong conditions have to be imposed on their means and covariance operators: the Feldman--H\'ajek theorem and, for a shifted mean, the Cameron--Martin formula.

\begin{definition}[Cameron--Martin Space]
    Let $\mathcal{H}$ be a Hilbert space and $\mu = N_{0,Q}$ a centered Gaussian measure, then the Cameron--Martin space induced by $N_{0,Q}$ is:
    \begin{align*}
        H_\mu := \text{Ran}(Q^{1/2}), 
    \end{align*}
    with inner product $\langle h_1,h_2\rangle := \langle Q^{-1/2}h_1, Q^{-1/2}h_2\rangle_{\mathcal{H}}$. 
\end{definition}

The general dichotomy for Gaussian measures is the Feldman--H\'ajek theorem:

\begin{theorem}[Feldman--H\'ajek]
    Let $\mathcal{H}$ be a separable Hilbert space and $N_{a,P}, N_{b,Q}$ be two Gaussian measures, then either $N_{a,P} \sim N_{b,Q}$ or $N_{a,P}\perp N_{b,Q}$, and $N_{a,P} \sim N_{b,Q}$ iff:
    \begin{enumerate}
        \item $P^{1/2}(\mathcal{H}) = Q^{1/2}(\mathcal{H})=:\mathcal{H}_0$.
        \item $a-b\in \mathcal{H}_0$.
        \item $(P^{-1/2}Q^{1/2})(P^{-1/2}Q^{1/2})^*-I$ is Hilbert-Schmidt on $\mathcal{H}_0$
    \end{enumerate}
\end{theorem}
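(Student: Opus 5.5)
The statement is the classical Feldman--H\'ajek theorem, and I would prove it through Kakutani's dichotomy for infinite product measures. Kakutani's theorem says: if $\mu_n\sim\nu_n$ for each $n$, then $\bigotimes_n\mu_n$ and $\bigotimes_n\nu_n$ are either equivalent or mutually singular. They are equivalent exactly when the product of Hellinger affinities $\prod_n\int\sqrt{d\mu_n\,d\nu_n}$ is strictly positive. The plan is to reduce both Gaussian measures to products of one-dimensional Gaussians in a common coordinate system, following the product-measure construction of \citet{DaPrato2006InfiniteDimensionalAnalysis} recalled above. Then I compute the one-dimensional affinities explicitly.

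\emph{Step 1 (mean shift).} First I treat $N_{a,Q}$ versus $N_{b,Q}$. Write the spectral decomposition $Q e_n=\lambda_n e_n$ from the spectral theorem above. Both measures are products of $N(\langle a,e_n\rangle,\lambda_n)$ and $N(\langle b,e_n\rangle,\lambda_n)$. The one-dimensional affinity is $\exp\bigl(-\langle a-b,e_n\rangle^2/(8\lambda_n)\bigr)$. The infinite product is therefore positive iff $\sum_n\langle a-b,e_n\rangle^2/\lambda_n<\infty$, which holds iff $a-b\in Q^{1/2}(\mathcal H)$. Kakutani then gives equivalence in that case and singularity otherwise. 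This is the Cameron--Martin dichotomy, and it yields condition 2 once condition 1 is known.

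\emph{Step 2 (covariances).} Next I treat $N_{0,P}$ versus $N_{0,Q}$. Assuming condition 1, the closed graph theorem makes $T=P^{-1/2}Q^{1/2}$ a bounded invertible operator on $\mathcal H_0$. If $S:=TT^*-I$ is Hilbert--Schmidt, I diagonalize it with eigenvalues $s_n$, where $\sum_n s_n^2<\infty$ and $1+s_n>0$. In the whitened coordinates $P^{-1/2}x$, the measures become $\bigotimes N(0,1)$ and $\bigotimes N(0,1+s_n)$. The affinities are $\bigl(2\sqrt{1+s_n}/(2+s_n)\bigr)^{1/2}=1-O(s_n^2)$, so their product is positive iff $\sum_n s_n^2<\infty$. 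This gives sufficiency. For the general case I chain $N_{a,P}\sim N_{b,P}\sim N_{b,Q}$, using Step 1 for the first link and Step 2 for the second.

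\emph{Step 3 (necessity and the dichotomy).} Here I must show singularity whenever any condition fails.

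For condition 1, I would use that the Cameron--Martin space is intrinsic to the measure class. It equals the set of shifts $h$ with $N_{0,P}(\cdot-h)\sim N_{0,P}$, by Step 1. So equivalent measures must share it, i.e. $P^{1/2}(\mathcal H)=Q^{1/2}(\mathcal H)$.

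For condition 3, the case where $S$ is compact but not Hilbert--Schmidt is handled by the diagonal computation in Step 2. The hard case, which I expect to be the main obstacle, is $S$ not compact, since it cannot be diagonalized. There I would pick an orthonormal sequence $(f_n)$ with $|\langle Sf_n,f_n\rangle|\ge c>0$, chosen nearly bi-orthogonal for $S$ (possible by non-compactness). I then consider the statistic $\frac1m\sum_{n\le m}\langle x,P^{-1/2}f_n\rangle^2$. By a law of large numbers it converges almost surely to $1$ under one measure and stays bounded away from $1$ under the other, which exhibits a separating set.

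Finally, the dichotomy itself follows because every failure produces singularity through Kakutani or the separating statistic. Alternatively, one can invoke the Gaussian zero--one law for the Hellinger affinity of the full measures.
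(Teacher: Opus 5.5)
The paper does not prove this statement. It quotes it as background in Appendix~\ref{app:a} and defers to \citet{DaPrato2006InfiniteDimensionalAnalysis}, so your Kakutani route is measured against the standard textbook argument. Your sufficiency direction is fine: Steps~1--2 and the chain $N_{a,P}\sim N_{b,P}\sim N_{b,Q}$ work. So is your admissible-shift proof that equivalence forces condition~1.

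\textbf{First gap: the dichotomy when condition~1 fails.} The admissible-shift argument yields only $N_{a,P}\not\sim N_{b,Q}$, not singularity. Neither Kakutani nor your separating statistic reaches this case, because as you set them up both presuppose condition~1; that is what makes $T=P^{-1/2}Q^{1/2}$ bounded and invertible. So your closing claim that every failure produces singularity ``through Kakutani or the separating statistic'' is false here. The fallback appeal to a Gaussian zero--one law for the Hellinger affinity is a pointer, not an argument. A direct fix in the centered case: if $Q^{1/2}(\mathcal H)\not\subseteq P^{1/2}(\mathcal H)$, the range-inclusion (Douglas) lemma gives $h_n$ with $\langle Ph_n,h_n\rangle=1$ and $\langle Qh_n,h_n\rangle\ge 4^n$. (Alternatively it gives an $h$ with $\langle Ph,h\rangle=0<\langle Qh,h\rangle$, which separates at once.) Borel--Cantelli applied to the events $\{|\langle x,h_n\rangle|\le 2^{n/2}\}$ then gives a set of full $N_{0,P}$-measure and zero $N_{0,Q}$-measure, with no independence needed.

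\textbf{Second gap: the means in the necessity direction.} Step~1 compares measures with the \emph{same} covariance. So from $N_{a,P}\sim N_{b,Q}$ plus condition~1 alone you cannot conclude $a-b\in\mathcal H_0$; you first need $N_{b,P}\sim N_{b,Q}$, i.e.\ condition~3 for a non-centered pair. Your condition-3 arguments are written for centered measures. With means present, the statistic $\frac1m\sum_{n\le m}\langle x,P^{-1/2}f_n\rangle^2$ picks up the drift $\frac1m\sum_{n\le m}\langle a,P^{-1/2}f_n\rangle^2$ (and the analogous term for $b$). Its limits under the two measures are then shifted and need not separate.

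The standard remedy is symmetrization. If $\mu\sim\nu$ (resp.\ $\mu\not\perp\nu$), the same holds for $\mu\otimes\mu$ and $\nu\otimes\nu$. Pushing forward by $(x,y)\mapsto(x-y)/\sqrt2$ sends $N_{a,P},N_{b,Q}$ to $N_{0,P},N_{0,Q}$. Prove conditions~1 and~3 and the dichotomy in the centered case. Then condition~2, and singularity when only it fails, follow from Step~1 via $N_{a,P}\sim N_{b,Q}\sim N_{b,P}$.

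\textbf{A smaller point in the non-compact case.} The bound $|\langle Sf_n,f_n\rangle|\ge c$ does not by itself keep the statistic away from $1$. Alternating signs can make the Ces\`aro mean of the variances $1+\langle Sf_n,f_n\rangle$ tend to $1$. Take the $f_n$ in an infinite-dimensional spectral subspace of $S$ for $[c,\infty)$ or $(-\infty,-c]$. Choose them so the off-diagonal entries $\langle Sf_n,f_k\rangle$ are square-summable. A variance bound then gives convergence in probability, hence almost surely along a subsequence, for the non-independent sequence under the second measure.
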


\begin{theorem}[Cameron--Martin Formula]
    Let $\mu = N_Q, \nu = N_{a,Q}$ be two Gaussian measures, then: 
    \begin{enumerate}
        \item If $a\notin Q^{1/2}(\mathcal{H})$, then $\mu \perp \nu$.
        \item If $a\in Q^{1/2}(\mathcal{H})$, then $\mu \sim \nu$.
        \item If $\mu \sim \nu$, then with $h:=Q^{-1/2}a$,
        \begin{align*}
            \frac{d\nu}{d\mu}(x) = \exp \left\{ \widehat h(x) -\frac{1}{2}\|h\|_{\mathcal H}^2 \right\},
        \end{align*}
        where $\widehat h$ denotes the Cameron--Martin linear functional associated with $h$.
    \end{enumerate}
\end{theorem}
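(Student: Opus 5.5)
We prove the Cameron--Martin formula by reducing to a countable product of one-dimensional Gaussians, using the spectral theorem for compact self-adjoint operators stated earlier. Since $Q\in L_1^+(\mathcal H)$ is compact, self-adjoint and positive, there is an orthonormal eigenbasis $\{e_k\}$ with $Qe_k=\lambda_k e_k$. We first restrict to $\overline{\mathrm{Ran}(Q)}$, assuming $\ker Q=\{0\}$ so that $\lambda_k>0$. On $\ker Q$ the measure $\mu$ is a Dirac mass at $0$, so any component of $a$ in $\ker Q$ already gives singularity and lies outside $Q^{1/2}(\mathcal H)$, consistent with item 1.

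Write $x_k=\langle x,e_k\rangle$ and $a_k=\langle a,e_k\rangle$. The map $x\mapsto(x_k)_k$ is a measurable isomorphism onto its image in $\mathbb R^{\mathbb N}$. Reading off the Fourier transform in the definition of $N_{a,Q}$ gives the two pushforwards:
\begin{align*}
\mu &\;\mapsto\; \textstyle\bigotimes_k \mathcal N(0,\lambda_k), & \nu &\;\mapsto\; \textstyle\bigotimes_k \mathcal N(a_k,\lambda_k).
\end{align*}
This is exactly the product-measure construction of \citet{DaPrato2006InfiniteDimensionalAnalysis} cited above.

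For items 1 and 2 we invoke Kakutani's dichotomy for infinite products of pairwise equivalent probability measures. Such a product pair is either equivalent or mutually singular, and it is equivalent iff the product of the Hellinger affinities $\prod_k\int\sqrt{d\nu_k\,d\mu_k}$ is strictly positive. A direct Gaussian computation gives the affinity between $\mathcal N(0,\lambda)$ and $\mathcal N(a,\lambda)$ as $\exp(-a^2/(8\lambda))$. Hence the product is positive iff $\sum_k a_k^2/\lambda_k<\infty$. This holds iff $h:=Q^{-1/2}a=\sum_k (a_k/\sqrt{\lambda_k})e_k$ lies in $\mathcal H$, that is, iff $a\in Q^{1/2}(\mathcal H)$, which proves both items.

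For item 3, assume $h=Q^{-1/2}a\in\mathcal H$ and set $h_k=a_k/\sqrt{\lambda_k}$. The $n$-dimensional marginal densities are
\begin{align*}
D_n(x) &= \exp\Big(W_n(x)-\tfrac12\textstyle\sum_{k\le n}h_k^2\Big), & W_n(x) &:= \textstyle\sum_{k\le n} h_k x_k/\sqrt{\lambda_k}.
\end{align*}
The $D_n$ form a nonnegative $\mu$-martingale with respect to the filtration generated by the first $n$ coordinates. Under $\mu$ the variables $x_k/\sqrt{\lambda_k}$ are i.i.d. standard normal. So $W_n$ converges in $L^2(\mu)$ to a centered Gaussian $\widehat h$ of variance $\|h\|^2$; this $\widehat h$ is the Cameron--Martin functional, which is how we identify it. Next, $\mathbb E_\mu[D_n^2]=\exp(\sum_{k\le n}h_k^2)\le e^{\|h\|^2}$, so the martingale is bounded in $L^2$ and therefore uniformly integrable. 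It thus converges in $L^1(\mu)$ to a limit $D$ with $\int_A D\,d\mu=\nu(A)$ for all cylinder sets $A$. A monotone-class argument extends this to all Borel sets, so $D=d\nu/d\mu$. Finally, the almost-sure limit of $D_n$ is $\exp(\widehat h-\tfrac12\|h\|^2)$ along a subsequence, which gives the stated formula.

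The main obstacle is the dichotomy step if Kakutani's theorem is not taken as given. In that case we would prove singularity directly when $\sum_k h_k^2=\infty$. Choose blocks with $\sum h_k^2\approx 2^{j}$, and use the normalized statistic $T_j=\sum_{k\in\text{block}} h_k x_k/\sqrt{\lambda_k}$ divided by its variance. This statistic concentrates at $0$ under $\mu$ and at $1$ under $\nu$, and Borel--Cantelli then yields a separating set.
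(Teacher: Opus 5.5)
Your proposal is correct. The paper gives no proof of this background result and defers to \citet{DaPrato2006InfiniteDimensionalAnalysis}, and your argument is the standard product-measure/Kakutani proof found there: you diagonalize $Q$ to identify $N_Q$ and $N_{a,Q}$ with $\bigotimes_k\mathcal N(0,\lambda_k)$ and $\bigotimes_k\mathcal N(a_k,\lambda_k)$, apply Kakutani's dichotomy with Hellinger affinity $e^{-a_k^2/(8\lambda_k)}$ (so equivalence holds iff $\sum_k a_k^2/\lambda_k<\infty$, i.e.\ iff $a\in Q^{1/2}(\mathcal H)$), and identify the density as the limit of the finite-dimensional densities via the $L^2(\mu)$-limit $\widehat h=\sum_k h_k x_k/\sqrt{\lambda_k}$, which is Da Prato's white-noise map $W_h$. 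Your reduction to $\ker Q=\{0\}$, the bound $\mathbb E_\mu[D_n^2]\le e^{\|h\|^2}$ giving uniform integrability, and the block/Borel--Cantelli fallback for singularity are all sound; note also that the martingale step alone already yields item 2, because the limit density is strictly positive, so Kakutani is only really needed for item 1.
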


These are the conditions enforced on the FFM model \citep{kerrigan2023functionalflowmatching}, which relies on the absolute continuity between the marginal and the conditional measure at every time $t$ in the probability flow.

For a general Banach space, which is a complete normed space but need not have an inner product structure or spectral decomposition, the analysis becomes more abstract. For instance, without a coordinate map, the preceding product-measure construction of Gaussian measures no longer applies directly. We refer readers to \citet{LedouxTalagrand1991} for this broader probability theory, and instead explore Hilbert-space embeddings of probability measures in Section \ref{sec:kern_prob}.

\subsection{Reproducing Kernel Hilbert Space}
A Reproducing Kernel Hilbert Space is a Hilbert space of functions with a particular structure that makes each point evaluation well-behaved.

\begin{definition}[Reproducing Kernel Hilbert Space]
Let $\mathcal{X}$ be a non-empty set. A Hilbert space, $\mathcal{H}$ of functions $f: \mathcal{X} \rightarrow \mathbb{R}$ is called a Reproducing Kernel Hilbert Space if for every $x \in \mathcal{X}$, the point evaluation functional $\delta_x: \mathcal{H} \rightarrow \mathbb{R}$ defined by $\delta_x(f) = f(x)$ is continuous and bounded.

By the Riesz representation theorem, there exists a unique element $\kappa_x \in \mathcal{H}$ such that
\begin{align*}
    f(x) = \langle f, \kappa_x \rangle_{\mathcal{H}} \;\; \forall \;\; f \in \mathcal{H}.
\end{align*}
This is the reproducing property that is the defining characteristic of an RKHS.

Since $\kappa_x$ is both a function defined on $\mathcal{X}$ and in $\mathcal{H}$, we have that

\begin{align*}
    \kappa_x(y) = \delta_y (\kappa_x) = \langle \kappa_x, \kappa_y \rangle_{\mathcal{H}},
\end{align*}
where $\kappa_y \in \mathcal{H}$ is the element in $\mathcal{H}$ associated with $\delta_y$. This allows us to define the reproducing kernel, $\kappa(x, y)$ as:
\begin{align*}
    \kappa(x, y) = \langle \kappa_x, \kappa_y \rangle_{\mathcal{H}}.
\end{align*}

From this, it is easy to see that $\kappa: \mathcal{X} \times \mathcal{X} \rightarrow \mathbb{R}$ is both symmetric and positive definite.
\end{definition}

Arguably the core reason to use kernels is that they implicitly define feature maps in high (and potentially infinite) dimensional spaces, which is known colloquially as the ``kernel trick''. We next formalize this property using the properties of the RKHS:

\begin{definition}[Canonical Feature Maps]
    Let $\kappa$ be a kernel with RKHS $\mathcal{H}_\kappa$. The canonical feature map $\phi: \mathcal{X} \rightarrow \mathcal{H}_\kappa$ is defined by
\begin{align*}
    \phi(x) = \kappa(\cdot, x),
\end{align*}
which satisfies the fundamental identity $\kappa(x, y) = \langle \phi(x), \phi(y) \rangle$, which is known as the ``kernel trick''. This allows us to compute inner products in the feature space without explicit construction. 
\end{definition}

In the context of our work, we can understand the cost function in Proposition~\ref{prop:equiv_hsd_rd} by making use of the kernel trick.

For some of our kernel choices, it is useful to know when the RKHS is rich enough to approximate arbitrary continuous functions and when the induced mean embedding distinguishes probability distributions. Kernels with the former property are referred to as ``universal'', defined as follows:

\begin{definition}[Universal Kernel]
\label{def:universal_kernel}
    Let $\mathcal{X}$ be a compact metric space. A continuous kernel on $\mathcal{X}$ is called universal if its associated RKHS, $\mathcal{H}_\kappa$ is dense in $C(\mathcal{X})$ with respect to the supremum norm, $|| \cdot ||_{\infty}$. That is to say that for any $f \in C(\mathcal{X})$ and for any $\epsilon > 0$, there exists $g \in \mathcal{H}_\kappa$ such that

\begin{align*}
    \sup_{x \in \mathcal{X}} \left| f(x) - g(x) \right| < \epsilon.
\end{align*}

\end{definition}

Universality ensures that a kernel can approximate continuous functions, while characteristicness ensures that its mean embedding can distinguish probability distributions. A kernel with the latter property is called ``characteristic'':

\begin{definition}[Characteristic Kernel]\label{def:char}
    Let $\mathcal{X}$ be a compact metric space. A kernel $\kappa$ on $\mathcal{X}$ is characteristic if the mean embedding map
    \[
        \mu \mapsto \int_{\mathcal{X}} \kappa(\cdot,x)\,d\mu(x)
    \]
    is injective on the space of probability measures $\mathcal P(\mathcal{X})$. Equivalently, $\kappa$ is characteristic if

    \begin{align*}
        \int\int \kappa(x, y) d\mu(x) d\mu(y) - 2\int\int \kappa(x, y) d\mu(x) d\nu(y) + \int\int \kappa(x, y) d\nu(x) d\nu(y) = 0
    \end{align*}
implies that $\mu = \nu$. 
\end{definition}

Universal kernels on compact spaces are characteristic, but the converse does not hold in general. These properties are useful for interpreting kernel discrepancies: characteristicness makes the kernel mean embedding injective, while universality gives a stronger approximation property on compact sets. The main theorems in Section \ref{sec:theory} only require the specific assumptions stated there, namely bounded positive-definite kernels, compactness for the error decomposition, and Sobolev regularity for discretization invariance.

The radial basis function kernel is used in our Sobolev-space experiments. On a Sobolev space $H^k(\Omega)$ it is defined as

\begin{align*}
    \kappa_{\mathrm{RBF}}^{(k)}(f, g) = \exp\left\{-\frac{\| f - g \|_{H^k}^2}{2\sigma^2}\right\},
\end{align*}
where $\sigma > 0$ is the bandwidth parameter. This Gaussian kernel is symmetric, positive definite, and characteristic on separable Hilbert spaces \citep{ziegel2022characteristickernelshilbertspaces}. It is also bounded, $\kappa_{\mathrm{RBF}}^{(k)}(f,f)=1$, satisfying the boundedness condition used in Theorems~\ref{thm:regularity} and \ref{thm:approx_error_correct}.

\subsection{Kernel Embedding of Probability Measure}\label{sec:kern_prob}

We next turn our attention to the concept of embedding probability measures into a RKHS, following the framework developed by \citet{Sriperumbudur2010Hilbert}. This is central to our approach because it gives a tractable way to define kernel-induced transport costs on function spaces. As previously discussed, infinite-dimensional spaces lack a Lebesgue-like reference measure, making density-based approaches unavailable. The key insight is that we can embed probability measures as elements into a RKHS, where we compute distances between distributions without requiring explicit density estimation.

For this embedding to be well defined, we require an integrability condition on the kernel. The following proposition shows that bounded kernels ensure the embedding is valid for all probability measures~\citep{Sriperumbudur2010Hilbert}:

\begin{proposition}
    Let $\kappa$ be a measurable kernel on $\mathcal{X}$. Then, $\int_{\mathcal{X}} \sqrt{\kappa(x, x)} \; dP(x) \; < \; \infty \;\; \forall \;\; P \in \mathcal{P}(\mathcal{X})$ if and only if $\kappa$ is bounded. 
\end{proposition}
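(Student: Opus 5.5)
The proof is a direct two-sided argument. For the direction ``bounded $\Rightarrow$ integrable'', assume $B:=\sup_{x\in\mathcal X}\kappa(x,x)<\infty$. Positive definiteness gives $\kappa(x,x)\ge 0$, so $\sqrt{\kappa(x,x)}$ is well defined. It is measurable because it is the composition of the measurable diagonal map $x\mapsto(x,x)$, the measurable kernel $\kappa$, and the continuous square root. Since it is dominated pointwise by the constant $\sqrt B$, for every probability measure $P$ we get $\int\sqrt{\kappa(x,x)}\,dP(x)\le\sqrt B<\infty$. Here we read ``bounded'' as boundedness of the diagonal. This is equivalent to $\sup_{x,y}|\kappa(x,y)|<\infty$, because Cauchy--Schwarz in $\mathcal H_\kappa$ gives $|\kappa(x,y)|=|\langle\phi(x),\phi(y)\rangle|\le\sqrt{\kappa(x,x)\kappa(y,y)}$. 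We record this equivalence first so that the two notions of boundedness can be used interchangeably.

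For the converse we argue by contraposition. Suppose $\kappa$ is unbounded, so $\sup_x\kappa(x,x)=\infty$, and hence $x\mapsto\sqrt{\kappa(x,x)}$ is unbounded. We construct a single probability measure with infinite integral. Choose points $x_n\in\mathcal X$ with $\sqrt{\kappa(x_n,x_n)}\ge 2^n$, which is possible by unboundedness. Then set
\[
P:=\sum_{n\ge1}2^{-n}\delta_{x_n}.
\]
This is a Borel probability measure, since singletons are Borel in the metric spaces the paper works with, and more generally Dirac measures are defined on any measurable space. Its integral is
\[
\int\sqrt{\kappa(x,x)}\,dP(x)=\sum_n 2^{-n}\sqrt{\kappa(x_n,x_n)}\ge\sum_n 1=\infty .
\]
This contradicts the hypothesis, so $\kappa$ must be bounded.

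The only mildly delicate step is choosing the weights in the converse so that the measure is normalized while the integral still diverges. Picking the points with growth $2^n$ against weights $2^{-n}$ handles both at once, so no step is a real obstacle. As an optional remark, the forward direction is exactly what makes the mean embedding $\int\phi\,dP$ a well-defined Bochner integral in $\mathcal H_\kappa$, because $\|\phi(x)\|_{\mathcal H_\kappa}=\sqrt{\kappa(x,x)}$.
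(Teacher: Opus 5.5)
Your proof is correct and follows essentially the same route as the argument the paper relies on. The paper does not prove this proposition itself but defers to \citet{Sriperumbudur2010Hilbert}, where the forward direction is domination by a constant and the converse builds a discrete probability measure on a sequence along which $\sqrt{\kappa(x,x)}$ diverges, weighted so that the measure is normalized while the integral is infinite; your choice of points with $\sqrt{\kappa(x_n,x_n)}\ge 2^n$ and weights $2^{-n}$ is a cosmetic variant of that construction, and your Cauchy--Schwarz remark correctly reconciles the two readings of ``bounded''.
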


With the mean embedding in hand, we can define a natural notion of distance between probability measures. This construct relies on the concept of characteristic kernel (Definition \ref{def:char}). If the kernel is characteristic, then the metrics $\gamma_\kappa$ can be defined  such that:
\begin{align*}
    \gamma_\kappa(\mathbb{P},\mathbb{Q}) = 0 \Longleftrightarrow \mathbb{P} = \mathbb{Q}, \mathbb{P,Q}\in \mathcal{P}
\end{align*}

The bounded condition and the characteristic condition gives the following result, which is Theorem 1 in \citet{Sriperumbudur2010Hilbert}: 
\begin{theorem} 
Let $\mathcal{P}_\kappa := \{\mathbb{P}\in \mathcal{P}: \int_M \sqrt{\kappa(x,x)}d\mathbb{P}(x) <\infty\}$, where $\kappa$ is measurable on $M$, then for any $\mathbb{P,Q}\in \mathcal{P}_\kappa$: 
\begin{align*}
    \gamma_\kappa(\mathbb{P}, \mathbb{Q}) = \left|\left| \int_M \kappa(\cdot,x)d\mathbb{P}(x) - \int_M \kappa(\cdot, x) d\mathbb{Q}(x) \right|\right|_{\mathcal{H}} := ||\mathbb{P}\kappa - \mathbb{Q}\kappa||_{\mathcal{H}}
\end{align*}
\end{theorem}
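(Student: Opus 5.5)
The plan is to read $\gamma_\kappa$ as the integral probability metric over the unit ball of the RKHS, $\gamma_\kappa(\mathbb P,\mathbb Q):=\sup_{\|f\|_{\mathcal H}\le 1}\bigl|\int_M f\,d\mathbb P-\int_M f\,d\mathbb Q\bigr|$. This is the standard definition in \citet{Sriperumbudur2010Hilbert} and the one the statement implicitly uses. The proof then has three steps. First, the mean embedding $\mathbb P\kappa:=\int_M\kappa(\cdot,x)\,d\mathbb P(x)$ is a well-defined element of $\mathcal H$. Second, integrals of RKHS functions are inner products with it. Third, the supremum of a linear functional over the unit ball equals the norm of its Riesz representer.

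\textbf{Step 1 (existence of the embedding).} For $\mathbb P\in\mathcal P_\kappa$, I would define $\mathbb P\kappa$ through the Riesz representation theorem rather than as a Bochner integral, which sidesteps measurability subtleties of $x\mapsto\kappa(\cdot,x)$. Consider the linear functional $T_{\mathbb P}(f):=\int_M f(x)\,d\mathbb P(x)$ on $\mathcal H$. By the reproducing property and Cauchy--Schwarz,
\[
|f(x)|=|\langle f,\kappa(\cdot,x)\rangle_{\mathcal H}|\le\|f\|_{\mathcal H}\sqrt{\kappa(x,x)} .
\]
Hence $|T_{\mathbb P}(f)|\le\|f\|_{\mathcal H}\int_M\sqrt{\kappa(x,x)}\,d\mathbb P(x)<\infty$ by the definition of $\mathcal P_\kappa$. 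Measurability of each $f\in\mathcal H$ follows from measurability of $\kappa$: elements of the dense span of $\{\kappa(\cdot,x)\}$ are measurable, and norm convergence in $\mathcal H$ implies pointwise convergence. So $T_{\mathbb P}$ is a bounded linear functional, and Riesz gives a unique $\mathbb P\kappa\in\mathcal H$ with $\int f\,d\mathbb P=\langle f,\mathbb P\kappa\rangle_{\mathcal H}$ for all $f\in\mathcal H$.

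\textbf{Step 2 (identification as a kernel integral).} Taking $f=\kappa(\cdot,y)$ gives $\mathbb P\kappa(y)=\int_M\kappa(y,x)\,d\mathbb P(x)$. This justifies the notation $\int_M\kappa(\cdot,x)\,d\mathbb P(x)$ in the statement. The same construction applies to $\mathbb Q$.

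\textbf{Step 3 (the supremum equals the norm).} By linearity,
\[
\int f\,d\mathbb P-\int f\,d\mathbb Q=\langle f,\mathbb P\kappa-\mathbb Q\kappa\rangle_{\mathcal H}.
\]
Cauchy--Schwarz bounds this by $\|\mathbb P\kappa-\mathbb Q\kappa\|_{\mathcal H}$ whenever $\|f\|_{\mathcal H}\le 1$. If $\mathbb P\kappa\ne\mathbb Q\kappa$, the bound is attained at $f=(\mathbb P\kappa-\mathbb Q\kappa)/\|\mathbb P\kappa-\mathbb Q\kappa\|_{\mathcal H}$. If $\mathbb P\kappa=\mathbb Q\kappa$, both sides are zero. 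This proves the identity.

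I expect the only real obstacle to be Step 1, namely making sure every $f\in\mathcal H$ is $\mathbb P$-integrable and measurable so that $T_{\mathbb P}$ makes sense. The pointwise bound by $\sqrt{\kappa(x,x)}$ together with the $\mathcal P_\kappa$ integrability condition handles integrability. Steps 2 and 3 are routine Hilbert-space arguments.
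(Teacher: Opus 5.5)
Your proof is correct and follows essentially the same route as the argument the paper relies on. The paper does not reprove the statement; it cites Theorem~1 of \citet{Sriperumbudur2010Hilbert}, whose proof is exactly yours: Riesz representation of the bounded functional $f\mapsto\int_M f\,d\mathbb P$, identification of the representer with $\int_M\kappa(\cdot,x)\,d\mathbb P(x)$ via the reproducing property, and the fact that the supremum of $\langle f,\mathbb P\kappa-\mathbb Q\kappa\rangle_{\mathcal H}$ over the unit ball equals the norm. Your reading of $\gamma_\kappa$ as the RKHS unit-ball integral probability metric is the right one (the paper leaves it implicit), and your measurability argument for elements of $\mathcal H$ correctly handles the only delicate point.
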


Below we show that the bounded RBF kernel defined on a Sobolev space $H^k$ is characteristic, to provide additional support for the kernel choices in the main text. First we state a lemma proved in \citet{Sriperumbudur2010Hilbert}: 
\begin{lemma}\label{lem:inte_pos}
    Let $\kappa$ be an integrally strictly positive definite kernel on a topological space $M$. Then $\kappa$ is characteristic to $\mathcal{P}$. 
\end{lemma}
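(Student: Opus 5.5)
The plan is to argue by contraposition, directly from the definition of integral strict positive definiteness and the equivalent quadratic-form characterization of characteristicness in Definition~\ref{def:char}. Recall that $\kappa$ is integrally strictly positive definite on $M$ if for every finite nonzero signed Radon measure $\eta$ on $M$ we have $\int\!\!\int \kappa(x,y)\,d\eta(x)\,d\eta(y) > 0$. I will work with measures for which the mean embedding is defined, namely $\mathcal P_\kappa$ from the preceding theorem. In the bounded case relevant to our kernels this is all of $\mathcal P$, by the proposition stated just above.

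\textbf{Step 1 (the quadratic form is the squared MMD).} For $\mathbb P,\mathbb Q\in\mathcal P_\kappa$, the condition $\int\sqrt{\kappa(x,x)}\,d\mathbb P<\infty$ makes $x\mapsto\kappa(\cdot,x)$ Bochner integrable into $\mathcal H_\kappa$, so $\mathbb P\kappa$ and $\mathbb Q\kappa$ are well-defined elements of $\mathcal H_\kappa$. Expanding the squared norm, and moving the inner product inside the Bochner integrals, gives
\[
\|\mathbb P\kappa-\mathbb Q\kappa\|_{\mathcal H}^2=\int\!\!\int\kappa(x,y)\,d\eta(x)\,d\eta(y),\qquad \eta:=\mathbb P-\mathbb Q .
\]
The double integrals are absolutely convergent by Fubini, since $|\kappa(x,y)|\le\sqrt{\kappa(x,x)\kappa(y,y)}$ by Cauchy--Schwarz in $\mathcal H_\kappa$. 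This also justifies the "equivalently" clause of Definition~\ref{def:char}.

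\textbf{Step 2 (contradiction).} Suppose $\mathbb P\neq\mathbb Q$ but $\gamma_\kappa(\mathbb P,\mathbb Q)=0$. Then $\eta=\mathbb P-\mathbb Q$ is a finite signed measure with total variation at most $2$, and it is nonzero. Integral strict positive definiteness then forces $\int\!\!\int\kappa\,d\eta\,d\eta>0$. By Step 1 this quantity equals $\gamma_\kappa(\mathbb P,\mathbb Q)^2=0$, a contradiction. Hence the mean embedding is injective, so $\kappa$ is characteristic.

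\textbf{Main obstacle.} The delicate point is measure-theoretic rather than algebraic. The definition of integral strict positive definiteness is usually stated for finite signed Radon measures, so I must check that $\mathbb P-\mathbb Q$ qualifies. On the spaces we care about (compact or Polish metric spaces, including $H^k$ and compact path sets), every finite Borel measure is Radon, which settles it. On a general topological space $M$, I would either restrict $\mathcal P$ to Radon probability measures, as \citet{Sriperumbudur2010Hilbert} implicitly do, or add that hypothesis. The second technical check is the exchange of inner product and integral in Step 1, which I expect to go through cleanly once Bochner integrability is established.
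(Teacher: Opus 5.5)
Your proof is correct and is essentially the standard argument of \citet{Sriperumbudur2010Hilbert}, which the paper cites in place of giving a proof: you identify $\gamma_\kappa(\mathbb P,\mathbb Q)^2$ with $\int\!\!\int\kappa\,d(\mathbb P-\mathbb Q)\,d(\mathbb P-\mathbb Q)$ and contradict integral strict positive definiteness. Two small points: the paper's definition of ISPD already uses finite signed \emph{Borel} measures and builds in boundedness, so $\mathbb P-\mathbb Q$ qualifies directly, $\mathcal P_\kappa=\mathcal P$, and your Radon caveat is unnecessary; and for a merely measurable kernel it is safer to define $\mathbb P\kappa$ as the Riesz representer of $f\mapsto\int f\,d\mathbb P$ (a bounded functional since $|f(x)|\le\|f\|_{\mathcal H_\kappa}\sqrt{\kappa(x,x)}$), as Sriperumbudur et al.\ do, because Bochner integrability requires an essentially separable range, which can fail when $\mathcal H_\kappa$ is non-separable.
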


\begin{definition}
    Let $M(H)$ be the finite signed Borel measures on $H$. A bounded measurable kernel $\kappa$ on $H$ is \textit{integrally strictly positive definite (ISPD)} if for every $\mu \in M(H) \setminus \{0\}$, 
    \begin{align*}
        \int\int_{H\times H} \kappa(x,y)d\mu(x)d\mu(y) > 0,
    \end{align*}
    or equivalently, if $m_\kappa(\mu):=\int \kappa(\cdot,x)d\mu(x)$ is the \textit{kernel mean element} (KME), then:
    \begin{align*}
        \int\int \kappa(x,y)d\mu(x)d\mu(y) = || m_\kappa(\mu)||^2_{\mathcal{H}_\kappa}.
    \end{align*}
\end{definition}
One can interpret ISPD as the statement that a zero KME implies a zero measure: $m_\kappa(\mu) = 0 \Longrightarrow \mu = 0$.

We then reference Theorem 3.1 from \citet{ziegel2022characteristickernelshilbertspaces}, which shows that RBF / Gaussian kernel is ISPD on separable Hilbert space:
\begin{theorem}
    Let $H$ be a separable Hilbert space, then the Gaussian kernel on $H$ is ISPD with respect to $M(H)$.
\end{theorem}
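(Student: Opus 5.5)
The plan is to write the Gaussian kernel as an expectation of characters of an isonormal Gaussian process, so that $\iint\kappa\,d\mu\,d\mu$ becomes the second moment of a random variable. Vanishing of that moment then forces every finite-dimensional marginal (suitably weighted) of $\mu$ to vanish, and hence $\mu=0$. Fix an orthonormal basis $\{e_k\}$ of the separable Hilbert space $H$, write $x_k:=\langle x,e_k\rangle$, and let $P_n$ be the orthogonal projection onto $\mathrm{span}\{e_1,\dots,e_n\}$. Let $\xi_1,\xi_2,\dots$ be i.i.d.\ $\mathcal N(0,1)$ on a probability space $(\Omega',\mathcal F',\mathbb P)$, and put $\mathcal F_n:=\sigma(\xi_1,\dots,\xi_n)$.

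\textbf{Step 1: construct $W$.} Define $W(x):=\sum_k \xi_k x_k$. For fixed $x$ this series converges in $L^2(\mathbb P)$ and almost surely, with $W(x)\sim\mathcal N(0,\|x\|^2)$. To obtain a jointly measurable version, note that for a nonzero $\mu\in M(H)$ the partial sums $W_n(x,\omega)=\sum_{k\le n}\xi_k x_k$ are jointly measurable. They are Cauchy in $L^2(|\mu|\otimes\mathbb P)$, because
\[
\int \mathbb E|W_n-W_m|^2\,d|\mu| = \int \sum_{m<k\le n} x_k^2\,d|\mu|(x).
\]
This tends to zero only if $\int\|x\|^2\,d|\mu|<\infty$, which is not assumed. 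So I would instead work with the bounded functions $e^{iW_n(x)/\sigma}$: they converge in $L^2(|\mu|\otimes\mathbb P)$ by dominated convergence, since for each $x$ they converge a.s. A subsequence then converges $|\mu|\otimes\mathbb P$-a.e., which defines a jointly measurable version of $e^{iW(x)/\sigma}$.

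\textbf{Step 2: variance representation.} Since $\mathbb E\, e^{i(W(x)-W(y))/\sigma}=e^{-\|x-y\|^2/2\sigma^2}$, Fubini (everything is bounded and $|\mu|$ is finite) gives
\[
\iint \kappa(x,y)\,d\mu(x)\,d\mu(y)=\mathbb E|M|^2,\qquad M:=\int e^{iW(x)/\sigma}\,d\mu(x).
\]
Hence $\kappa$ is positive semidefinite on $M(H)$, and it remains to show that $\mathbb E|M|^2=0$ implies $\mu=0$.

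\textbf{Step 3: strictness via conditioning.} Suppose $M=0$ almost surely. Then $\mathbb E[M\mid\mathcal F_n]=0$ for every $n$. The tail $\sum_{k>n}\xi_k x_k$ is independent of $\mathcal F_n$, so
\[
\mathbb E[M\mid\mathcal F_n]=\int e^{i\langle \xi^{(n)},P_nx\rangle/\sigma}\,e^{-\|(I-P_n)x\|^2/2\sigma^2}\,d\mu(x),
\]
where $\xi^{(n)}=(\xi_1,\dots,\xi_n)$. This is the Fourier transform, evaluated at $\xi^{(n)}/\sigma$, of the finite signed measure
\[
\nu_n:=(P_n)_\#\bigl(e^{-\|(I-P_n)x\|^2/2\sigma^2}\mu\bigr)\quad\text{on }\mathbb R^n.
\]
The transform is continuous and vanishes $\mathcal N(0,I_n)$-a.e., and this Gaussian has full support, so it vanishes identically. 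By Fourier uniqueness, $\nu_n=0$.

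Now take a bounded continuous cylinder function $f(P_mx)$. For every $n\ge m$, testing $\nu_n=0$ against $y\mapsto f(y_1,\dots,y_m)$ gives
\[
\int f(P_mx)\,e^{-\|(I-P_n)x\|^2/2\sigma^2}\,d\mu(x)=0 .
\]
Letting $n\to\infty$, the weight tends to $1$ pointwise, so dominated convergence gives $\int f(P_mx)\,d\mu=0$. Thus every finite-dimensional marginal $(P_m)_\#\mu$ vanishes. In a separable Hilbert space the cylinder sets form a $\pi$-system generating $\mathcal B(H)$, so uniqueness of measures (applied to the positive and negative parts of $\mu$) yields $\mu=0$.

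\textbf{Main obstacle.} The only delicate point is Step 1, namely making $e^{iW(x)/\sigma}$ jointly measurable without any moment assumption on $\mu$, so that Fubini and the conditional-expectation computation are legitimate. Working with the bounded exponentials and passing to an a.e.-convergent subsequence should suffice. The conditioning argument in Step 3 also sidesteps the fact that $e^{-\|h\|^2/2}$ is not the characteristic function of a Gaussian measure on infinite-dimensional $H$.
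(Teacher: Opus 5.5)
Your proof is correct. It necessarily takes a different route from the paper, which gives no argument for this statement: it imports it as Theorem~3.1 of Ziegel et al.\ (2022) and combines it with Lemma~\ref{lem:inte_pos}.

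You instead make the result self-contained. The product Gaussian measure on $\mathbb{R}^{\mathbb{N}}$, rather than a nonexistent standard Gaussian on $H$, gives $\kappa(x,y)=\mathbb{E}\bigl[e^{iW(x)/\sigma}\overline{e^{iW(y)/\sigma}}\bigr]$, so $\iint\kappa\,d\mu\,d\mu=\mathbb{E}|M|^2$. Conditioning on $\mathcal{F}_n$ reduces strictness to Fourier uniqueness for the damped projections $\nu_n$ on $\mathbb{R}^n$. Dominated convergence removes the damping, and the $\pi$--$\lambda$ theorem on cylinder sets finishes.

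This buys transparency that the citation does not: it shows directly that no moment condition on $\mu$ is needed, and it yields positive semidefiniteness on $M(H)$ along the way. The citation buys brevity and the broader classes treated by Ziegel et al.\ (radial kernels on Hilbert spaces, kernels on Banach and metric spaces).

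Two small points.

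First, in Step~1 what dominated convergence gives is that $e^{iW_n/\sigma}$ is \emph{Cauchy} in $L^2(|\mu|\otimes\mathbb{P})$, via $\mathbb{E}|e^{iW_n(x)/\sigma}-e^{iW_m(x)/\sigma}|^2=2-2e^{-\sum_{m<k\le n}x_k^2/2\sigma^2}$. Your ``main obstacle'' disappears altogether if you set $W(x,\omega):=\lim_n W_n(x,\omega)$ on the jointly measurable set where this limit exists, and $0$ elsewhere.

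Second, the probabilistic layer is optional. The identity $\iint\kappa\,d\mu\,d\mu=\|m_\kappa(\mu)\|^2_{\mathcal{H}_\kappa}=0$ forces $\int\kappa(y,x)\,d\mu(x)=0$ for every $y\in H$. For $y\in\mathrm{span}\{e_1,\dots,e_n\}$, the orthogonal splitting $\|y-x\|^2=\|y-P_nx\|^2+\|(I-P_n)x\|^2$ then says that the Gaussian convolution of your $\nu_n$ vanishes identically. Hence $\nu_n=0$, and your Step~3 concludes verbatim.
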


This theorem and Lemma \ref{lem:inte_pos} therefore imply that the RBF kernel is characteristic on our Sobolev spaces of interest.

\subsection{Path Space and Signature Kernel}

Denote by \(\mathcal{C}_p\) the space of continuous paths \(x:[a,b]\to V\) of finite \(p\)-variation (see Definition~1.1.4 in \citet{cass2024lecturenotesroughpaths}), with \(p \in [1,2)\). \(V\) is a finite-dimensional Banach space. For time series datasets, we usually have \(x: [0,T] \rightarrow \mathbb{R}^d\). For discrete observations, one may work with a piecewise-linear interpolation, which is again of finite \(p\)-variation.
Let \(\mathcal{T}\) be the tensor algebra (written $T((\mathbb{R}^d))$ in Section \ref{sec:signature_background}, where $V=\mathbb{R}^d$)
\begin{align*}
\mathcal{T} \;:=\; \prod_{m=0}^{\infty}V^{\otimes m} .
\end{align*}

\begin{definition}[The signature transform]
    The \emph{signature transform} \(S:\mathcal{C}_p\to\mathcal{T}\) maps a path \(x\) to the sequence of tensors
    \begin{align*}
    S(x) \;=\; \bigl(1, S^{(1)}(x), S^{(2)}(x), \ldots \bigr),
    \end{align*}
    where the \(m\)-th tensor is given by the iterated integral
    \begin{align*}
    S^{(m)}(x)\;:=\;\int_{a \le t_1 < \cdots < t_m \le b} dx_{t_1}\otimes \cdots \otimes dx_{t_m}
    \;\in\; V^{\otimes m},
    \qquad m\in\mathbb{N}.
    \end{align*}
    Here \(\otimes\) denotes the standard tensor product.
\end{definition}
 One can view \(S(x)\) as an infinite-dimensional feature representation of the path \(x\), with higher signature levels \(S^{(m)}(x)\) (the level-$m$ term of $S(x)$; we use this notation throughout) encoding increasingly high-order interactions between the \(d\) channels.

\begin{definition}[Time augmentation]
\label{def:time_augmentation}
The signature is invariant under continuous and non-decreasing time reparametrizations \citep[Lemma 1.2.1]{cass2024lecturenotesroughpaths}. In many machine learning settings, this invariance is not desirable, as we want to capture time series in a way that also depends on their time parametrisation. Following prior work \citep{kidger2020neural, barancikova2025sigdiffusions}, we therefore use the standard \emph{time augmentation} trick. For a path \(x:[a,b]\to V\), we define the augmented path as
\[
\bar x:[a,b]\to \mathbb{R}\times V,
\qquad
\bar x(t) := (t, x(t)).
\]
\end{definition}
Throughout, we apply the signature transform to \(\bar x\) rather than \(x\). For notational simplicity, we suppress the bar in the definitions below. Note the signature is also invariant under constant translations of \(x\) (see Section~1.4 in \citet{cass2024lecturenotesroughpaths}). Throughout, we either fix \(x(a)=0\) or treat paths modulo translation.

\begin{remark}[Tree-like equivalence without time augmentation]
Without the additional time coordinate, the signature is injective only up to \emph{tree-like equivalence} \citep[Section~1.4]{cass2024lecturenotesroughpaths}. Accordingly, statements such as universality and characteristicness of the signature kernel on a set \(K\) should be understood as holding on the corresponding quotient space of equivalence classes when time augmentation is not used.
\end{remark}

When working directly with signatures, one typically truncates \(S(x)\) at level \(L\), defining the truncated signature
\(S_{\le L}(x)=(1,S^{(1)}(x),\ldots,S^{(L)}(x))\).
However, if one is interested in comparing paths \(x,y\in\mathcal{C}_p\) and defining the corresponding kernel-induced discrepancies between distributions, the \emph{signature kernel} provides a way to do so without truncation by working with inner products on the full tensor algebra. Concretely, equip \(V\) with an inner product \(\langle\cdot,\cdot\rangle_V\). This induces canonical inner products \(\langle\cdot,\cdot\rangle_m\) on each tensor space \(V^{\otimes m}\), and hence an inner product \(\langle\cdot,\cdot\rangle_{\mathcal{T}}\) on \(\mathcal{T}\) by linearity across levels.

\begin{definition}[Signature kernel]
For two paths \(x,y\in\mathcal{C}_p\), the signature kernel \(\kappa_{\mathrm{sig}}:\mathcal{C}_p\times\mathcal{C}_p\to\mathbb{R}\) is defined as the inner product of their signatures,
\begin{equation*}
    \kappa_{\mathrm{sig}}(x,y) \;:=\; \langle S(x), S(y) \rangle_{\mathcal{T}}.
\end{equation*}
It is symmetric and positive semidefinite, and thus defines an RKHS on \(\mathcal{C}_p\).
\end{definition}

Crucially, \(\kappa_{\mathrm{sig}}\) can be computed without explicitly forming the signature tensors, via the signature kernel trick introduced in \citet[Theorem~2.5]{salvi2021signature}. In particular, one can evaluate
\begin{equation*}
\kappa_{\mathrm{sig}}(x,y) \;=\; K(b,b),
\end{equation*}
where \(K:[a,b]^2\to\mathbb{R}\) is the signature kernel between two path prefixes, \(K(s,t):=\langle S(x_{[a,s]}), S(y_{[a,t]})\rangle\), which satisfies the integral equation
\begin{equation*}
K(s,t)
\;=\;
1 + \int_{a}^{s}\!\int_{a}^{t} K(u,v)\,\langle dx_u, dy_v\rangle_V.
\end{equation*}
When \(x\) and \(y\) are continuously differentiable, this reduces to a linear hyperbolic Goursat-type PDE with boundary conditions
\(K(a,\cdot)=K(\cdot,a)=1\), for which standard finite-difference solvers apply. See \citet[Section~3.1]{salvi2021signature} for one possible finite-difference scheme. We use the \textsc{pySigLib} library from \citet{shmelev2025pysiglibfastsignaturebased} for efficient signature kernel computations.

\begin{proposition}[Universality of the signature kernel \citep{cass2024lecturenotesroughpaths}]
Let \(\mathcal{C}\) be a path space equipped with a topology such that the signature transform
\(S:\mathcal{C}\to \mathcal{T}\) is continuous into a tensor Hilbert space \(\mathcal{T}\) for which the
signature kernel
\[
\kappa_{\mathrm{sig}}(x,y) := \langle S(x), S(y)\rangle_{\mathcal{T}}
\]
is well-defined.
Then on compact \(K\subset \mathcal{C}\), the restriction \(\kappa_{\mathrm{sig}}|_{K\times K}\) is universal in the sense of Definition \ref{def:universal_kernel}, and hence also characteristic to \(\mathcal{P}(K)\).
\end{proposition}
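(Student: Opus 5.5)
The plan is to deduce universality from the Stone--Weierstrass theorem, applied to the algebra of \emph{linear functionals of the signature}, and then to obtain characteristicness by a standard duality argument. Let $\mathcal{T}_{\mathrm{fin}}\subset\mathcal{T}$ be the finite tensors, i.e.\ elements $\ell=(\ell^{(0)},\dots,\ell^{(L)},0,0,\dots)$ with finitely many nonzero levels. Define
\[
\mathcal{A}:=\bigl\{x\mapsto\langle \ell, S(x)\rangle_{\mathcal{T}} : \ell\in\mathcal{T}_{\mathrm{fin}}\bigr\}\subset C(K).
\]
I will carry out four steps in order: membership of $\mathcal{A}$ in the RKHS, the algebra property, point separation, and the final Stone--Weierstrass argument.

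\textbf{Step 1: $\mathcal{A}\subset\mathcal{H}_{\mathrm{sig}}$ and $\mathcal{A}\subset C(K)$.} Because $\kappa_{\mathrm{sig}}(x,y)=\langle S(x),S(y)\rangle_{\mathcal{T}}$ has the explicit feature map $S$, the RKHS is
\[
\mathcal{H}_{\mathrm{sig}}=\{x\mapsto\langle h,S(x)\rangle_{\mathcal{T}}:h\in\mathcal{T}\},\qquad \|\langle h,S(\cdot)\rangle\|_{\mathcal{H}_{\mathrm{sig}}}\le\|h\|_{\mathcal{T}}.
\]
This is the standard feature-map characterisation of an RKHS. Every finite tensor lies in the Hilbert space $\mathcal{T}$, so $\mathcal{A}\subset\mathcal{H}_{\mathrm{sig}}$. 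Each $f\in\mathcal{A}$ is continuous on $K$, since $S:\mathcal{C}\to\mathcal{T}$ is continuous by hypothesis and $\ell\mapsto\langle \ell,\cdot\rangle$ is continuous. It therefore suffices to show that $\mathcal{A}$ is dense in $(C(K),\|\cdot\|_\infty)$.

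\textbf{Step 2: $\mathcal{A}$ is a unital subalgebra.} Constants come from $\ell=(c,0,0,\dots)$, because $S^{(0)}(x)=1$. Closure under products follows from the shuffle identity
\[
\langle \ell_1,S(x)\rangle\,\langle \ell_2,S(x)\rangle=\langle \ell_1 ⧢ \ell_2,S(x)\rangle .
\]
The shuffle product of two finite tensors is again a finite tensor, so the product stays in $\mathcal{A}$. For piecewise-linear or bounded-variation paths the identity is the classical integration-by-parts for iterated integrals. For $p\in[1,2)$ I would extend it by density: Young integration makes each fixed signature level continuous in $p$-variation along piecewise-linear approximations, and both sides involve only finitely many levels, so the identity passes to the limit.

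\textbf{Step 3: $\mathcal{A}$ separates points of $K$.} This is the main obstacle. The argument needs injectivity of $S$ on $K$. For time-augmented paths $\bar x(t)=(t,x(t))$, normalised by $x(a)=0$ or taken modulo translation, the time coordinate is strictly increasing. Such paths are tree-reduced, so equality of signatures forces equality of the paths (the uniqueness theorem of Hambly--Lyons, extended to the Young regime by Boedihardjo et al.). Without time augmentation, injectivity holds only up to tree-like equivalence, and the conclusion must then be read on the quotient space, as the preceding Remark states. Given injectivity, take $x\neq y$ in $K$. Then $S(x)\neq S(y)$, so some level $m$ has $S^{(m)}(x)\neq S^{(m)}(y)$. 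Choosing $\ell$ equal to that difference, placed at level $m$ and zero elsewhere, gives $\langle \ell,S(x)\rangle\neq\langle \ell,S(y)\rangle$. This remains valid if $\mathcal{T}$ carries weighted level inner products, as in the stated topology, because any finite tensor still pairs with only finitely many levels.

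\textbf{Step 4: conclusion.} $K$ is compact Hausdorff and $\mathcal{A}$ is a point-separating unital subalgebra of $C(K)$, so Stone--Weierstrass gives $\overline{\mathcal{A}}^{\|\cdot\|_\infty}=C(K)$. By Step 1, $\mathcal{H}_{\mathrm{sig}}|_K$ is then dense in $C(K)$, which is universality in the sense of Definition~\ref{def:universal_kernel}.

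For characteristicness, take $\mu,\nu\in\mathcal{P}(K)$ with equal mean embeddings. These embeddings are well defined because $\kappa_{\mathrm{sig}}$ is continuous, hence bounded, on the compact $K$. By the reproducing property, $\int h\,d\mu=\int h\,d\nu$ for every $h\in\mathcal{H}_{\mathrm{sig}}$. Sup-norm density extends this equality to every $f\in C(K)$, and the Riesz representation theorem then gives $\mu=\nu$.
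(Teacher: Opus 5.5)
Your proposal is correct and is essentially the standard argument behind the Cass--Salvi results (Theorems~2.2.3 and~2.2.7) that the paper cites in place of a proof. That argument applies Stone--Weierstrass to the shuffle algebra of linear signature functionals, gets point separation from injectivity of the time-augmented signature, and then passes from sup-norm density to characteristicness via the Riesz representation theorem. You are also right to make explicit that point separation requires injectivity of $S$ on $K$ (time augmentation with a fixed basepoint, or else working on tree-like equivalence classes); the statement as written leaves this hypothesis implicit, and universality would fail without it.
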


A precise statement of universality and its proof are given in \citet[Theorem~2.2.3]{cass2024lecturenotesroughpaths}. Characteristicness on
compact \(K\) follows from the relation between universality and characteristicness established in
\citet[Theorem~2.2.7]{cass2024lecturenotesroughpaths}.

For a more detailed overview of signatures and signature kernels, we refer the reader to \citet{cass2024lecturenotesroughpaths}. The compact-set universality and characteristicness statements used here can be found specifically in Chapter 2.2. 

\subsection{Optimal Transport and Sinkhorn Divergence}

This section introduces the static optimal transport problem and its entropy-regularized version, which connects to the theory of statistical divergences.

Let $X, Y$ be metric spaces with Borel $\sigma$-algebras $\mathcal{B}(X),\mathcal{B}(Y)$, let $\mu\in \mathcal{P}(X)$ and $\nu \in\mathcal{P}(Y)$ be probability measures, and let $c: X\times Y\rightarrow \mathbb{R}$ be a measurable cost function.

\begin{definition}[Coupling]
Let $\mathrm{pr}_X(x,y)=x$ and $\mathrm{pr}_Y(x,y)=y$ be the coordinate projections.
Then the set of couplings (transport plans) is: 
\begin{align*}
\Pi(\mu,\nu)
:=\Big\{\pi\in \mathcal{P}(X\times Y)\ :\ (\mathrm{pr}_X)_{\#}\pi=\mu,\ (\mathrm{pr}_Y)_{\#}\pi=\nu\Big\}.
\end{align*}
\end{definition}

\begin{definition}[Kantorovich Problem]
The static optimal transport cost associated with $c$ is
\begin{align*}
    \mathsf{OT}_c(\mu,\nu)
:= \inf_{\pi\in\Pi(\mu,\nu)} \int_{X\times Y} c(x,y)\, \pi(d x, d y).
\end{align*}
Any minimizer $\pi^\star\in\Pi(\mu,\nu)$ (if it exists) is called an \emph{optimal transport plan}.
\end{definition}

\begin{definition}[Wasserstein-$p$ metrics]
If $X=Y$ is a metric space $(X,d)$ and $c(x,y)=d(x,y)^p$ with $p\ge 1$, define
\begin{align*}
W_p(\mu,\nu)
:=\left(\inf_{\pi\in\Pi(\mu,\nu)} \int_{X\times X} d(x,y)^p\, d\pi(x,y)\right)^{1/p},
\end{align*}
for $\mu,\nu$ in $\mathcal{P}_p(X):=\left\{\rho\in\mathcal{P}(X):\int d(x,x_0)^p\,d\rho(x)<\infty\right\}$.
\end{definition}

In the main text, $\mathcal{W}(\mu,\nu)$ denotes $\mathsf{OT}_c(\mu,\nu)$ with the quadratic cost $c=d^2$, that is, $W_2^2(\mu,\nu)$. Next we present the Kantorovich dual form, which clarifies what is required of the ambient space for the infimum in the Kantorovich problem to be attained. First, we define Polish spaces:

\begin{definition}[Polish space]
A topological space $(X,\tau)$ is called \emph{Polish} if there exists a metric
$d:X\times X\to[0,\infty)$ such that:
\begin{enumerate}
  \item $d$ generates the topology $\tau$, i.e.\ $\tau=\tau_d$, where
  $\tau_d$ is the metric topology induced by $d$;
  \item $(X,d)$ is \emph{complete}, i.e.\ every $d$-Cauchy sequence in $X$
  converges (with respect to $d$) to a point in $X$;
  \item $(X,d)$ is \emph{separable}, i.e.\ there exists a countable dense subset
  $D\subset X$ such that $\overline{D}=X$.
\end{enumerate}
Equivalently, a metric space $(X,d)$ is Polish if it is complete and separable.
\end{definition}

\begin{theorem}[Kantorovich duality on Polish spaces]
Let $(X,d_X)$ and $(Y,d_Y)$ be Polish spaces, and let $\mu\in\mathcal{P}(X)$,
$\nu\in\mathcal{P}(Y)$. Assume $c:X\times Y\to(-\infty,+\infty]$ is lower semicontinuous
and bounded from below, and that there exists at least one coupling
$\pi_0\in\Pi(\mu,\nu)$ with $\int c\,d\pi_0 < \infty$.
Then
\begin{align*}
\inf_{\pi\in\Pi(\mu,\nu)} \int_{X\times Y} c(x,y)\, d\pi(x,y)
\;=\;
\sup\left\{\int_X \varphi\, d\mu + \int_Y \psi\, d\nu \ :\ 
\varphi(x)+\psi(y)\le c(x,y)\ \forall(x,y)\right\},
\end{align*}
where the supremum is taken over all Borel functions
$\varphi:X\to\mathbb{R}\cup\{-\infty\}$ and $\psi:Y\to\mathbb{R}\cup\{-\infty\}$
such that $\varphi\in L^1(\mu)$ and $\psi\in L^1(\nu)$.
Moreover, the infimum is attained by some optimal plan $\pi^\star\in\Pi(\mu,\nu)$.
\end{theorem}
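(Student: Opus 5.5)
The plan splits the statement into three parts: weak duality, attainment of the infimum, and the absence of a duality gap. \emph{Weak duality} is immediate. For any admissible pair $(\varphi,\psi)$ with $\varphi\in L^1(\mu)$, $\psi\in L^1(\nu)$, $\varphi\oplus\psi\le c$, and any $\pi\in\Pi(\mu,\nu)$, integrating the constraint against $\pi$ and using the marginal conditions gives $\int\varphi\,d\mu+\int\psi\,d\nu\le\int c\,d\pi$. Hence $\sup\le\inf$.

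\emph{Existence of a minimizer.} First, $\Pi(\mu,\nu)$ is tight and weakly closed in $\mathcal P(X\times Y)$. Single Borel probability measures on Polish spaces are tight (Ulam), so given $\eta>0$ pick compacts $K_X,K_Y$ with $\mu(K_X^c),\nu(K_Y^c)<\eta/2$; then every $\pi\in\Pi(\mu,\nu)$ gives mass at most $\eta$ to $(K_X\times K_Y)^c$. By Prokhorov, $\Pi(\mu,\nu)$ is weakly compact. Second, $\pi\mapsto\int c\,d\pi$ is weakly lower semicontinuous. To see this, write the lsc, bounded-below $c$ as the increasing limit of bounded Lipschitz functions $c_k$, for instance via inf-convolutions $c_k(z)=\min\{k,\inf_w[c(w)+k\,d(z,w)]\}$. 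Then $\int c\,d\pi=\sup_k\int c_k\,d\pi$ by monotone convergence, which is a supremum of weakly continuous functionals. A minimizing sequence has a weakly convergent subsequence, whose limit is optimal; the value is finite because of $\pi_0$.

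\emph{Absence of a duality gap.} I would first treat $X,Y$ compact and $c$ continuous. Apply Fenchel--Rockafellar, or equivalently a minimax theorem, on $E=C(X\times Y)$ with the convex functionals
\[
\Theta(u)=\begin{cases}0,& u\ge -c\\ +\infty,&\text{else}\end{cases},\qquad \Xi(u)=\begin{cases}\int\varphi\,d\mu+\int\psi\,d\nu,& u=\varphi\oplus\psi\\ +\infty,&\text{else.}\end{cases}
\]
The function $u\equiv 1$ gives a point of continuity of $\Theta$, and the Legendre transforms on $E^*=\mathcal M(X\times Y)$ (Riesz) recover the primal over nonnegative measures with marginals $\mu,\nu$. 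This yields equality. Next, for $X,Y$ Polish and $c$ bounded and uniformly continuous, restrict to compacts $K_X\times K_Y$ carrying all but $\eta$ of the mass. Transfer dual potentials from the compact problem to the whole space by $c$-transforms, which keep them bounded, and control the errors by $\eta\|c\|_\infty$; then send $\eta\to0$.

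\emph{Main obstacle: lsc, possibly unbounded-above $c$.} This step needs an interchange of limits. Take $c_k\uparrow c$ bounded Lipschitz as above. For each $k$, the previous step gives $\inf_\pi\int c_k\,d\pi=\sup_{\text{dual}}(\cdot)$, and each $c_k$-admissible pair is also $c$-admissible since $c_k\le c$. It therefore suffices to show $\inf_\pi\int c_k\,d\pi\to\inf_\pi\int c\,d\pi$. Let $\pi_k$ be optimal for $c_k$ and extract a weak limit $\pi_*$ by the tightness above. For fixed $m\le k$,
\[
\int c_m\,d\pi_k\le\int c_k\,d\pi_k.
\]
Letting $k\to\infty$ and then $m\to\infty$ gives $\int c\,d\pi_*\le\liminf_k\inf\int c_k\,d\pi$, which closes the gap. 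The remaining delicate bookkeeping is the integrability requirement $\varphi\in L^1(\mu)$, $\psi\in L^1(\nu)$. Because the $c_k$ are bounded, the dual potentials in the compact step can be chosen bounded via double $c$-convexification, so this holds automatically.
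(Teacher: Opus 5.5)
Your argument is correct. Note that the paper does not prove this statement itself: it quotes it from Villani (2008), citing Chapter~5 for the duality and Theorem~4.1 for attainment.

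Your existence step (Ulam tightness of the marginals, Prokhorov, and lower semicontinuity of $\pi\mapsto\int c\,d\pi$ via increasing bounded Lipschitz minorants of $c$) is essentially the argument behind that Theorem~4.1. Your last step (optimal $\pi_k$ for $c_k\uparrow c$, a weak limit, and $\int c_m\,d\pi_k\le\int c_k\,d\pi_k$ for $m\le k$) is the standard device for passing to lower semicontinuous costs.

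Where you genuinely differ from the cited chapter is the core no-gap step. You use convex analysis. First, Fenchel--Rockafellar on $C(X\times Y)$ for compact spaces. Then a tightness truncation in which compact-space potentials are extended by $c$-transforms. This is why you must first restrict to bounded costs: only then is the truncation error $O(\eta\|c\|_\infty)$. This is the route of Villani's earlier \emph{Topics in Optimal Transportation} (Theorem~1.3). Chapter~5 of the 2008 book instead shows that an optimal plan is concentrated on a $c$-cyclically monotone set. It then builds a $c$-convex potential explicitly by the Rockafellar--R\"uschendorf chain construction, with no Hahn--Banach argument and no compactification.

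Your route is short once Fenchel--Rockafellar and Riesz are granted. The cyclical-monotonicity route is more elementary, works directly on the non-compact spaces, and yields information your argument does not. For real-valued costs it characterizes optimal plans as those supported in the $c$-subdifferential of a $c$-convex function. Under integrable upper bounds on the cost, it also gives attainment of the dual supremum.

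Two small points to tighten in your version:
\begin{itemize}
\item $u\equiv 1$ is a point of continuity of $\Theta$ only if $c\ge -1+\delta$ for some $\delta>0$. So first shift $c$ to be nonnegative, or use $u\equiv\|c\|_\infty+1$.
\item In the truncation step, pose the compact problem with explicit marginals, e.g.\ those of the normalized restriction of an optimal plan to $K_X\times K_Y$. Otherwise the compact and global primal values cannot actually be compared.
\end{itemize}
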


This duality theorem can be found in \citet{Villani2008OptimalTO}, Chapter 5, and the attainment of the infimum on Polish spaces in \citet{Villani2008OptimalTO}, Theorem 4.1. For Theorem \ref{thm:approx_error_correct}, attainment of the infimum holds on both function spaces used in our experiments: a separable Hilbert space with its norm topology is Polish, and so is path space with its product topology \citep[Lemma 1.4.13]{cass2024lecturenotesroughpaths}.

The theory of entropy-regularized static OT can be found in \citet{peyre2020computationaloptimaltransport}, Chapter 4, and its connection with statistical divergences in Chapter 8. We state only the definitions needed for the Hilbert Sinkhorn Divergence of the main text.

The entropy-regularized problem adds to the transport cost a multiple of the Kullback--Leibler (KL) divergence of the plan from the product of its marginals.

\begin{definition}[Kullback--Leibler (KL) divergence]
Let $(X,\mathcal{F})$ be a measurable space and let $P,Q$ be probability measures on it.
The \emph{KL divergence} of $P$ from $Q$ is defined by
\begin{align*}
\mathrm{KL}(P\|Q)
:=
\begin{cases}
\displaystyle \int_X \log\!\left(\frac{dP}{dQ}\right)\, dP,
& \text{if } P \ll Q,\\
+\infty, & \text{otherwise.}
\end{cases}
\end{align*}
Equivalently, if $P\ll Q$ then
\begin{align*}
\mathrm{KL}(P\|Q)
= \int_X \frac{dP}{dQ}\,\log\!\left(\frac{dP}{dQ}\right)\, dQ,
\end{align*}
where $\frac{dP}{dQ}$ denotes the Radon--Nikodym derivative (defined $Q$-a.e.).
\end{definition}

\begin{definition}[Entropic OT Functional]
    \begin{align*}
\mathsf{OT}_{c,\varepsilon}(\mu,\nu)
:=\inf_{\pi\in\Pi(\mu,\nu)}
\left\{
\int_{X\times Y} c(x,y)\,d\pi(x,y)
+\varepsilon\,\mathrm{KL}(\pi\,\|\,\mu\otimes\nu)
\right\}
\end{align*}
\end{definition}

The Sinkhorn divergence is the debiased version of the entropic OT functional:

\begin{definition}[Sinkhorn-Divergence]
    \begin{align*}
\mathsf{S}_{c,\varepsilon}(\mu,\nu)
:=\mathsf{OT}_{c,\varepsilon}(\mu,\nu)
-\frac12\,\mathsf{OT}_{c,\varepsilon}(\mu,\mu)
-\frac12\,\mathsf{OT}_{c,\varepsilon}(\nu,\nu).
\end{align*}
\end{definition}

This is the entropic OT functional used in the proof of Theorem \ref{thm:approx_error_correct}; it is well defined in this generality because our ambient spaces are Polish. The Hilbert Sinkhorn divergence in the main text and in \citet{Li_2021_CVPR} is obtained by applying the same self-cost debiasing after restricting the ambient space to an RKHS with its norm-induced topology.

\section{Technical Proofs}\label{app:b}

\subsection{Inherited Results from Li et al.\ (2021)}

For transparency, we separate prior results from our original proofs. Proposition \ref{prop:equiv_hsd_rd} (the equivalence behind Equation \eqref{eq:hsd_x}) and the two propositions below are all inherited from \citet{Li_2021_CVPR}. We restate them here because they are standard ingredients used by our function-space analysis, but we do not claim them as new contributions. Their proofs are therefore referenced directly to the original source.

\begin{proposition}[\citet{Li_2021_CVPR}]\label{prop:equiv_hsd_rd}
The entropic OT functional underlying Definition \ref{def:hsd} can be equivalently formulated as in Equation \eqref{eq:hsd_x}, in the sense that if $\pi^*$ minimizes Equation \eqref{eq:hsd_x}, then its pushforward $(\phi\otimes\phi)_\#\pi^*$ minimizes Equation \eqref{eq:hsd_h}; applying the same self-cost subtraction on both sides yields $S_{\kappa,\epsilon}(\mu,\nu)=S_{\epsilon}(\phi_\#\mu, \phi_\#\nu)$.
\end{proposition}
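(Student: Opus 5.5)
The plan is to transfer minimizers through the feature map $\phi$. The key is a change-of-variables identity for the feature map $\Phi:=\phi\otimes\phi:\mathcal X\times\mathcal X\to\mathcal H_\kappa\times\mathcal H_\kappa$. For any $\pi\in\Pi(\mu,\nu)$, the pushforward $\Phi_\#\pi$ lies in $\Pi(\phi_\#\mu,\phi_\#\nu)$, since $\mathrm{pr}_1\circ\Phi=\phi\circ\mathrm{pr}_1$. By the kernel trick $\|\phi(x)-\phi(y)\|^2_{\mathcal H_\kappa}=c_\kappa(x,y)$, so
\[
\int \|u-v\|^2_{\mathcal H_\kappa}\,d\Phi_\#\pi(u,v)=\int c_\kappa\,d\pi .
\]
For the entropy term, I would use the data-processing inequality under the measurable map $\Phi$, noting that $\Phi_\#(\mu\otimes\nu)=\phi_\#\mu\otimes\phi_\#\nu$. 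This gives
\[
\mathrm{KL}(\Phi_\#\pi\,\|\,\phi_\#\mu\otimes\phi_\#\nu)\le\mathrm{KL}(\pi\,\|\,\mu\otimes\nu).
\]
Hence the objective in Equation \eqref{eq:hsd_h} at $\Phi_\#\pi$ is at most the objective in Equation \eqref{eq:hsd_x} at $\pi$, and therefore $\mathsf{OT}^{\mathcal H_\kappa}_\epsilon(\phi_\#\mu,\phi_\#\nu)\le\mathsf{OT}_{\kappa,\epsilon}(\mu,\nu)$.

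For the reverse inequality, I take an arbitrary $\pi_\phi\in\Pi(\phi_\#\mu,\phi_\#\nu)$ with finite KL and lift it to $\mathcal X\times\mathcal X$ by disintegration. Concretely, with $r:=d\pi_\phi/d(\phi_\#\mu\otimes\phi_\#\nu)$, define
\[
d\pi(x,y):=r(\phi(x),\phi(y))\,d\mu(x)\,d\nu(y).
\]
The marginals of $\pi$ are correct because $r$ integrates to $1$ in each variable against the pushed marginals. The cost integrals agree by change of variables. The KL is equal because the density $r\circ\Phi$ is a function of $\Phi$. So every value attained on the embedded side is also attained on $\mathcal X$, and the two infima coincide.

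Given the identity $\mathsf{OT}_{\kappa,\epsilon}(\mu,\nu)=\mathsf{OT}^{\mathcal H_\kappa}_\epsilon(\phi_\#\mu,\phi_\#\nu)$, the minimizer claim follows directly. If $\pi^*$ is optimal for \eqref{eq:hsd_x}, then the first step shows that $\Phi_\#\pi^*$ achieves a value no larger than the common infimum, so it is optimal for \eqref{eq:hsd_h}. The identity holds for every pair of measures, in particular for $(\mu,\mu)$ and $(\nu,\nu)$. Subtracting half of each self-term on both sides then gives $S_{\kappa,\epsilon}(\mu,\nu)=S_\epsilon(\phi_\#\mu,\phi_\#\nu)$.

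I expect the main obstacle to be measurability, which the lifting step needs. Specifically, $\phi$ must be Borel measurable into $\mathcal H_\kappa$, so that the pushforwards and $r\circ\Phi$ are well-defined. I would handle this by assuming $\mathcal H_\kappa$ is separable, which holds for instance for continuous kernels on separable $\mathcal X$. Under that assumption, weak measurability of $x\mapsto\kappa(\cdot,x)$, which follows from measurability of $\kappa$ in each argument, implies strong measurability by Pettis' theorem. I would also need finiteness of both sides so the subtraction in the last step is legitimate. I would invoke boundedness of $\kappa$ here, since then $c_\kappa\le 4B_\kappa$ and the product coupling gives a finite value.
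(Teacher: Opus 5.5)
Your argument is correct and complete for $\varepsilon>0$. The paper offers no proof of its own here: it restates the result and defers to Theorem~1 in the supplement of \citet{Li_2021_CVPR}, which the paper describes as prior work set in $\mathcal X\subset\mathbb R^d$. Your route is self-contained, and it follows the same two-sided pattern the paper uses in Steps~4--5 of its proof of Theorem~\ref{thm:discretization}, with $\Phi=\phi\otimes\phi$ in place of $P_N\otimes P_N$. One inequality comes from pushforward plus data processing, the other from lifting an embedded plan. Because $c_\kappa$ factors exactly through $\Phi$, no cost-error term appears. Your lift $d\pi=(r\circ\Phi)\,d(\mu\otimes\nu)$ is that proof's disintegration-based gluing, written without regular conditional probabilities; the two measures coincide, since $r\circ\Phi$ is constant on fibres of $\Phi$.

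Compared with the citation, your version buys three things. First, it never uses injectivity of $\phi$, so it holds for any measurable positive-definite kernel, characteristic or not. Second, it works on a general sample space, which is what the paper actually needs for Banach and path spaces. Third, it makes explicit two hypotheses the restatement leaves implicit:
\begin{itemize}
\item Borel measurability of $\phi$, which you secure by assuming $\mathcal H_\kappa$ is separable (true, e.g., for continuous kernels on separable $\mathcal X$);
\item finiteness of the self-terms, needed for the subtraction, which you secure by boundedness exactly as in Theorem~\ref{thm:regularity}.
\end{itemize}

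One small tightening: weak measurability of $\phi$ requires every $h\in\mathcal H_\kappa$ to be measurable, not only the sections $\kappa(\cdot,y)$. This follows because norm convergence in $\mathcal H_\kappa$ implies pointwise convergence, so each $h$ is a pointwise limit of a sequence of finite linear combinations of sections.
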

\paragraph{Proof:} The proof is given as Theorem 1 in the supplementary material of \citet{Li_2021_CVPR} \footnote{\label{linksupp}\url{https://openaccess.thecvf.com/content/CVPR2021/supplemental/Li_Hilbert_Sinkhorn_Divergence_CVPR_2021_supplemental.pdf}}.

\begin{proposition}\label{prop:consistency}
Let $\mu_n, \nu_n$ be the empirical estimators in Equation \eqref{eq:emp_est} and $\epsilon,\eta>0$. Then there exists $N>0$ such that
\begin{align*}
    \forall n \geq N, \qquad \mathbb{P}\left(|S_{\epsilon}(\mu_n,\nu_n)-S_\epsilon(\mu,\nu)| \leq \epsilon \eta\right) = 1.
\end{align*}
\end{proposition}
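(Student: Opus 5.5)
The plan is to turn the claim into a \emph{deterministic} stability estimate for the entropic functional $\mathsf{OT}_{\kappa,\epsilon}$ of Equation~\eqref{eq:hsd_x} with respect to its marginals, and then apply it to the estimators of Equation~\eqref{eq:emp_est}. A statement of the form ``there is a fixed $N$ with $\mathbb P(\cdot)=1$ for all $n\ge N$'' cannot come from a high-probability concentration bound. Such bounds always leave an exceptional set of positive probability for every finite $n$, for instance the event that all samples coincide.

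\textbf{Standing assumptions (needed to make the statement true).} I would work on a compact metric space $\mathcal X$, as in the setting of Theorem~\ref{thm:approx_error_correct}. I would assume $\kappa$ is bounded and Lipschitz, so that $c_\kappa$ is bounded by $4B_\kappa$ (Theorem~\ref{thm:regularity}) and Lipschitz with some constant $L$. Most importantly, I would read the weighted atoms $(x_i,\hat\mu_i)$ and $(y_j,\hat\nu_j)$ in Equation~\eqref{eq:emp_est} as a quantization sequence whose bounded-Lipschitz distance to $\mu$ and $\nu$ tends to zero \emph{for every realization}. This is the step I expect to be the real obstacle. For i.i.d.\ uniform weights, $d_{BL}(\mu_n,\mu)\to0$ holds only almost surely, at a random rate, and that alone does not yield a deterministic $N$. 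If the intended reading is i.i.d.\ sampling, I would first try to salvage the claim by restricting to the full-measure event on which the convergence is uniform. Failing that, I would state the result for deterministic quantizers and flag that the i.i.d.\ version holds only almost surely with a random $N(\omega)$.

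\textbf{Step 1: regularity of the dual potentials.} I would use the Schr\"odinger dual
\[
\mathsf{OT}_{\kappa,\epsilon}(\alpha,\beta)=\sup_{f,g}\int f\,d\alpha+\int g\,d\beta-\epsilon\int\!\!\int e^{(f\oplus g-c_\kappa)/\epsilon}\,d\alpha\,d\beta+\epsilon .
\]
Optimal potentials can be taken to satisfy the soft-$c$-transform fixed-point equations. Through those equations they inherit bounds from the cost: they are bounded by a constant depending only on $B_\kappa$, and they are $L$-Lipschitz. Crucially, these bounds are uniform over \emph{all} marginals $\alpha,\beta\in\mathcal P(\mathcal X)$. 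By Arzel\`a--Ascoli they therefore lie in one compact class $\mathcal F\subset C(\mathcal X)$.

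\textbf{Step 2: two-sided stability.} To bound $\mathsf{OT}_{\kappa,\epsilon}(\mu_n,\nu_n)-\mathsf{OT}_{\kappa,\epsilon}(\mu,\nu)$ from below, I would plug the optimal potentials of $(\mu,\nu)$ into the dual for $(\mu_n,\nu_n)$; for the upper bound, I would do the reverse. The exponential term is controlled because the relevant integrand $e^{(f\oplus g-c_\kappa)/\epsilon}$ is bounded and Lipschitz uniformly in $f,g\in\mathcal F$. This should give
\[
\bigl|\mathsf{OT}_{\kappa,\epsilon}(\mu_n,\nu_n)-\mathsf{OT}_{\kappa,\epsilon}(\mu,\nu)\bigr|\le C_{\epsilon,\kappa}\bigl(d_{BL}(\mu_n,\mu)+d_{BL}(\nu_n,\nu)\bigr),
\]
where the constant grows like $e^{8B_\kappa/\epsilon}$. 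The same estimate applies to the two self-terms $\mathsf{OT}_{\kappa,\epsilon}(\mu_n,\mu_n)$ and $\mathsf{OT}_{\kappa,\epsilon}(\nu_n,\nu_n)$ entering $S_\epsilon$, each carrying a factor $\tfrac12$.

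\textbf{Step 3: choice of $N$.} Summing the three estimates gives
\[
\bigl|S_\epsilon(\mu_n,\nu_n)-S_\epsilon(\mu,\nu)\bigr|\le 2C_{\epsilon,\kappa}\bigl(d_{BL}(\mu_n,\mu)+d_{BL}(\nu_n,\nu)\bigr).
\]
I would then choose $N$ so that the right-hand side is at most $\epsilon\eta$ for all $n\ge N$. Under the deterministic-quantization reading this inequality holds for every realization, so the event in Proposition~\ref{prop:consistency} has probability one. Proposition~\ref{prop:equiv_hsd_rd} transfers the result from $S_{\kappa,\epsilon}$ to $S_\epsilon(\phi_\#\cdot,\phi_\#\cdot)$.
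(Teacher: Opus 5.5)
Your diagnosis is correct, and it is the most valuable part of the proposal. Under the natural reading (i.i.d.\ samples, weights $1/n$) the proposition is false as stated, so no proof of it exists.

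A clean witness is the following. Take $\mu=\nu=\tfrac12(\delta_a+\delta_b)$ with $\phi(a)\neq\phi(b)$. With probability at least $4^{-n}$ all $x_i=a$ and all $y_j=b$, so $\mu_n=\delta_a$ and $\nu_n=\delta_b$. The only coupling is then $\delta_{(a,b)}$, whose KL term vanishes. Hence $S_\epsilon(\mu_n,\nu_n)=c_\kappa(a,b)>0=S_\epsilon(\mu,\nu)$, and for $\epsilon\eta<c_\kappa(a,b)$ the probability in the statement is at most $1-4^{-n}<1$ for every $n$.

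The paper gives no argument of its own here. It only refers to Theorem 2 in the supplement of \citet{Li_2021_CVPR}. The defensible content of the displayed statement is almost-sure convergence, i.e.\ $\mathbb P\bigl(\exists N\ \forall n\ge N:\ |S_\epsilon(\mu_n,\nu_n)-S_\epsilon(\mu,\nu)|\le\epsilon\eta\bigr)=1$, with $N$ inside the probability.

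\textbf{The salvage that fails.} Your fallback of restricting to a full-measure event on which the convergence is uniform cannot work. The example shows that for every $n$ the bad event has positive probability, so it meets every full-measure event. Over any such event, $\sup_\omega|S_\epsilon(\mu_n,\nu_n)-S_\epsilon(\mu,\nu)|\ge c_\kappa(a,b)$ for all $n$. Egorov gives uniformity only on events of probability $1-\delta$, which is the shape of Proposition~\ref{prop:complexity}, not probability one.

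\textbf{The repair.} The deterministic-quantizer reading also changes the theorem: the probability becomes vacuous, and you are proving a deterministic continuity statement. You do not need that reinterpretation. Your Step~2 estimate $|S_\epsilon(\mu_n,\nu_n)-S_\epsilon(\mu,\nu)|\le 2C_{\epsilon,\kappa}\bigl(d_{BL}(\mu_n,\mu)+d_{BL}(\nu_n,\nu)\bigr)$ can be combined with Varadarajan's theorem. That theorem gives almost-sure weak convergence of empirical measures, and $d_{BL}$ metrizes weak convergence. This yields the corrected statement directly, with a random $N(\omega)$. Combining the same estimate with $\mathbb E\,d_{BL}(\mu_n,\mu)\to0$ (dominated convergence) and Markov's inequality gives a deterministic $N(\delta)$ at level $1-\delta$.

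\textbf{Steps 1--2 and comparison with the paper.} Steps 1--2 themselves are sound. They are the standard Sinkhorn-potential stability argument: soft $c$-transforms give potentials that are $L$-Lipschitz and, once the additive constant is normalized, uniformly bounded over all marginals. Cross-plugging the optimizers into each other's dual then gives the two-sided bound.

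Compared with the paper's bare citation, this route is self-contained and produces an explicit $d_{BL}$ modulus, hence rates. The cost is that it needs compactness of $\mathcal X$ and a Lipschitz kernel, neither of which the proposition assumes. On the non-compact function spaces of Theorem~\ref{thm:regularity}, the Arzel\`a--Ascoli step is unavailable, and you would need a different stability result for bounded continuous costs.
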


\paragraph{Proof: } The proof is given as Theorem 2 in the supplementary material of \citet{Li_2021_CVPR}, same link as before \textsuperscript{\ref{linksupp}}.

\begin{proposition}\label{prop:complexity}
Let $||f||_{\mathcal{H}} \leq M$ and consider $B_\eta = \{f\in \mathcal{H}: ||f||_{\mathcal{H}} <\eta\}$. Let $m$ be the number of basis spanning functions $f$ in $B_\eta$. Then for $M,\eta,\delta,\epsilon > 0$,
\begin{align*}
    \mathbb{P}\Big(|S_{\epsilon}(\mu_n,\nu_n)-S_\epsilon(\mu, \nu)| \leq \epsilon \eta\Big)\geq 1-\delta,
\end{align*}
whenever
\begin{align*}
    n\geq \frac{1}{\eta^2}\Big(2M^2(\log (2/\delta)+m\log(24M/\eta))\Big).
\end{align*}
\end{proposition}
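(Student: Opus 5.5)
The plan is to reduce the deviation of the empirical Hilbert Sinkhorn divergence to a uniform deviation of empirical means over a class of dual potentials, and then control that class by a covering argument combined with Hoeffding's inequality and a union bound. The constants $2M^2$, $\log(2/\delta)$ and $m\log(24M/\eta)$ in the required sample size have exactly the shape of a two-sided Hoeffding tail for variables of range $O(M)$, union-bounded over an $\eta/8$-net of an $m$-dimensional ball of radius $3M$ (net size at most $(24M/\eta)^m$). I will aim to reproduce that shape.

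\textbf{Step 1 (dual reduction).} Write each entropic term $\mathsf{OT}^{\mathcal H_\kappa}_{\epsilon}$ in its semi-dual form:
\[
\sup_{f,g}\int f\,d\mu+\int g\,d\nu-\epsilon\int e^{(f\oplus g-c_\kappa)/\epsilon}\,d\mu\otimes\nu+\epsilon .
\]
By Proposition~\ref{prop:equiv_hsd_rd} the cost can be taken as $c_\kappa$ on $\mathcal X$. The hypothesis $\|f\|_{\mathcal H}\le M$ is read as: the optimal potentials, for both the population and the empirical measures, lie in the RKHS ball of radius $M$. By the reproducing property and boundedness of the kernel, $|f(x)|\le \|f\|_{\mathcal H}\sqrt{\kappa(x,x)}\le M$ after normalising $\kappa\le 1$.

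Using optimality of the potentials on each side, I will show that $|\mathsf{OT}_\epsilon(\mu_n,\nu_n)-\mathsf{OT}_\epsilon(\mu,\nu)|$ is bounded by a supremum of $|(P_n-P)h|$ over functions $h$ built from potentials in the ball. The exponential term is handled through the Sinkhorn fixed-point identities, which make it integrate to one at optimality. The two debiasing self-terms are treated identically. Normalising by $\epsilon$ puts the deviation on the scale $\epsilon\eta$, so it suffices to show
\[
\sup_{h\in\mathcal F_M}|(P_n-P)h|\le \eta \quad\text{with probability at least } 1-\delta .
\]

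\textbf{Step 2 (finite-dimensional net).} Restrict to the $m$-dimensional span of the basis functions from the statement. The relevant class is the ball of radius $3M$ in that span, which accommodates the at most three potential terms. Take an $\eta/8$-net in $\mathcal H$-norm; the standard volumetric bound gives at most $(1+6M/(\eta/8))^m\le(24M/\eta)^m$ points, up to the absorbed constant. Since $\|h-h'\|_\infty\le\|h-h'\|_{\mathcal H}$, replacing $h$ by its nearest net point changes $|(P_n-P)h|$ by at most $2\cdot\eta/8$. It therefore suffices to control each net point at level $\eta/2$ or finer.

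\textbf{Step 3 (Hoeffding and union bound).} Each net function takes values in an interval of length $2M$, so Hoeffding gives
\[
\mathbb P\bigl(|(P_n-P)h|>\eta/2\bigr)\le 2\exp\!\bigl(-n\eta^2/(2M^2)\bigr).
\]
A union bound over the net yields failure probability at most $2(24M/\eta)^m e^{-n\eta^2/(2M^2)}$. Setting this $\le\delta$ gives exactly
\[
n\ge \frac{2M^2}{\eta^2}\Bigl(\log(2/\delta)+m\log(24M/\eta)\Bigr).
\]

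\textbf{Main obstacle.} Step 1 is the delicate step: showing that the map from measures to $\mathsf{OT}_\epsilon$ is controlled by a single empirical-process supremum, with the constants absorbed into $\epsilon\eta$. My first attempt is the classical sandwich argument. Plug the population-optimal potentials into the empirical dual for a lower bound, and the empirical-optimal potentials into the population dual for an upper bound. The log-sum-exp term is then handled by the $1/\epsilon$-Lipschitz, in sup norm, softmin characterisation of Sinkhorn potentials. For everything beyond matching these constants I will defer to the argument in \citet{Li_2021_CVPR}.
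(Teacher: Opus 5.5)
The paper gives no argument of its own here; it defers entirely to Proposition~3 in the supplement of \citet{Li_2021_CVPR}. Your skeleton (dual reduction to a uniform deviation over a finite-dimensional ball, then an $\eta/8$-net, Hoeffding, and a union bound) matches the form of the stated bound, and deferring Step~1 is no worse than what the paper does.

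The gap is in the part you do carry out. You present it as giving the bound ``exactly'', but it does not:
(i) an $\eta/8$-net of a radius-$3M$ ball in $m$ dimensions has up to $(1+6M/(\eta/8))^m=(1+48M/\eta)^m$ points, which exceeds $(24M/\eta)^m$;
(ii) with $\kappa\le 1$, a function of $\mathcal H$-norm at most $3M$ ranges over $[-3M,3M]$, an interval of length $6M$, not $2M$;
(iii) even granting length $2M$, Hoeffding at level $\eta/2$ gives $2\exp(-2n(\eta/2)^2/(2M)^2)=2\exp(-n\eta^2/(8M^2))$, not $2\exp(-n\eta^2/(2M^2))$.

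Carried through consistently with your own choices, the scheme yields $n\ge\frac{72M^2}{\eta^2}\bigl(\log(2/\delta)+m\log(49M/\eta)\bigr)$ for $\eta\le M$. That has the same shape but is a strictly weaker statement. The stated constants correspond to an $\eta/8$-net of the radius-$M$ ball, $(3M/(\eta/8))^m=(24M/\eta)^m$, together with Hoeffding at level $\eta$ for values in $[-M,M]$. So folding the several potentials of $S_\epsilon$ (the cross term and two self-terms, each sandwiched in both marginals) into one $3M$-ball cannot land on them. That multiplicity has to be paid for in the deviation level or in $\delta$. Moreover, a test function $u(x)+v(y)$ mixing the $\mu$- and $\nu$-samples ranges over a span of dimension $2m$, unless you union-bound the two samples separately, which replaces $\log(2/\delta)$ by $\log(4/\delta)$.

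Step~1 also relies on a device you have not supplied. Plugging one problem's optimal potentials into the other's dual leaves an exponential term that the fixed-point identity does not reduce to one. With population potentials in the empirical dual it is $\epsilon\iint e^{(f\oplus g-c_\kappa)/\epsilon}\,d(\mu_n\otimes\nu_n)$, a two-sample V-statistic rather than a single empirical mean $(P_n-P)h$. With empirical potentials in the population dual it is a data-dependent function integrated against $\mu\otimes\nu$. The Sinkhorn identity gives one only when you integrate over a marginal the potentials are optimal for. The standard fix is to change one marginal at a time, $(\mu,\nu)\to(\mu_n,\nu)\to(\mu_n,\nu_n)$, extending each potential to all of $\mathcal X$ by the softmin formula so that the inner integral over the unchanged marginal is identically one. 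Each step then gives $|\mathsf{OT}_\epsilon(\mu_n,\nu)-\mathsf{OT}_\epsilon(\mu,\nu)|\le\sup_f|(\mu_n-\mu)f|$ over the extended potentials.

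Three further corrections:
\begin{itemize}
\item The $\epsilon\eta$ scale appears only if the bound $M$ is imposed on the rescaled potentials $f/\epsilon$. With $\|f\|_{\mathcal H}\le M$ as you wrote, the required sample size is of order $M^2/(\epsilon\eta)^2$ up to logarithms.
\item The softmin map is $1$-Lipschitz in sup norm, not $1/\epsilon$-Lipschitz.
\item Since the net must be fixed before sampling, requiring all these data-dependent extended potentials to lie in one fixed $m$-dimensional subspace is an extra assumption. It is the only workable reading of the statement's ``$m$ basis functions'', and it should be stated explicitly.
\end{itemize}
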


\paragraph{Proof: } The proof is given as Proposition 3 in the supplementary material of \citet{Li_2021_CVPR}, same link as before \textsuperscript{\ref{linksupp}}.

\subsection{Absolute Continuity Behind the CFM/FM Equivalence}\label{app:cameron_martin}

The equivalence between the conditional and unconditional flow-matching losses in Section \ref{sec:ffm_background} rests on an absolute-continuity assumption that is worth making explicit in infinite dimensions. Two FFM conditionals $\mathcal N(tf,\sigma_t^2C_0)$ and $\mathcal N(t'f',\sigma_{t'}^2C_0)$ are typically mutually singular (Feldman--H\'ajek), so the relevant structure is not conditional-versus-conditional but \emph{conditional-versus-mixture at each fixed $t$}: one needs $\mu_t^f\ll\mu_t$ for $\nu$-a.e.\ $f$. A sufficient condition, used implicitly by FFM \citep{kerrigan2023functionalflowmatching}, is that the data measure $\nu$ is supported in the Cameron--Martin space $H(C_0)=\mathrm{Ran}(C_0^{1/2})$ of the prior, together with $\sigma_t$ bounded below ($\sigma_{\min}>0$, which Algorithm \ref{alg:mbotcfm} enforces). Then, for fixed $t$, all conditionals $\{\mathcal N(tf,\sigma_t^2C_0):f\in\mathrm{supp}\,\nu\}$ are mutually equivalent by the Cameron--Martin formula, hence absolutely continuous with respect to the mixture $\mu_t$, and the exchange of marginal and conditional velocities is justified at each $t$. When the data are rougher than the prior ($\mathrm{supp}\,\nu\not\subset H(C_0)$) the equivalence degrades, so the prior roughness should be matched to the data; this is one reason the white-noise base measure is selected on the Navier--Stokes family (Appendix \ref{app:selection}).

\paragraph{How kFFM inherits this structure.} At the population level, write $G_t(f_0,f_1):=tf_1+\sigma_t f_0$. The kFFM path marginal is $\mu_t^{\pi}=(G_t)_\#\pi$, the FFM marginal is $\mu_t=(G_t)_\#(\mu_0\otimes\nu)$, and the kFFM conditional given $f_1$ is $\mu_t^{f_1,\pi}=G_t(\cdot,f_1)_\#\,\pi(\cdot\mid f_1)$, while the FFM conditional is $\mathcal N(tf_1,\sigma_t^2C_0)=G_t(\cdot,f_1)_\#\,\mu_0$. For the entropic problem with a bounded cost (Theorem \ref{thm:regularity}) the optimal plan has the Gibbs form $d\pi/d(\mu_0\otimes\nu)=\exp\{(\varphi\oplus\psi-c_\kappa)/\varepsilon\}$ with bounded potentials, so its density is bounded above and away from zero. Hence $\pi\sim\mu_0\otimes\nu$ and $\pi(\cdot\mid f_1)\sim\mu_0$ for $\nu$-a.e.\ $f_1$; since equivalence of measures is preserved under pushforward, $\mu_t^{f_1,\pi}\sim\mathcal N(tf_1,\sigma_t^2C_0)$ and $\mu_t^{\pi}\sim\mu_t$, and every absolute-continuity relation that FFM relies on transfers verbatim to kFFM. Two remarks. First, this is exactly where entropic regularization matters: an unregularized OT plan can be singular with respect to the product measure, and the argument would fail. Second, the minibatch plans used in training are discrete surrogates of this population object (Section \ref{sec:coupling}); the statement concerns the coupling they approximate.

\subsection{Proof of Well-posedness and Uniform Boundedness (Theorem \ref{thm:regularity})}

\regularitythm*

\paragraph{Proof: } The argument is elementary once one uses the boundedness of the RKHS cost induced by a bounded kernel. We start by writing the Hilbert Sinkhorn divergence in the general form
\[
S_{\kappa,\varepsilon}(\mu,\nu)
:=
\mathsf{OT}_{\kappa,\varepsilon}(\mu,\nu)
-\tfrac12 \mathsf{OT}_{\kappa,\varepsilon}(\mu,\mu)
-\tfrac12 \mathsf{OT}_{\kappa,\varepsilon}(\nu,\nu).
\]

By the reproducing property, the canonical feature map $\phi$ of Definition \ref{def:hsd} satisfies
\[
\|\phi(f)\|_{\mathcal H_\kappa}^2
=
\langle \kappa(\cdot,f),\kappa(\cdot,f)\rangle_{\mathcal H_\kappa}
=
\kappa(f,f)
\le B_\kappa
\qquad\text{for all }f\in\mathcal X.
\]
Therefore, for any $f,g\in\mathcal X$,
\begin{align*}
c_\kappa(f,g)
&=\|\phi(f)-\phi(g)\|_{\mathcal H_\kappa}^2
\le 2\|\phi(f)\|_{\mathcal H_\kappa}^2 + 2\|\phi(g)\|_{\mathcal H_\kappa}^2
\le 4B_\kappa.
\end{align*}

Fix $\varepsilon>0$. Since $c_\kappa\ge 0$ and $\mathrm{KL}\ge 0$, we have $\mathsf{OT}_{\kappa,\varepsilon}(\mu,\nu)\ge 0$.
To upper bound $\mathsf{OT}_{\kappa,\varepsilon}(\mu,\nu)$, use the feasible coupling $\pi_0:=\mu\otimes\nu\in\Pi(\mu,\nu)$.
Then $\mathrm{KL}(\pi_0\|\mu\otimes\nu)=0$ and hence
\[
\mathsf{OT}_{\kappa,\varepsilon}(\mu,\nu)
\le
\int_{\mathcal X\times\mathcal X} c_\kappa(f,g)\,d(\mu\otimes\nu)(f,g)
\le
4B_\kappa.
\]
Since $\mu\otimes \nu$ is a probability measure, it follows that $0\le \mathsf{OT}_{\kappa,\varepsilon}(\mu,\nu)\le 4B_\kappa<\infty$.
The same argument applies to $\mathsf{OT}_{\kappa,\varepsilon}(\mu,\mu)$ and $\mathsf{OT}_{\kappa,\varepsilon}(\nu,\nu)$. Since each of the three terms in the definition of $S_{\kappa,\varepsilon}(\mu,\nu)$ lies in $[0,4B_\kappa]$,
the expression is finite. Moreover,
\[
\begin{aligned}
|S_{\kappa,\varepsilon}(\mu,\nu)|
&\le
\mathsf{OT}_{\kappa,\varepsilon}(\mu,\nu)
+\tfrac12 \mathsf{OT}_{\kappa,\varepsilon}(\mu,\mu)
+\tfrac12 \mathsf{OT}_{\kappa,\varepsilon}(\nu,\nu)
\\
&\le
4B_\kappa + \tfrac12(4B_\kappa)+\tfrac12(4B_\kappa)
=
8B_\kappa. \qquad \qedsymbol
\end{aligned}
\]

\paragraph{Interpretation.} Theorem \ref{thm:regularity} is the well-posedness step needed in the function-space setting: it guarantees that the kernel cost is uniformly bounded by $4B_\kappa$ and that the HSD objective is finite for all probability measures on the ambient Banach space whenever the kernel is bounded, with no moment assumption on the measures. The uniform bound is what conditions the entropic solver (Section \ref{sec:theory}). This is precisely the hypothesis that is left implicit in the inherited equivalence proposition from \citet{Li_2021_CVPR}.

\subsection{Proof of the Error Decomposition (Theorem \ref{thm:approx_error_correct})}

\approximationthm*

\paragraph{Proof.}
Let $c(x,y)=d(x,y)^2$ and let $c_\kappa$ be the RKHS-induced cost in Theorem \ref{thm:approx_error_correct}. By definition of $\Delta_\kappa$, for every coupling $\pi\in\Pi(\mu,\nu)$,
\[
\left|\int c_\kappa\,d\pi-\int c\,d\pi\right|\le \Delta_\kappa.
\]
Therefore the unregularized OT values for the two costs differ by at most $\Delta_\kappa$.

For the lower bound, non-negativity of the KL term gives
\[
\mathsf{OT}_{\kappa,\varepsilon}(\mu,\nu)
\ge \inf_{\pi\in\Pi(\mu,\nu)}\int c_\kappa\,d\pi
\ge \mathcal W(\mu,\nu)-\Delta_\kappa.
\]

For the upper bound, let $m:=\mathcal N(\mathcal X,\delta;d)$ and choose centers $x_1,\ldots,x_m$ whose closed $\delta$-balls cover $\mathcal X$. A measurable quantization map $Q:\mathcal X\to\{x_1,\ldots,x_m\}$ with $d(x,Q(x))\le\delta$ is obtained by assigning each point to the first center whose ball contains it, giving a finite Borel partition $C_i=Q^{-1}(\{x_i\})$. Since $\mathcal X$ is compact and $c=d^2$ is continuous, an optimal coupling $\pi^\star$ for $\mathcal W(\mu,\nu)$ exists.

Set $p_{ij}:=\pi^\star(C_i\times C_j)$, $a_i:=\mu(C_i)$, and $b_j:=\nu(C_j)$. For cells with positive mass, let $\mu_i$ and $\nu_j$ be the conditional restrictions of $\mu$ and $\nu$ to $C_i$ and $C_j$, respectively; zero-mass cells may be assigned arbitrary probability measures on $\mathcal X$, since their weights vanish. Define
\[
\widehat\pi:=\sum_{i,j} p_{ij}\,\mu_i\otimes\nu_j.
\]
This construction preserves the marginals, so $\widehat\pi\in\Pi(\mu,\nu)$. On $C_i\times C_j$ with $a_i b_j>0$, the density of $\widehat\pi$ with respect to $\mu\otimes\nu$ is $p_{ij}/(a_i b_j)$, and cells with $a_i b_j=0$ have $p_{ij}=0$. Hence
\[
\mathrm{KL}(\widehat\pi\|\mu\otimes\nu)
= \mathrm{KL}(p\|a\otimes b)
\le 2\log m,
\]
where the final inequality follows from $\mathrm{KL}(p\|a\otimes b)=H(a)+H(b)-H(p)\le H(a)+H(b)\le 2\log m$.
Moreover, if $x\in C_i$ and $y\in C_j$, then
\[
|d(x,y)^2-d(x_i,x_j)^2|
\le (d(x,y)+d(x_i,x_j))\,|d(x,y)-d(x_i,x_j)|
\le 4M\delta.
\]
Applying this once under $\widehat\pi$ and once under $\pi^\star$ gives
\[
\int c\,d\widehat\pi\le \mathcal W(\mu,\nu)+8M\delta.
\]
Using again $\int c_\kappa\,d\widehat\pi\le \int c\,d\widehat\pi+\Delta_\kappa$, feasibility of $\widehat\pi$ for the entropic problem yields
\[
\mathsf{OT}_{\kappa,\varepsilon}(\mu,\nu)
\le
\mathcal W(\mu,\nu)+\Delta_\kappa+8M\delta+2\varepsilon\log m.
\]
Combining the lower and upper bounds and substituting $m=\mathcal N(\mathcal X,\delta;d)$ proves the result. \qedsymbol

\paragraph{Euclidean covering-rate special case.}
If $\mathcal X\subset\mathbb R^d$ has bounded radius $M$, then its covering number satisfies $\log \mathcal N(\mathcal X,\delta)\le C d\log(M/\delta)$ for a universal constant $C>0$. Substituting this into Theorem \ref{thm:approx_error_correct} gives
\[
\left|\mathsf{OT}_{\kappa,\varepsilon}(\mu,\nu)-\mathcal W(\mu,\nu)\right|
\le
\Delta_\kappa + C\bigl(\varepsilon d\log(M/\delta)+M\delta\bigr),
\]
after absorbing constants. When $c_\kappa=c$, this recovers the usual entropic-regularization rate up to constants.

\subsection{Proof of Discretization Invariance (Theorem \ref{thm:discretization})}\label{app:b_discretization}

\discretizationthm*

\paragraph{Notation.} Throughout this proof, $\Omega\subset\mathbb R^d$ is the bounded periodic domain, $\{e_n\}_{n\in\mathbb Z^d}$ is the Fourier basis on $\Omega$, and the Sobolev norm is
\[
\|f\|^2_{H^s} = \sum_{n\in\mathbb Z^d}(1+|n|^2)^s\,|\hat f_n|^2,
\qquad s\ge 0.
\]
$P_N$ denotes the orthogonal projection onto $V_N:=\mathrm{span}\{e_n:|n|^2\le N^{2/d}\}$, so $\dim V_N\asymp N$ and the highest retained mode has frequency $\asymp N^{1/d}$. We write $\mathsf{KOT}^{(k)}_{\varepsilon}:=\mathsf{OT}_{\kappa_{\mathrm{RBF}}^{(k)},\varepsilon}$ and
\[
c_{\mathrm{RBF}}^{(k)}(f,g):=\|\phi(f)-\phi(g)\|^2_{\mathcal H_{\kappa_{\mathrm{RBF}}^{(k)}}}
=2\bigl(1-\kappa_{\mathrm{RBF}}^{(k)}(f,g)\bigr)
\]
for the RKHS-induced cost associated with the RBF kernel on $H^k$. Since $B_R^{H^\alpha}$ is compact in $H^k$ by the compact Sobolev embedding on bounded periodic domains \citep{Evans2010} and $c_{\mathrm{RBF}}^{(k)}$ is bounded and continuous, the entropic kernel-OT functionals below attain minimizers.

\paragraph{Step 1: Sobolev approximation on the truncation tail.} For any $f\in H^\alpha(\Omega)$ with $\|f\|_{H^\alpha}\le R$ and any $0\le k<\alpha$,
\begin{align*}
\|f-P_N f\|^2_{H^k}
&=\sum_{|n|^2 > N^{2/d}}(1+|n|^2)^k\,|\hat f_n|^2 \\
&=\sum_{|n|^2 > N^{2/d}}(1+|n|^2)^{-(\alpha-k)}\,(1+|n|^2)^\alpha\,|\hat f_n|^2 \\
&\le (1+N^{2/d})^{-(\alpha-k)}\,\|f\|^2_{H^\alpha}
\;\le\; R^2\,N^{-2(\alpha-k)/d}.
\end{align*}
Hence
\begin{equation}\label{eq:sobolev_tail}
\|f-P_N f\|_{H^k}\;\le\; R\,N^{-(\alpha-k)/d}\qquad\text{for all }f\in B_R^{H^\alpha}.
\end{equation}

\paragraph{Step 2: Stability of the squared $H^k$ distance under simultaneous projection.} For $f,g\in B_R^{H^\alpha}$, since $P_N$ is an orthogonal projection on $H^k$,
\begin{align*}
\|f-g\|^2_{H^k}-\|P_N(f-g)\|^2_{H^k}
&=\|(I-P_N)(f-g)\|^2_{H^k} \\
&\le \|f-g\|^2_{H^\alpha}\cdot N^{-2(\alpha-k)/d} \\
&\le 4R^2\,N^{-2(\alpha-k)/d},
\end{align*}
where we used $\|f-g\|_{H^\alpha}\le 2R$ and the Sobolev tail estimate from Step~1 applied to $f-g$.

\paragraph{Step 3: Cost stability for the RBF kernel.} Let $h(s):=e^{-s/(2\sigma^2)}$ for $s\ge0$, so $\kappa_{\mathrm{RBF}}^{(k)}(f,g)=h(\|f-g\|^2_{H^k})$ and $|h'(s)|\le\tfrac{1}{2\sigma^2}$. Then
\begin{align*}
\bigl|c_{\mathrm{RBF}}^{(k)}(f,g)-c_{\mathrm{RBF}}^{(k)}(P_N f,P_N g)\bigr|
&=2\bigl|h(\|f-g\|^2_{H^k})-h(\|P_N(f-g)\|^2_{H^k})\bigr| \\
&\le \frac{1}{\sigma^2}\bigl|\|f-g\|^2_{H^k}-\|P_N(f-g)\|^2_{H^k}\bigr|
\;\le\; \frac{4R^2}{\sigma^2}\,N^{-2(\alpha-k)/d}.
\end{align*}
The bound is uniform over $(f,g)\in B_R^{H^\alpha}\times B_R^{H^\alpha}$.

\paragraph{Step 4: Upper bound.} Let $\pi^\star\in\Pi(\mu,\nu)$ be a feasible coupling for the continuum problem and define $\widetilde\pi_N:=(P_N\otimes P_N)_\#\pi^\star\in\Pi(\mu_N,\nu_N)$. By Step~3,
\[
\int_{V_N\times V_N} c_{\mathrm{RBF}}^{(k)}\,d\widetilde\pi_N
\;=\;
\int_{\mathcal X\times\mathcal X} c_{\mathrm{RBF}}^{(k)}(P_N f,P_N g)\,d\pi^\star(f,g)
\;\le\;
\int c_{\mathrm{RBF}}^{(k)}\,d\pi^\star + \frac{4R^2}{\sigma^2}\,N^{-2(\alpha-k)/d}.
\]
By the data-processing inequality for KL divergence applied to the measurable map $P_N\otimes P_N$,
\[
\mathrm{KL}(\widetilde\pi_N\|\mu_N\otimes\nu_N)
\;\le\;
\mathrm{KL}(\pi^\star\|\mu\otimes\nu).
\]
Taking $\pi^\star$ to be an optimizer of $\mathsf{KOT}^{(k)}_{\varepsilon}(\mu,\nu)$ and using $\widetilde\pi_N$ as a feasible point for $\mathsf{KOT}^{(k)}_{\varepsilon}(\mu_N,\nu_N)$ gives
\begin{equation}\label{eq:upper_bound_disc}
\mathsf{KOT}^{(k)}_{\varepsilon}(\mu_N,\nu_N)\;\le\;\mathsf{KOT}^{(k)}_{\varepsilon}(\mu,\nu)+\frac{4R^2}{\sigma^2}\,N^{-2(\alpha-k)/d}.
\end{equation}

\paragraph{Step 5: Lower bound via gluing.} Let $\pi_N^\star\in\Pi(\mu_N,\nu_N)$ be an optimizer for $\mathsf{KOT}^{(k)}_{\varepsilon}(\mu_N,\nu_N)$. Disintegrate $\mu$ along $P_N$: since compact metric spaces are standard Borel, the standard regular conditional probability theorem applies \citep{Dudley_2002}. Thus there exist Borel kernels $\mu(\cdot\,|\,u)$, $u\in V_N$, such that
\[
\mu(A) = \int_{V_N} \mu(A\,|\,u)\,d\mu_N(u),\qquad A\subset\mathcal X\ \text{Borel},
\]
with $\mu(\cdot\,|\,u)$ supported on the fibre $P_N^{-1}(u)$ for $\mu_N$-a.e. $u$, and similarly for $\nu(\cdot\,|\,v)$. Define the lifted coupling
\[
\widehat\pi(A\times B):=\int_{V_N\times V_N}\mu(A\,|\,u)\,\nu(B\,|\,v)\,d\pi_N^\star(u,v),
\qquad A,B\subset\mathcal X\ \text{Borel}.
\]
Direct verification gives $\widehat\pi\in\Pi(\mu,\nu)$ and $(P_N\otimes P_N)_\#\widehat\pi=\pi_N^\star$. Since $\mu_N\otimes\nu_N$ is feasible with finite KL, the optimizer $\pi_N^\star$ has finite KL and hence $\pi_N^\star\ll\mu_N\otimes\nu_N$. On each fibre product, the Radon--Nikodym derivative of $\widehat\pi$ with respect to $\mu\otimes\nu$ is exactly the derivative of $\pi_N^\star$ with respect to $\mu_N\otimes\nu_N$ evaluated at $(u,v)=(P_N f,P_N g)$. Therefore, by the chain rule for KL divergence,
\[
\mathrm{KL}(\widehat\pi\|\mu\otimes\nu) = \mathrm{KL}(\pi_N^\star\|\mu_N\otimes\nu_N).
\]
For the cost integral, Step~3 applied in reverse gives
\[
\int c_{\mathrm{RBF}}^{(k)}\,d\widehat\pi
\;\le\;
\int_{V_N\times V_N} c_{\mathrm{RBF}}^{(k)}\,d\pi_N^\star + \frac{4R^2}{\sigma^2}\,N^{-2(\alpha-k)/d},
\]
using that $(f,g)\sim\widehat\pi$ projects to $(u,v)\sim\pi_N^\star$ and that the cost difference is bounded uniformly. Hence
\begin{equation}\label{eq:lower_bound_disc}
\mathsf{KOT}^{(k)}_{\varepsilon}(\mu,\nu)\;\le\;\mathsf{KOT}^{(k)}_{\varepsilon}(\mu_N,\nu_N)+\frac{4R^2}{\sigma^2}\,N^{-2(\alpha-k)/d}.
\end{equation}

\paragraph{Step 6: Conclusion.} Combining \eqref{eq:upper_bound_disc} and \eqref{eq:lower_bound_disc} yields
\[
\bigl|\mathsf{KOT}^{(k)}_{\varepsilon}(\mu,\nu)-\mathsf{KOT}^{(k)}_{\varepsilon}(\mu_N,\nu_N)\bigr|\;\le\;\frac{4R^2}{\sigma^2}\,N^{-2(\alpha-k)/d}. \qquad\qedsymbol
\]

\subsection{Composed End-to-End Bound}\label{app:b_composed}

\begin{corollary}[Composed end-to-end bound]\label{cor:composed} Let $\mathcal X = B_R^{H^\alpha}$ be as in Theorem \ref{thm:discretization}, equipped with $d_k(f,g):=\|f-g\|_{H^k}$, and let
\[
\mathcal W_k(\mu,\nu):=\inf_{\pi\in\Pi(\mu,\nu)}\int_{\mathcal X\times\mathcal X}\|f-g\|_{H^k}^2\,d\pi(f,g)
\]
denote quadratic-cost OT in the $H^k$ metric. Write $M_k:=\operatorname{diam}(\mathcal X;d_k)$ and let $\mathcal N_k(\delta):=\mathcal N(\mathcal X,\delta;d_k)$ be the corresponding covering number. For the Sobolev-RBF kernel in Theorem \ref{thm:discretization}, define the mismatch $\Delta_{\mathrm{RBF}}:=\Delta_{\kappa_{\mathrm{RBF}}^{(k)}}$, i.e.,
\[
\Delta_{\mathrm{RBF}}:=\sup_{f,g\in\mathcal X}\left|2\left(1-e^{-\|f-g\|_{H^k}^2/(2\sigma^2)}\right)-\|f-g\|_{H^k}^2\right|\;\le\; M_k^2+2,
\]
where the bound follows because the RBF-induced cost lies in $[0,2]$ and $\|f-g\|_{H^k}^2\in[0,M_k^2]$ on $\mathcal X$.
Then for every $\varepsilon>0$, $\delta\in(0,M_k]$ and every projection rank $N\in\mathbb N$,
\[
\begin{aligned}
\bigl|\mathsf{KOT}^{(k)}_{\varepsilon}(\mu_N,\nu_N) - \mathcal W_k(\mu,\nu)\bigr|
\;\le{}&
\underbrace{\Delta_{\mathrm{RBF}}}_{\text{kernel-cost mismatch}}
\;+\; \underbrace{2\varepsilon\,\log \mathcal N_k(\delta)}_{\text{entropic regularization}}
\;\\
&+\; \underbrace{8M_k\delta}_{\text{covering}}
\;+\; \underbrace{\frac{4R^2}{\sigma^2}\,N^{-2(\alpha-k)/d}}_{\text{discretization}}.
\end{aligned}
\]
\end{corollary}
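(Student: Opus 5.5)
The corollary follows from a triangle inequality with $\mathsf{KOT}^{(k)}_{\varepsilon}(\mu,\nu)$ as the intermediate quantity:
\[
\bigl|\mathsf{KOT}^{(k)}_{\varepsilon}(\mu_N,\nu_N)-\mathcal W_k(\mu,\nu)\bigr|
\le
\bigl|\mathsf{KOT}^{(k)}_{\varepsilon}(\mu_N,\nu_N)-\mathsf{KOT}^{(k)}_{\varepsilon}(\mu,\nu)\bigr|
+\bigl|\mathsf{KOT}^{(k)}_{\varepsilon}(\mu,\nu)-\mathcal W_k(\mu,\nu)\bigr|.
\]
Theorem~\ref{thm:discretization} bounds the first term directly by $\frac{4R^2}{\sigma^2}N^{-2(\alpha-k)/d}$. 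Its hypotheses are exactly the ones assumed here: $\mathcal X=B_R^{H^\alpha}$, $\mu,\nu\in\mathcal P(\mathcal X)$, the same projection $P_N$, and the same Sobolev-RBF kernel. So this term needs no further work.

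For the second term I apply Theorem~\ref{thm:approx_error_correct} to the metric space $(\mathcal X,d_k)$ with kernel $\kappa^{(k)}_{\mathrm{RBF}}$. Three hypotheses need checking.
\begin{enumerate}
\item \textbf{Compactness of $(\mathcal X,d_k)$.} Since $\alpha>k$, this follows from the compact Sobolev embedding on the periodic domain, as already used in the proof of Theorem~\ref{thm:discretization}. It gives $M_k<\infty$; indeed $M_k\le 2R$.
\item \textbf{The target cost.} The quadratic cost $d_k^2$ makes $\mathcal W(\mu,\nu)$ in that theorem coincide with $\mathcal W_k(\mu,\nu)$.
\item \textbf{The kernel.} $\kappa^{(k)}_{\mathrm{RBF}}$ is bounded by $1$ and positive definite (Appendix~\ref{app:a}). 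Its induced cost is $2(1-e^{-\|f-g\|^2_{H^k}/2\sigma^2})$, so the constant $\Delta_\kappa$ of Theorem~\ref{thm:approx_error_correct} is exactly $\Delta_{\mathrm{RBF}}$.
\end{enumerate}
The theorem then yields the sum of the mismatch, entropic and covering terms for every $\delta\in(0,M_k]$.

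Adding the two bounds gives the four-term inequality. The stated estimate $\Delta_{\mathrm{RBF}}\le M_k^2+2$ comes from $|a-b|\le\max(a,b)\le a+b$ for nonnegative $a\in[0,2]$ and $b\in[0,M_k^2]$. This is routine.

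I expect no real obstacle, only bookkeeping to confirm that both theorems use the same functional and the same space. Two points need care.
\begin{itemize}
\item \textbf{Where $\mu_N,\nu_N$ live.} They are supported on $P_N(\mathcal X)$. Since $\|P_Nf\|_{H^\alpha}\le\|f\|_{H^\alpha}$, this set lies inside $B_R^{H^\alpha}$, so $\mathsf{KOT}^{(k)}_\varepsilon(\mu_N,\nu_N)$ is the same functional on the same space.
\item \textbf{The metric in Theorem~\ref{thm:approx_error_correct}.} It must be $d_k$ and not the $H^\alpha$ norm. Otherwise the diameter and covering numbers would not match $M_k$ and $\mathcal N_k(\delta)$.
\end{itemize}
Once these identifications are stated, the corollary is a direct composition of the two theorems.
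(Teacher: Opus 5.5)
Your proposal is correct and matches the paper's proof. The paper uses the same triangle inequality through $\mathsf{KOT}^{(k)}_{\varepsilon}(\mu,\nu)$: the first term is bounded by Theorem~\ref{thm:discretization}, and the second by Theorem~\ref{thm:approx_error_correct} on the compact space $(\mathcal X,d_k)$ with the Sobolev-RBF kernel. Your added checks are sound bookkeeping that the paper leaves implicit: compactness, $M_k\le 2R$, $\mu_N,\nu_N$ being supported in $\mathcal X$, and the estimate $\Delta_{\mathrm{RBF}}\le M_k^2+2$.
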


\paragraph{Proof.} By the triangle inequality, $|\mathsf{KOT}^{(k)}_{\varepsilon}(\mu_N,\nu_N) - \mathcal W_k(\mu,\nu)|\le |\mathsf{KOT}^{(k)}_{\varepsilon}(\mu_N,\nu_N)-\mathsf{KOT}^{(k)}_{\varepsilon}(\mu,\nu)|+|\mathsf{KOT}^{(k)}_{\varepsilon}(\mu,\nu)-\mathcal W_k(\mu,\nu)|$. The first term is bounded by Theorem \ref{thm:discretization}; the second by Theorem \ref{thm:approx_error_correct} applied on the compact metric space $(\mathcal X,d_k)$ with the Sobolev-RBF kernel, whose mismatch is $\Delta_{\mathrm{RBF}}$. \qedsymbol

\paragraph{Remark.} Of the four terms, only the discretization term vanishes with projection rank $N$ (rate $N^{-2(\alpha-k)/d}$, driven by the regularity gap $\alpha-k$), the formal counterpart of the empirical resolution-invariance in Section \ref{sec:exp}. The corollary relies on compact Sobolev embedding \citep{Evans2010}, and our PDE benchmarks are used in the corresponding smooth-solution regime; such Sobolev and analytic regularity is standard in the PDE settings considered here, including Navier--Stokes in appropriate regimes \citep{CamliyurtKukavicaVicol2020_AnalyticityBoundary}. The kernel-cost mismatch $\Delta_{\mathrm{RBF}}$, by contrast, depends only on kernel design: zero for the linear kernel, finite for RBF (controlled by data scale and bandwidth $\sigma$).

\paragraph{Remark (path space).} For data on path space $C_p([0,T],\mathbb R^d)$ with $p\in[1,2)$, the analogue of spectral truncation is signature truncation at level $L$, $S_{\le L}(x):=(1,S^{(1)}(x),\ldots,S^{(L)}(x))$. Known factorial-decay bounds for signatures of finite-$p$-variation paths \citep[Proposition 1.4.6]{cass2024lecturenotesroughpaths} imply that, on compact sets with uniformly bounded $p$-variation control, the tail $\sum_{\ell>L}\|S^{(\ell)}(x)\|^2$ decays faster than any polynomial in $L$. By Cauchy--Schwarz, the full and truncated signature kernels are then uniformly close on such compact sets. Consequently, the corresponding entropic kernel-OT objectives differ by at most the uniform signature-kernel cost error, since the KL term is unchanged. This gives the same qualitative discretization-invariance message for path space, but with a factorial tail in the signature truncation level rather than a Sobolev Fourier-tail rate.

\section{Additional Experimental Details}\label{app:c}

This appendix provides implementation details and additional experimental results that complement the main text.

\subsection{Model Parameterization}\label{app:model}

We implement Functional Flow Matching with the official implementation provided by its authors\footnote{\url{https://github.com/GavinKerrigan/functional_flow_matching}} without modifying any provided hyperparameter of the neural architecture, including $\sigma_{\min}=10^{-4}$. The baseline models use the following hyperparameters, also following \citet{kerrigan2023functionalflowmatching}:
\begin{enumerate}
    \item DDPM: the noise schedule has $\beta_0 = 10^{-4}$, $\beta_T=0.02$, and $T=1000$.
    \item DDO (NCSN): the time interval is set to $T=10$, and the noise schedule interpolates geometrically between $\sigma_{10} =10^{-3}$ and $\sigma_1 = 1$ on the 1D datasets and between $\sigma_{10}=10^{-2}$ and $\sigma_1=100$ on the 2D datasets.
    \item GANO: the generator is trained every 5 epochs, and the gradient penalty is $\lambda=0.1$ on the 1D and $\lambda=10$ on the 2D datasets.
    \end{enumerate}

\paragraph{Training.} The baseline models are trained with Adam, with learning rate $10^{-3}$ on 1D and $5\times10^{-4}$ on 2D data, except that the rate is $10^{-4}$ for GANO. This configuration also follows \citet{kerrigan2023functionalflowmatching}; no hyperparameter search was performed on the optimizer.

\subsection{Baseline Description}\label{app:baselines}

We describe the baseline generative models used for the empirical comparisons. All operate in function space, and the roster covers the diffusion, flow-matching, adversarial, and score-based paradigms; every baseline uses the same neural-operator backbone and data pipeline as kFFM.

\begin{enumerate}
    \item \textbf{DDPM \citep{Kerrigan2023DiffusionInfiniteDimensions}}: The DDPM used as a baseline is the functional DDPM model proposed by \citet{Kerrigan2023DiffusionInfiniteDimensions}, which generalizes the diffusion model in $\mathbb{R}^d$ using Gaussian measure theory in Hilbert space. Its formulation is analogous to the base DDPM but performs diffusion directly on Gaussian measures. 

    \item \textbf{FFM \citep{kerrigan2023functionalflowmatching}}: Functional Flow Matching is the base model that we extend. It generalizes flow matching in $\mathbb{R}^d$ to infinite dimensions by formulating a path of Gaussian measures in a Hilbert space, in analogy with \citet{Kerrigan2023DiffusionInfiniteDimensions}. In the terminology of \citet{tong2024improvinggeneralizingflowbasedgenerative}, it couples prior and data samples independently, which leaves room for OT-based coupling.
    \item \textbf{CFM-OT($L^2$) \citep{tong2024improvinggeneralizingflowbasedgenerative}}: This is the direct finite-dimensional OT baseline used in the main text. It uses the same minibatch Sinkhorn solver and FNO backbone as our method, but applies OT to discretized vectors with the raw $L^2$ cost and the white-noise (i.i.d.\ Gaussian) base measure, rather than a function-space kernel cost.
    \item \textbf{GANO \citep{Rahman2022GANO}}: The Generative Adversarial Neural Operator (GANO) is a generative adversarial network in function space whose generator maps Gaussian random fields to samples.
    \item \textbf{DDO/NCSN \citep{Lim2023aScoreBasedFunctionSpace}}: The Denoising Diffusion Operator (DDO) is the analogue of score-based generative models \citep{song2021scorebasedgenerativemodelingstochastic} in infinite dimensions. We follow \citet{kerrigan2023functionalflowmatching} and use the NCSN noise scale; the loss and noise schedule, including preconditioning, follow the code released with \citet{Lim2023aScoreBasedFunctionSpace}.
\end{enumerate}

\subsection{Dataset Description}\label{app:datasets}

Of all the datasets we experiment on, AEMET, Gene expression, Economics are from \citet{kerrigan2023functionalflowmatching}, Navier--Stokes, KdV, stochastic Navier--Stokes, stochastic KdV are from \citet{salvi2022neuralstochasticpdesresolutioninvariant}, and Heston is from \citet{issa2023nonadversarialtrainingneuralsdes}. The descriptions below follow those sources; the architectures and training budgets of our runs are given in Appendix \ref{app:training}.

\paragraph{AEMET Dataset: } This dataset comprises functional observations representing the mean annual profile of average daily temperature (°C) over 1980–2009, collected from 73 weather stations in Spain. Each curve is sampled on a uniform grid of length 365.

\paragraph{Gene Expression: } The full dataset contains 10,928 time series sampled at 20 evenly spaced time points, measuring gene-expression amplitudes for four genes. To produce a periodic spiking appearance while preserving the underlying structure, we concatenate the gene-specific sequences in time. Before training, the data are log-transformed and mean-centered. We then focus on 156 high-variability trajectories, selected by requiring the time-averaged standard deviation of each centered series to exceed 0.3.

\paragraph{Economic Dataset: } We use the three economic datasets (population, GDP, labor) of \citet{kerrigan2023functionalflowmatching}; a separate model is trained on each, and the reported Economy metrics are averaged over the component series within each seed (the matched-seed comparisons of Tables \ref{tab:kernel_vs_l2_main} and \ref{tab:paired_tests} use the population and GDP series). As an example, the population dataset consists of time series tracking population changes for 169 countries worldwide from 1950 to 2018, sampled at 69 discrete time points. For visualization, each country’s series is normalized by its own mean, so the curves show population relative to that country’s average over the 69-year period. Overall, the trajectories display approximately linear growth with a common change point shared across countries.

\paragraph{Heston Dataset: } The Heston model \citep{Heston1993ClosedFormSV} is a stochastic volatility model with diffusion-driven, non-smooth sample paths. It models the asset price $S_t$ and its variance $\nu_t$ as (in the standard Heston notation, which is local to this paragraph):
\begin{align*}
    & dS_t = \mu_t S_t dt + \sqrt{\nu_t} S_t dW_t^S \\
    & d\nu_t = \kappa(\theta-\nu_t)dt + \xi \sqrt{\nu_t} dW_t^\nu,
\end{align*}
where $W_t^S, W_t^\nu$ are Wiener processes with correlations.

The Heston dataset is generated using the simulation code provided in the official repository of \texttt{sigker-nsde} \citep{issa2023nonadversarialtrainingneuralsdes} \footnote{\url{https://github.com/issaz/sigker-nsdes}}. The script generates $5000$ univariate time series of length $100$ on $[0,1]$; we model the normalized log-variance paths. A long-horizon variant with $1000$ time steps (Heston-Long) is used only in the sensitivity study (the ``long volatility paths'' of Appendices \ref{app:practices} and \ref{app:sensitivity}) and in the convergence diagnostic of Figure \ref{fig:convergence_main}.

\paragraph{KdV and Stochastic KdV Equation: } The stochastic Korteweg--de Vries equation is a higher-order SPDE given by:
\begin{align*}
    \partial_t u + \gamma \partial^3_x u = 6u\partial_x u + \xi, \quad u(t,0)=u(t,1), \quad u(0,x) = u_0(x), \quad (t,x)\in [0,T]\times [0,1],
\end{align*}

and the base KdV equation simply removes the $Q$-Wiener process $\xi$. The KdV dataset is a single trajectory on a grid of $512$ spatial points with $201$ time steps, and each time slice is one sample; the stochastic KdV dataset consists of $1200$ trajectories on a grid of $128$ spatial points with $101$ time steps each, again treated as snapshots. The simulation code for these equations is taken from the \texttt{torch-spde} repository \citep{salvi2022neuralstochasticpdesresolutioninvariant} \footnote{\url{https://github.com/crispitagorico/torchspde}}.

\paragraph{Navier--Stokes and Stochastic Navier--Stokes Equation: } The 2D stochastic Navier--Stokes equation for an incompressible flow has the form:
\begin{align*}
    \partial_t w - \nu \Delta w = -u\cdot \nabla w+f+\sigma \xi, \quad w(0,x) = w_0(x), \quad (t,x)\in [0,T]\times [0,1]^2,
\end{align*}

where the deterministic force is a function of space only, and $\xi$ is a $Q$-Wiener process colored in space and rescaled by $\sigma = 0.05$. The initial condition is a Gaussian Random Field $w_0\sim \mathcal{N}(0, 3^{3/2}(-\Delta + 49I)^{-3})$ with periodic boundary conditions, and the viscosity is $\nu = 10^{-3}$. The base Navier--Stokes equation corresponds to the case where $\sigma = 0$ (no stochastic forcing). We again use the simulation code from \texttt{torch-spde} for these 2D datasets, on a uniform $64\times 64$ grid. The turbulent benchmark of Section \ref{sec:turbulent} uses the $\nu=10^{-5}$ data of \citet{li2021fourierneuraloperatorparametric} on the same grid ($1200$ trajectories; we keep the snapshots at $t\ge10$ and train on $9600$ of them), and a $128\times128$ forced Navier--Stokes dataset is used only for the cost measurements of Appendix \ref{app:compute}.

\subsection{Training Configuration Description}\label{app:training}

The following tables give the hyperparameters of our runs. In all cases, the base FFM model follows the configuration in the original codebase of \citet{kerrigan2023functionalflowmatching}. We keep the neural architectures fixed and tune only the OT-related hyperparameters; for the 2D PDE datasets we additionally study the Gaussian-prior length scale in Appendix \ref{app:sensitivity}.
\begin{enumerate}
    \item All FFM and kFFM runs use the Adam optimizer with learning rate $10^{-3}$ and a StepLR scheduler (step size 50, $\gamma=0.1$).
    \item Table \ref{tab:1d_hyperparams} gives the FNO configurations on the 1D sequence datasets and Table \ref{tab:1d_pde_hyperparams} those on the 1D PDE datasets. We perform no hyperparameter search for the FNO backbone; only the OT-related hyperparameters are varied, so the configurations are dataset-specific. ``Batch Size (Sig)'' is the batch size used with the signature kernel, whose memory grows with path length.
    \item The hyperparameters of the FNO on the 2D PDE datasets are in Table \ref{tab:2d_hyperparams}.
    \item Shared training settings are given in Table \ref{tab:common_settings}.
\end{enumerate}

\begin{table}[h]
\centering
\scriptsize
\setlength{\tabcolsep}{3pt}
\caption{Training hyperparameters for 1D sequence datasets (FFM with FNO backbone).}
\label{tab:1d_hyperparams}
\begin{adjustbox}{max width=\textwidth}
\begin{tabular}{@{}lcccccccc@{}}
\toprule
\textbf{Dataset} & \textbf{Epochs} & \textbf{Batch Size} & \textbf{Batch Size (Sig)} & \textbf{FNO Modes} & \textbf{Width} & \textbf{MLP Width} & \textbf{GP $\ell$} & \textbf{GP $\sigma^2$} \\
\midrule
AEMET & 300 & 512 & 64 & 64 & 256 & 128 & 0.01 & 0.1 \\
Gene Expr. & 300 & 512 & 128 & 16 & 256 & 128 & 0.01 & 0.1 \\
Economy & 300 & 512 & 128 & 16 & 128 & 64 & 0.01 & 0.1 \\
Heston & 300 & 512 & 128 & 32 & 256 & 128 & 0.01 & 0.1 \\
\bottomrule
\end{tabular}
\end{adjustbox}
\end{table}

\begin{table}[h]
\centering
\scriptsize
\setlength{\tabcolsep}{3pt}
\caption{Training hyperparameters for 1D PDE datasets (FFM with FNO backbone).}
\label{tab:1d_pde_hyperparams}
\begin{adjustbox}{max width=\textwidth}
\begin{tabular}{@{}lcccccccc@{}}
\toprule
\textbf{Dataset} & \textbf{Epochs} & \textbf{Batch Size} & \textbf{Batch Size (Sig)} & \textbf{FNO Modes} & \textbf{Width} & \textbf{MLP Width} & \textbf{GP $\ell$} & \textbf{GP $\sigma^2$} \\
\midrule
KdV & 300 & 16 & 16 & 64 & 256 & 128 & 0.01 & 0.1 \\
Stochastic KdV & 100 & 512 & 128 & 32 & 256 & 128 & 0.01 & 0.1 \\
\bottomrule
\end{tabular}
\end{adjustbox}
\end{table}

\begin{table}[h]
\centering
\scriptsize
\setlength{\tabcolsep}{3pt}
\caption{Training hyperparameters for 2D spatial PDE datasets (FFM with 2D-FNO backbone).}
\label{tab:2d_hyperparams}
\begin{adjustbox}{max width=\textwidth}
\begin{tabular}{@{}lcccccccc@{}}
\toprule
\textbf{Dataset} & \textbf{Epochs} & \textbf{Batch Size} & \textbf{Batch Size (Sig)} & \textbf{FNO Modes} & \textbf{Hidden Ch.} & \textbf{Proj. Ch.} & \textbf{GP $\ell$} & \textbf{GP $\sigma^2$} \\
\midrule
Navier--Stokes & 300 & 512 & 512 & 16 & 32 & 64 & 0.001 & 1.0 \\
Stoch.\ NS & 100 & 512 & 512 & 16 & 32 & 64 & 0.001 & 1.0 \\
\bottomrule
\end{tabular}
\end{adjustbox}
\end{table}

\begin{table}[h]
\centering
\scriptsize
\setlength{\tabcolsep}{4pt}
\caption{Common training settings for all datasets.}
\label{tab:common_settings}
\begin{adjustbox}{max width=\textwidth}
\begin{tabular}{@{}ll@{}}
\toprule
\textbf{Parameter} & \textbf{Value} \\
\midrule
Optimizer & Adam \\
Learning Rate & $10^{-3}$ \\
LR Scheduler & StepLR (step=50, $\gamma$=0.1) \\
FFM $\sigma_{\min}$ & $10^{-4}$ \\
Number of Seeds & 10 (seeds $2^0,\dots,2^9$); 20 (seeds $2^0,\dots,2^{19}$) where $n=20$ in Table \ref{tab:paired_tests}; 5 on the turbulent benchmark \\
ODE solver tolerances (atol, rtol) & $10^{-5}$ \\
\bottomrule
\end{tabular}
\end{adjustbox}
\end{table}

\subsection{Configuration Selection Protocol}\label{app:selection}

kFFM as evaluated in this paper has two per-dataset configuration choices: the OT cost entering the entropic coupling (signature kernel, Euclidean--RBF kernel, Sobolev-RBF kernel, or the raw $L^2$ cost) and the Gaussian base measure (a smooth GP prior or white noise). The ``kFFM (selected)'' column of Table~\ref{tab:kernel_vs_l2_main} is determined by a validation-based rule rather than by test-set performance: for each dataset, every candidate configuration is trained identically, the configuration with the best sliced Wasserstein distance --- a criterion independent of the kernels being compared --- on a held-out validation split is selected, and only the selected configuration is then evaluated on the test set. In practice this rule is nearly deterministic and coincides with a simple prior-knowledge heuristic that practitioners can apply without any search: use the signature kernel with a GP base for path-like or rough sequence data, and a bounded RBF cost (Sobolev-RBF with a GP base on the 1D KdV family, Euclidean-RBF on the Navier--Stokes family) for PDE fields, with the white-noise base selected on the Navier--Stokes family, whose fields are rougher than a smooth GP prior. On our benchmark the validated choice coincides with the best-performing configuration on all datasets except short-horizon Heston, where the top configurations are statistically indistinguishable. Pointwise diagnostics by kernel variant are reported in Appendix \ref{app:detailed}; the cost and base measure of each selected cell are tagged in Table \ref{tab:kernel_vs_l2_main}, and the Sinkhorn regularization and kernel bandwidth follow the defaults of Appendix \ref{app:practices}, to which the results are insensitive within the ranges reported in the sensitivity study below.

\subsection{Detailed Main-Paper Results}\label{app:detailed}

This appendix includes a self-contained summary of the full baseline roster used in the paper, together with structural diagnostics, fuller pointwise comparisons, kernel ablations, and the super-resolution visualizations referenced in Section \ref{sec:exp}. The additional pointwise tables follow the evaluation convention of \citet{kerrigan2023functionalflowmatching}: discretization-level mean/variance MSE summaries together with autocorrelation or log-spectrum diagnostics, and higher moments where reported. These tables focus on models for which such pointwise statistics are currently available, while Table \ref{tab:kernel_vs_l2_main} in the main text summarizes the complete baseline family comparison including CFM-OT($L^2$).

\newcommand{\tstat}[3]{$#1{\scriptscriptstyle\pm}#2\!\times\!10^{#3}$}
\newcommand{\tbest}[3]{\begingroup\boldmath\bfseries$#1{\scriptscriptstyle\pm}#2\!\times\!10^{#3}$\endgroup}

\begin{table}[t]
\scriptsize
\setlength{\tabcolsep}{2pt}
\renewcommand{\arraystretch}{0.97}
\centering
\caption{Interpretable structural diagnostics complementing the full-baseline MMD comparison in Table \ref{tab:kernel_vs_l2_main}. The first block reports autocorrelation error on sequence datasets; the second block reports log-spectrum error on PDE datasets. Lower is better. As in Tables~\ref{tab:baseline_seq} and~\ref{tab:baseline_pde}, the kFFM column reports the best kernel variant per row (the signature kernel on Gene Expression, Economy, and Heston). All results are averaged over 10 seeds.}
\label{tab:structure_main}
\begin{adjustbox}{max width=\textwidth}
\begin{tabular}{lcccccc}
\toprule
Dataset & DDO/NCSN & GANO & DDPM & FFM & CFM-OT($L^2$) & kFFM \\
\midrule
AEMET (Autocorr.) & \tstat{4.21}{0.94}{-3} & \tstat{4.45}{3.43}{-6} & \tstat{2.84}{2.98}{-7} & \tstat{4.69}{1.53}{-7} & \tbest{1.23}{2.72}{-8} & \tstat{2.60}{1.92}{-7} \\
Gene Expr. (Autocorr.) & \tstat{5.48}{0.24}{-2} & \tstat{1.16}{0.68}{-1} & \tstat{3.59}{3.46}{-4} & \tstat{1.59}{1.98}{-4} & \tstat{2.78}{2.32}{-3} & \tbest{4.18}{4.02}{-5} \\
Economy (Autocorr.) & \tstat{2.73}{0.95}{-1} & \tstat{2.29}{1.91}{-1} & \tstat{2.05}{2.85}{-2} & \tstat{1.72}{1.02}{-4} & \tstat{9.35}{3.75}{-3} & \tbest{8.90}{8.39}{-5} \\
Heston (Autocorr.) & \tstat{4.36}{0.25}{-2} & \tstat{8.36}{8.68}{-3} & \tstat{3.35}{2.80}{-4} & \tstat{2.78}{2.05}{-4} & \tstat{1.73}{1.17}{-4} & \tbest{3.31}{3.38}{-5} \\
\midrule
KdV (Log-spectrum) & \tstat{4.52}{0.00}{1} & \tstat{2.02}{0.30}{1} & \tstat{1.71}{0.13}{1} & \tstat{1.72}{0.03}{1} & \tstat{2.73}{0.05}{1} & \tbest{1.69}{0.02}{1} \\
Navier--Stokes (Log-spectrum) & \tstat{9.93}{0.08}{-1} & \tstat{2.21}{0.15}{0} & \tbest{2.58}{0.63}{-1} & \tstat{7.92}{0.89}{-1} & \tstat{3.44}{0.35}{-1} & \tstat{5.87}{0.46}{-1} \\
Stoch.\ KdV (Log-spectrum) & \tstat{1.84}{0.00}{1} & \tstat{1.45}{0.09}{1} & \tbest{1.57}{0.22}{0} & \tstat{4.27}{0.25}{0} & \tstat{9.25}{0.31}{0} & \tstat{3.97}{0.28}{0} \\
Stoch.\ NS (Log-spectrum) & \tstat{1.06}{0.28}{0} & \tstat{1.03}{0.93}{0} & \tstat{8.60}{2.39}{-1} & \tstat{6.22}{1.05}{-2} & \tbest{3.11}{0.93}{-2} & \tstat{3.22}{0.63}{-2} \\
\bottomrule
\end{tabular}
\end{adjustbox}
\end{table}

\begin{table}[t]
\scriptsize
\setlength{\tabcolsep}{2pt}
\centering
\caption{Sequence-data pointwise diagnostics following the evaluation convention of \citet{kerrigan2023functionalflowmatching}: mean and variance of the discretization-level MSE between target and generated samples, together with average autocorrelation error. We compare baseline FFM (Independent) against the OT-coupled variants: Signature = signature kernel, RBF = Euclidean--RBF kernel, Euclidean = raw $L^2$ cost with the GP base (kFFM-Euc). Results are mean$\pm$std over 10 seeds. Best is \textcolor{ForestGreen}{$\mathbf{green}$}; second best is \textcolor{Orange}{$\mathbf{orange}$}.}
\label{tab:sequence_kernel}
\begin{adjustbox}{max width=\textwidth}
\begin{tabular}{llrrr}
\toprule
Dataset & Kernel & Mean & Variance & Autocorr. \\
\midrule
\multirow{4}{*}{AEMET}
 & Independent & $7.38 \pm 6.37 \times 10^{-2}$ & $1.20 \pm 0.40 \times 10^{-2}$ & $4.69 \pm 1.53 \times 10^{-7}$ \\
 & Signature & \textcolor{Orange}{\textbf{$3.60 \pm 3.66 \times 10^{-2}$}} & $7.84 \pm 5.59 \times 10^{-3}$ & $3.87 \pm 2.45 \times 10^{-7}$ \\
 & RBF & \textcolor{ForestGreen}{\textbf{$1.51 \pm 1.61 \times 10^{-2}$}} & \textcolor{ForestGreen}{\textbf{$4.70 \pm 2.57 \times 10^{-3}$}} & \textcolor{ForestGreen}{\textbf{$2.60 \pm 1.92 \times 10^{-7}$}} \\
 & Euclidean & $4.10 \pm 2.69 \times 10^{-2}$ & \textcolor{Orange}{\textbf{$5.59 \pm 2.65 \times 10^{-3}$}} & \textcolor{Orange}{\textbf{$3.47 \pm 1.99 \times 10^{-7}$}} \\
\midrule
\multirow{4}{*}{Gene Expr.}
 & Independent & $1.39 \pm 0.30 \times 10^{-3}$ & \textcolor{Orange}{\textbf{$5.13 \pm 0.74 \times 10^{-3}$}} & $1.59 \pm 1.98 \times 10^{-4}$ \\
 & Signature & \textcolor{ForestGreen}{\textbf{$5.21 \pm 0.98 \times 10^{-4}$}} & \textcolor{ForestGreen}{\textbf{$2.23 \pm 0.23 \times 10^{-3}$}} & \textcolor{ForestGreen}{\textbf{$4.18 \pm 4.02 \times 10^{-5}$}} \\
 & RBF & $1.25 \pm 0.20 \times 10^{-3}$ & $5.67 \pm 0.74 \times 10^{-3}$ & $4.54 \pm 6.51 \times 10^{-5}$ \\
 & Euclidean & \textcolor{Orange}{\textbf{$1.25 \pm 0.20 \times 10^{-3}$}} & $5.56 \pm 0.61 \times 10^{-3}$ & \textcolor{Orange}{\textbf{$4.45 \pm 4.47 \times 10^{-5}$}} \\
\midrule
\multirow{4}{*}{Economy}
 & Independent & $4.90 \pm 3.26 \times 10^{-5}$ & $1.08 \pm 1.13 \times 10^{-5}$ & $1.72 \pm 1.02 \times 10^{-4}$ \\
 & Signature & \textcolor{ForestGreen}{\textbf{$3.30 \pm 2.22 \times 10^{-5}$}} & \textcolor{ForestGreen}{\textbf{$6.60 \pm 5.62 \times 10^{-6}$}} & \textcolor{ForestGreen}{\textbf{$8.90 \pm 8.39 \times 10^{-5}$}} \\
 & RBF & $3.96 \pm 2.12 \times 10^{-5}$ & $6.82 \pm 3.44 \times 10^{-6}$ & $1.38 \pm 0.89 \times 10^{-4}$ \\
 & Euclidean & \textcolor{Orange}{\textbf{$3.94 \pm 2.07 \times 10^{-5}$}} & \textcolor{Orange}{\textbf{$6.76 \pm 3.48 \times 10^{-6}$}} & \textcolor{Orange}{\textbf{$1.38 \pm 0.89 \times 10^{-4}$}} \\
\midrule
\multirow{4}{*}{Heston}
 & Independent & $6.66 \pm 4.77 \times 10^{-3}$ & $1.07 \pm 0.25 \times 10^{-1}$ & $2.78 \pm 2.05 \times 10^{-4}$ \\
 & Signature & \textcolor{ForestGreen}{\textbf{$3.33 \pm 2.25 \times 10^{-3}$}} & \textcolor{ForestGreen}{\textbf{$8.17 \pm 1.14 \times 10^{-2}$}} & \textcolor{ForestGreen}{\textbf{$3.31 \pm 3.38 \times 10^{-5}$}} \\
 & RBF & $4.48 \pm 4.18 \times 10^{-3}$ & \textcolor{Orange}{\textbf{$8.27 \pm 1.53 \times 10^{-2}$}} & $1.41 \pm 1.03 \times 10^{-4}$ \\
 & Euclidean & \textcolor{Orange}{\textbf{$3.38 \pm 3.81 \times 10^{-3}$}} & $8.40 \pm 0.77 \times 10^{-2}$ & \textcolor{Orange}{\textbf{$9.08 \pm 7.99 \times 10^{-5}$}} \\
\bottomrule
\end{tabular}
\end{adjustbox}
\end{table}

\begin{table}[t]
\scriptsize
\setlength{\tabcolsep}{2pt}
\renewcommand{\arraystretch}{0.95}
\centering
\caption{Baseline comparison for time series datasets among models with available pointwise statistics. The full baseline roster, including CFM-OT($L^2$), is summarized in Table \ref{tab:kernel_vs_l2_main}. Following \citet{kerrigan2023functionalflowmatching}, we report mean/variance MSE and autocorrelation error computed on the discretized samples. The kFFM row reports the best of the kernel variants in Table \ref{tab:sequence_kernel} for each metric (an oracle over variants, used only for these diagnostics), whereas Table \ref{tab:kernel_vs_l2_main} uses the validation-selected configuration. Results are mean$\pm$std over 10 seeds. Best is \textcolor{ForestGreen}{$\mathbf{green}$}; second best is \textcolor{Orange}{$\mathbf{orange}$}.}
\label{tab:baseline_seq}
\begin{adjustbox}{max width=\textwidth}
\begin{tabular}{llrrr}
\toprule
Dataset & Model & Mean & Variance & Autocorr. \\
\midrule
\multirow{5}{*}{AEMET}
 & DDO/NCSN & $3.56 \pm 0.96 \times 10^{-1}$ & $1.17 \pm 0.03 \times 10^{-1}$ & $4.21 \pm 0.94 \times 10^{-3}$ \\
 & DDPM & $1.92 \pm 1.29 \times 10^{-1}$ & $3.22 \pm 2.98 \times 10^{-2}$ & \textcolor{Orange}{\textbf{$2.84 \pm 2.98 \times 10^{-7}$}} \\
 & GANO & $2.47 \pm 0.89 \times 10^{-1}$ & $2.35 \pm 0.01 \times 10^{-1}$ & $4.45 \pm 3.43 \times 10^{-6}$ \\
 & FFM & \textcolor{Orange}{\textbf{$7.38 \pm 6.37 \times 10^{-2}$}} & \textcolor{Orange}{\textbf{$1.20 \pm 0.40 \times 10^{-2}$}} & $4.69 \pm 1.53 \times 10^{-7}$ \\
 & kFFM & \cellcolor[gray]{0.9}\textcolor{ForestGreen}{\textbf{$1.51 \pm 1.61 \times 10^{-2}$}} & \cellcolor[gray]{0.9}\textcolor{ForestGreen}{\textbf{$4.70 \pm 2.57 \times 10^{-3}$}} & \cellcolor[gray]{0.9}\textcolor{ForestGreen}{\textbf{$2.60 \pm 1.92 \times 10^{-7}$}} \\
\midrule
\multirow{5}{*}{Gene Expr.}
 & DDO/NCSN & $6.07 \pm 1.06 \times 10^{-3}$ & $9.69 \pm 0.70 \times 10^{-3}$ & $5.48 \pm 0.24 \times 10^{-2}$ \\
 & DDPM & $4.10 \pm 1.24 \times 10^{-3}$ & $1.08 \pm 0.08 \times 10^{-2}$ & $3.59 \pm 3.46 \times 10^{-4}$ \\
 & GANO & $1.89 \pm 0.52 \times 10^{-2}$ & $3.14 \pm 0.00 \times 10^{-2}$ & $1.16 \pm 0.68 \times 10^{-1}$ \\
 & FFM & \textcolor{Orange}{\textbf{$1.39 \pm 0.30 \times 10^{-3}$}} & \textcolor{Orange}{\textbf{$5.13 \pm 0.74 \times 10^{-3}$}} & \textcolor{Orange}{\textbf{$1.59 \pm 1.98 \times 10^{-4}$}} \\
 & kFFM & \cellcolor[gray]{0.9}\textcolor{ForestGreen}{\textbf{$5.21 \pm 0.98 \times 10^{-4}$}} & \cellcolor[gray]{0.9}\textcolor{ForestGreen}{\textbf{$2.23 \pm 0.23 \times 10^{-3}$}} & \cellcolor[gray]{0.9}\textcolor{ForestGreen}{\textbf{$4.18 \pm 4.02 \times 10^{-5}$}} \\
\midrule
\multirow{5}{*}{Economy}
 & DDO/NCSN & $1.70 \pm 1.34 \times 10^{-3}$ & $9.91 \pm 2.06 \times 10^{-3}$ & $2.73 \pm 0.95 \times 10^{-1}$ \\
 & DDPM & $6.08 \pm 7.92 \times 10^{-4}$ & $2.27 \pm 5.34 \times 10^{-4}$ & $2.05 \pm 2.85 \times 10^{-2}$ \\
 & GANO & $3.79 \pm 0.51 \times 10^{-1}$ & $7.28 \pm 4.08 \times 10^{-4}$ & $2.29 \pm 1.91 \times 10^{-1}$ \\
 & FFM & \textcolor{Orange}{\textbf{$4.90 \pm 3.26 \times 10^{-5}$}} & \textcolor{Orange}{\textbf{$1.08 \pm 1.13 \times 10^{-5}$}} & \textcolor{Orange}{\textbf{$1.72 \pm 1.02 \times 10^{-4}$}} \\
 & kFFM & \cellcolor[gray]{0.9}\textcolor{ForestGreen}{\textbf{$3.30 \pm 2.22 \times 10^{-5}$}} & \cellcolor[gray]{0.9}\textcolor{ForestGreen}{\textbf{$6.60 \pm 5.62 \times 10^{-6}$}} & \cellcolor[gray]{0.9}\textcolor{ForestGreen}{\textbf{$8.90 \pm 8.39 \times 10^{-5}$}} \\
\midrule
\multirow{5}{*}{Heston}
 & DDO/NCSN & $1.60 \pm 0.47 \times 10^{-2}$ & $7.32 \pm 0.15 \times 10^{-1}$ & $4.36 \pm 0.25 \times 10^{-2}$ \\
 & DDPM & $9.33 \pm 9.43 \times 10^{-3}$ & $1.08 \pm 0.25 \times 10^{-1}$ & $3.35 \pm 2.80 \times 10^{-4}$ \\
 & GANO & $1.55 \pm 0.90 \times 10^{-2}$ & $4.41 \pm 1.15 \times 10^{-1}$ & $8.36 \pm 8.68 \times 10^{-3}$ \\
 & FFM & \textcolor{Orange}{\textbf{$6.66 \pm 4.77 \times 10^{-3}$}} & \textcolor{Orange}{\textbf{$1.07 \pm 0.25 \times 10^{-1}$}} & \textcolor{Orange}{\textbf{$2.78 \pm 2.05 \times 10^{-4}$}} \\
 & kFFM & \cellcolor[gray]{0.9}\textcolor{ForestGreen}{\textbf{$3.33 \pm 2.25 \times 10^{-3}$}} & \cellcolor[gray]{0.9}\textcolor{ForestGreen}{\textbf{$8.17 \pm 1.14 \times 10^{-2}$}} & \cellcolor[gray]{0.9}\textcolor{ForestGreen}{\textbf{$3.31 \pm 3.38 \times 10^{-5}$}} \\
\bottomrule
\end{tabular}
\end{adjustbox}
\end{table}

\begin{table}[t]
\scriptsize
\setlength{\tabcolsep}{2pt}
\renewcommand{\arraystretch}{0.95}
\centering
\caption{PDE pointwise diagnostics following the evaluation convention of \citet{kerrigan2023functionalflowmatching}: mean and variance of the discretization-level MSE between target and generated samples, together with average log-spectrum error. Here we compare the baseline FFM (Independent) against its OT-coupled variants: RBF = Euclidean--RBF kernel, Euclidean = raw $L^2$ cost with the GP base (kFFM-Euc). Results are mean$\pm$std over 10 seeds. Best is \textcolor{ForestGreen}{$\mathbf{green}$}; second best is \textcolor{Orange}{$\mathbf{orange}$}.}
\label{tab:pde_kernel}
\begin{adjustbox}{max width=\textwidth}
\begin{tabular}{llrrr}
\toprule
Dataset & Kernel & Mean & Variance & Spectrum (log) \\
\midrule
\multirow{3}{*}{KdV}
 & Independent & $5.20 \pm 3.37 \times 10^{-4}$ & $1.39 \pm 0.49 \times 10^{-4}$ & $1.72 \pm 0.03 \times 10^{1}$ \\
 & RBF & \textcolor{ForestGreen}{\textbf{$1.48 \pm 0.95 \times 10^{-4}$}} & \textcolor{ForestGreen}{\textbf{$5.68 \pm 2.58 \times 10^{-5}$}} & \textcolor{Orange}{\textbf{$1.69 \pm 0.02 \times 10^{1}$}} \\
 & Euclidean & \textcolor{Orange}{\textbf{$2.51 \pm 0.74 \times 10^{-4}$}} & \textcolor{Orange}{\textbf{$6.88 \pm 3.30 \times 10^{-5}$}} & \textcolor{ForestGreen}{\textbf{$1.69 \pm 0.03 \times 10^{1}$}} \\
\midrule
\multirow{3}{*}{Navier--Stokes}
 & Independent & $7.96 \pm 7.82 \times 10^{-2}$ & $3.11 \pm 0.80 \times 10^{-2}$ & $7.92 \pm 0.89 \times 10^{-1}$ \\
 & RBF & \textcolor{ForestGreen}{\textbf{$2.01 \pm 2.01 \times 10^{-3}$}} & \textcolor{ForestGreen}{\textbf{$7.51 \pm 3.33 \times 10^{-4}$}} & \textcolor{ForestGreen}{\textbf{$5.87 \pm 0.46 \times 10^{-1}$}} \\
 & Euclidean & \textcolor{Orange}{\textbf{$2.03 \pm 1.68 \times 10^{-2}$}} & \textcolor{Orange}{\textbf{$1.87 \pm 0.57 \times 10^{-2}$}} & \textcolor{Orange}{\textbf{$6.00 \pm 0.57 \times 10^{-1}$}} \\
\midrule
\multirow{3}{*}{Stoch. KdV}
 & Independent & \textcolor{Orange}{\textbf{$1.69 \pm 1.31 \times 10^{-3}$}} & $3.15 \pm 2.31 \times 10^{-3}$ & $4.27 \pm 0.25 \times 10^{0}$ \\
 & RBF & \textcolor{ForestGreen}{\textbf{$1.17 \pm 0.72 \times 10^{-3}$}} & \textcolor{ForestGreen}{\textbf{$1.62 \pm 0.91 \times 10^{-3}$}} & \textcolor{Orange}{\textbf{$4.05 \pm 0.14 \times 10^{0}$}} \\
 & Euclidean & $2.34 \pm 1.71 \times 10^{-3}$ & \textcolor{Orange}{\textbf{$2.51 \pm 1.50 \times 10^{-3}$}} & \textcolor{ForestGreen}{\textbf{$3.97 \pm 0.28 \times 10^{0}$}} \\
\midrule
\multirow{3}{*}{Stoch. NS}
 & Independent & $3.72 \pm 2.31 \times 10^{-1}$ & $5.76 \pm 0.76 \times 10^{0}$ & $6.22 \pm 1.05 \times 10^{-2}$ \\
 & RBF & \textcolor{ForestGreen}{\textbf{$3.38 \pm 1.15 \times 10^{-2}$}} & \textcolor{ForestGreen}{\textbf{$2.59 \pm 0.76 \times 10^{-1}$}} & \textcolor{ForestGreen}{\textbf{$3.22 \pm 0.63 \times 10^{-2}$}} \\
 & Euclidean & \textcolor{Orange}{\textbf{$1.81 \pm 0.66 \times 10^{-1}$}} & \textcolor{Orange}{\textbf{$4.59 \pm 0.81 \times 10^{0}$}} & \textcolor{Orange}{\textbf{$5.25 \pm 0.62 \times 10^{-2}$}} \\
\bottomrule
\end{tabular}
\end{adjustbox}
\end{table}

\begin{table}[t]
\scriptsize
\setlength{\tabcolsep}{2pt}
\renewcommand{\arraystretch}{0.95}
\centering
\caption{Baseline comparison for PDE datasets among models with available pointwise statistics. The full baseline roster, including CFM-OT($L^2$), is summarized in Table \ref{tab:kernel_vs_l2_main}. Following \citet{kerrigan2023functionalflowmatching}, we compare kFFM against baseline models on discretization-level mean/variance MSE and log-spectrum error; the kFFM row reports the best of the kernel variants in Table \ref{tab:pde_kernel} for each metric (an oracle over variants, used only for these diagnostics), whereas Table \ref{tab:kernel_vs_l2_main} uses the validation-selected configuration. Results are mean$\pm$std over 10 seeds. Best is \textcolor{ForestGreen}{$\mathbf{green}$}; second best is \textcolor{Orange}{$\mathbf{orange}$}.}
\label{tab:baseline_pde}
\begin{adjustbox}{max width=\textwidth}
\begin{tabular}{llrrr}
\toprule
Dataset & Model & Mean & Variance & Spectrum (log) \\
\midrule
\multirow{5}{*}{KdV}
 & DDO/NCSN & $2.46 \pm 0.46 \times 10^{-1}$ & $2.78 \pm 0.43 \times 10^{-2}$ & $4.52 \pm 0.00 \times 10^{1}$ \\
 & DDPM & $3.67 \pm 0.91 \times 10^{-3}$ & $5.81 \pm 0.69 \times 10^{-3}$ & \textcolor{Orange}{\textbf{$1.71 \pm 0.13 \times 10^{1}$}} \\
 & GANO & $1.87 \pm 0.94 \times 10^{-1}$ & $2.91 \pm 0.16 \times 10^{-2}$ & $2.02 \pm 0.30 \times 10^{1}$ \\
 & FFM & \textcolor{Orange}{\textbf{$5.20 \pm 3.37 \times 10^{-4}$}} & \textcolor{Orange}{\textbf{$1.39 \pm 0.49 \times 10^{-4}$}} & $1.72 \pm 0.03 \times 10^{1}$ \\
 & kFFM & \cellcolor[gray]{0.9}\textcolor{ForestGreen}{\textbf{$1.48 \pm 0.95 \times 10^{-4}$}} & \cellcolor[gray]{0.9}\textcolor{ForestGreen}{\textbf{$5.68 \pm 2.58 \times 10^{-5}$}} & \cellcolor[gray]{0.9}\textcolor{ForestGreen}{\textbf{$1.69 \pm 0.02 \times 10^{1}$}} \\
\midrule
\multirow{5}{*}{Navier--Stokes}
 & DDO/NCSN & $4.79 \pm 0.01 \times 10^{-1}$ & $4.39 \pm 0.00 \times 10^{-2}$ & $9.93 \pm 0.08 \times 10^{-1}$ \\
 & DDPM & $5.27 \pm 0.21 \times 10^{-1}$ & \textcolor{Orange}{\textbf{$2.17 \pm 0.52 \times 10^{-2}$}} & \textcolor{ForestGreen}{\textbf{$2.58 \pm 0.63 \times 10^{-1}$}} \\
 & GANO & \textcolor{Orange}{\textbf{$6.16 \pm 4.25 \times 10^{-2}$}} & $2.74 \pm 1.59 \times 10^{-2}$ & $2.21 \pm 0.15 \times 10^{0}$ \\
 & FFM & $7.96 \pm 7.82 \times 10^{-2}$ & $3.11 \pm 0.80 \times 10^{-2}$ & $7.92 \pm 0.89 \times 10^{-1}$ \\
 & kFFM & \cellcolor[gray]{0.9}\textcolor{ForestGreen}{\textbf{$2.01 \pm 2.01 \times 10^{-3}$}} & \cellcolor[gray]{0.9}\textcolor{ForestGreen}{\textbf{$7.51 \pm 3.33 \times 10^{-4}$}} & \cellcolor[gray]{0.9}\textcolor{Orange}{\textbf{$5.87 \pm 0.46 \times 10^{-1}$}} \\
\midrule
\multirow{5}{*}{Stoch. KdV}
 & DDO/NCSN & $3.59 \pm 0.60 \times 10^{-2}$ & $1.55 \pm 0.01 \times 10^{-1}$ & $1.84 \pm 0.00 \times 10^{1}$ \\
 & DDPM & $3.90 \pm 0.78 \times 10^{-3}$ & $7.19 \pm 8.48 \times 10^{-3}$ & \textcolor{ForestGreen}{\textbf{$1.57 \pm 0.22 \times 10^{0}$}} \\
 & GANO & $1.80 \pm 0.31 \times 10^{-3}$ & $1.42 \pm 1.57 \times 10^{-2}$ & $1.45 \pm 0.09 \times 10^{1}$ \\
 & FFM & \textcolor{Orange}{\textbf{$1.69 \pm 1.31 \times 10^{-3}$}} & \textcolor{Orange}{\textbf{$3.15 \pm 2.31 \times 10^{-3}$}} & $4.27 \pm 0.25 \times 10^{0}$ \\
 & kFFM & \cellcolor[gray]{0.9}\textcolor{ForestGreen}{\textbf{$1.17 \pm 0.72 \times 10^{-3}$}} & \cellcolor[gray]{0.9}\textcolor{ForestGreen}{\textbf{$1.62 \pm 0.91 \times 10^{-3}$}} & \cellcolor[gray]{0.9}\textcolor{Orange}{\textbf{$3.97 \pm 0.28 \times 10^{0}$}} \\
\midrule
\multirow{5}{*}{Stoch. NS}
 & DDO/NCSN & \textcolor{Orange}{\textbf{$3.45 \pm 0.01 \times 10^{-1}$}} & $7.59 \pm 0.02 \times 10^{0}$ & $1.06 \pm 0.28 \times 10^{0}$ \\
 & DDPM & $3.52 \pm 0.36 \times 10^{-1}$ & \textcolor{Orange}{\textbf{$4.81 \pm 0.28 \times 10^{0}$}} & $8.60 \pm 2.39 \times 10^{-1}$ \\
 & GANO & $4.38 \pm 0.48 \times 10^{-1}$ & $7.05 \pm 0.65 \times 10^{0}$ & $1.03 \pm 0.93 \times 10^{0}$ \\
 & FFM & $3.72 \pm 2.31 \times 10^{-1}$ & $5.76 \pm 0.76 \times 10^{0}$ & \textcolor{Orange}{\textbf{$6.22 \pm 1.05 \times 10^{-2}$}} \\
 & kFFM & \cellcolor[gray]{0.9}\textcolor{ForestGreen}{\textbf{$3.38 \pm 1.15 \times 10^{-2}$}} & \cellcolor[gray]{0.9}\textcolor{ForestGreen}{\textbf{$2.59 \pm 0.76 \times 10^{-1}$}} & \cellcolor[gray]{0.9}\textcolor{ForestGreen}{\textbf{$3.22 \pm 0.63 \times 10^{-2}$}} \\
\bottomrule
\end{tabular}
\end{adjustbox}
\end{table}

\begin{figure}[t]
\centering
\begin{subfigure}[t]{0.42\linewidth}
\centering
\includegraphics[width=\linewidth]{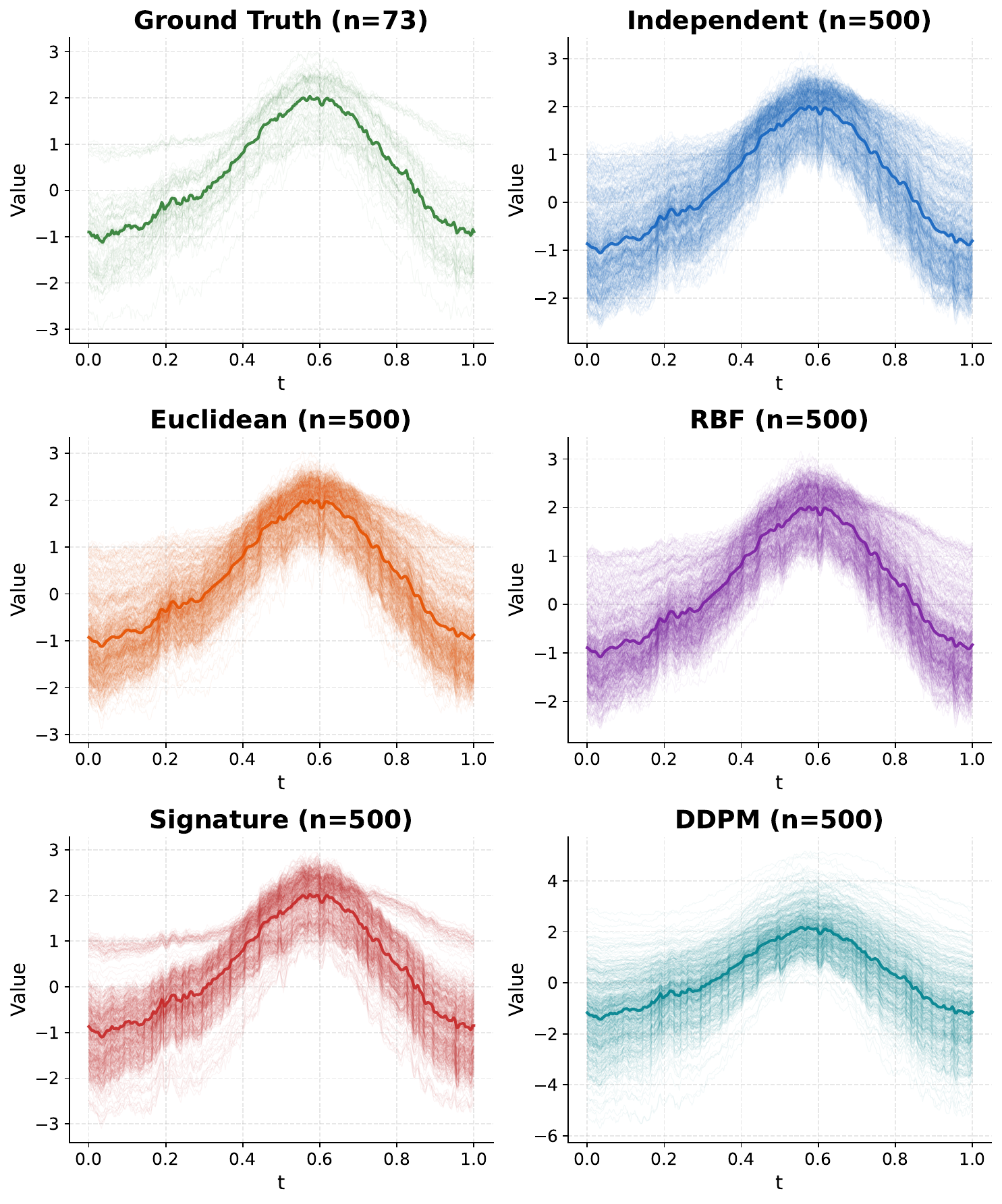}
\caption{Samples.}
\end{subfigure}\hfill
\begin{subfigure}[t]{0.56\linewidth}
\centering
\includegraphics[width=\linewidth]{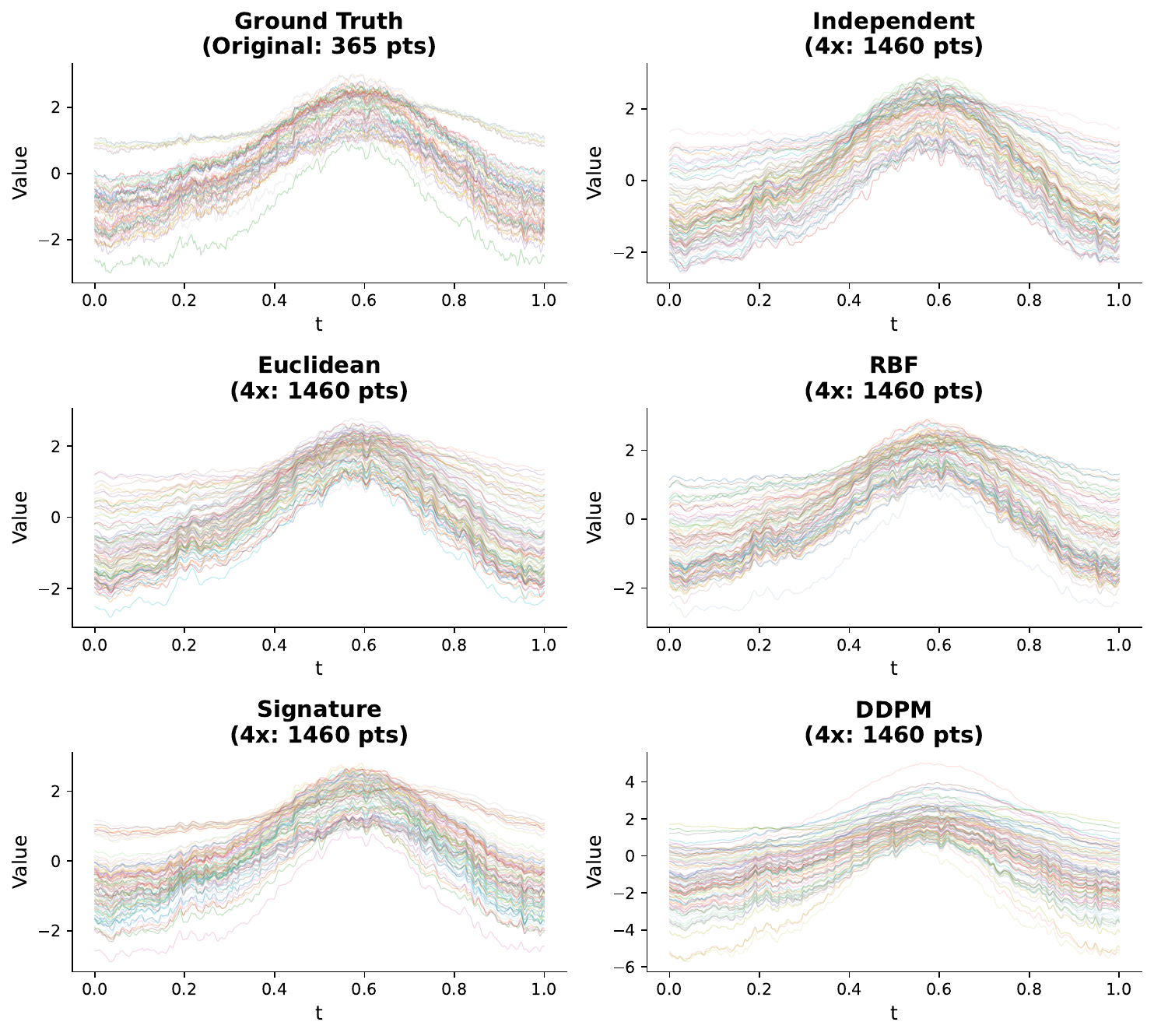}
\caption{4x temporal super-resolution.}
\end{subfigure}
\caption{Qualitative results on AEMET; panels labeled Euclidean, RBF, and Signature are kFFM variants (kFFM-Euc, kFFM-RBF, kFFM-Sig) and Independent is FFM. Kernel OT better preserves the seasonal spread and amplitude of trajectories both at the original resolution and under temporal super-resolution.}
\label{fig:aemet_main}
\end{figure}

\begin{figure}[t]
\centering
\begin{subfigure}[t]{0.49\linewidth}
\centering
\includegraphics[width=\linewidth,height=0.16\textheight,keepaspectratio]{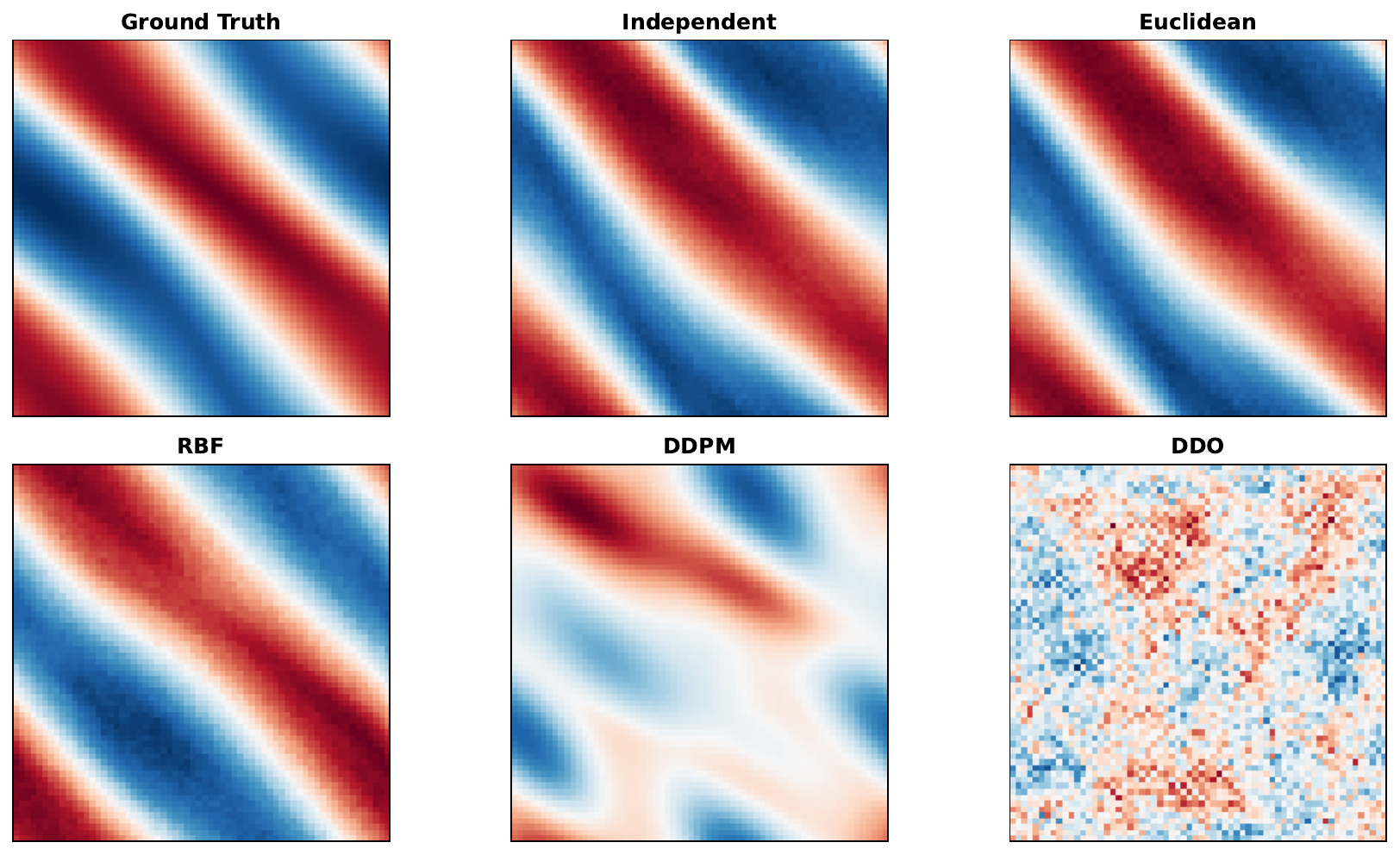}
\caption{Samples.}
\end{subfigure}\hfill
\begin{subfigure}[t]{0.49\linewidth}
\centering
\includegraphics[width=\linewidth,height=0.16\textheight,keepaspectratio]{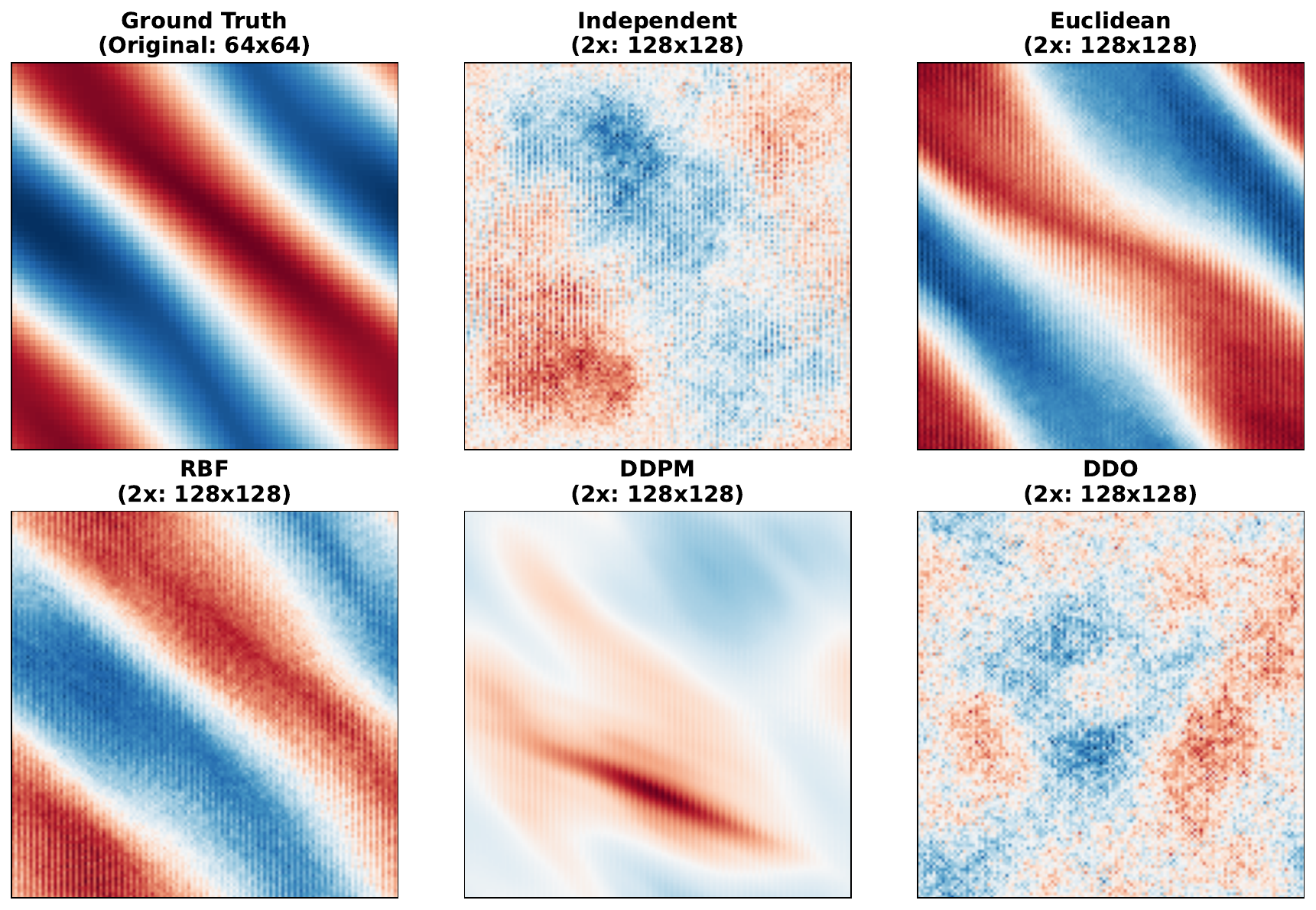}
\caption{2x spatial super-resolution.}
\end{subfigure}
\caption{Qualitative results on Navier--Stokes; panels labeled Euclidean and RBF are kFFM variants (kFFM-Euc, kFFM-RBF) and Independent is FFM. Kernel OT produces smoother and more coherent flow structures than independent coupling, and this advantage persists under spatial super-resolution.}
\label{fig:ns_main}
\end{figure}

\subsection{Full Statistics Report on Sequence Datasets}

For the sequence datasets, in addition to the mean/variance/autocorrelation tables above, we also report kurtosis and skewness in Table \ref{tab:sequence_kernel_full}, since these were also reported in \citet{kerrigan2023functionalflowmatching}. We keep them in the appendix because they are complementary to the main conclusion rather than central to it.

\begin{table}[t]
\scriptsize
\setlength{\tabcolsep}{3pt}
\centering
\caption{Higher-moment sequence-data statistics (skewness and kurtosis errors) for the models with available pointwise diagnostics, following the evaluation convention of \citet{kerrigan2023functionalflowmatching}; mean, variance, and autocorrelation errors are in Table \ref{tab:baseline_seq}, and the full baseline roster, including CFM-OT($L^2$), in Table \ref{tab:kernel_vs_l2_main}. The kFFM row uses the best kernel variant per metric. Lower is better. Best is \textcolor{ForestGreen}{$\mathbf{green}$}, second best is \textcolor{Orange}{$\mathbf{orange}$}.}
\label{tab:sequence_kernel_full}
\begin{adjustbox}{max width=\textwidth}
\begin{tabular}{llrr}
\toprule
Dataset & Model & Skewness & Kurtosis \\
\midrule
\multirow{5}{*}{AEMET}
 & DDO/NCSN & $2.73 \pm 0.24 \times 10^{-1}$ & \textcolor{Orange}{\textbf{$3.57 \pm 0.18 \times 10^{-1}$}}  \\
 & DDPM & $1.61 \pm 0.50 \times 10^{-1}$ & $3.95 \pm 1.79 \times 10^{-1}$  \\
 & GANO & $2.34 \pm 0.39 \times 10^{-1}$ & $3.84 \pm 0.83 \times 10^{-1}$  \\
 & FFM & \textcolor{Orange}{\textbf{$5.52 \pm 4.06 \times 10^{-2}$}} & $3.94 \pm 4.16 \times 10^{-1}$  \\
 & kFFM & \cellcolor[gray]{0.9}\textcolor{ForestGreen}{\textbf{$8.12 \pm 5.35 \times 10^{-3}$}} & \cellcolor[gray]{0.9}\textcolor{ForestGreen}{\textbf{$4.22 \pm 2.61 \times 10^{-2}$}}  \\
\midrule
\multirow{5}{*}{Gene Expr.}
 & DDO/NCSN & $2.02 \pm 0.13 \times 10^{-1}$ & \textcolor{ForestGreen}{\textbf{$3.75 \pm 0.44 \times 10^{-1}$}}  \\
 & DDPM & $3.72 \pm 1.17 \times 10^{-1}$ & $1.57 \pm 1.31 \times 10^{1}$  \\
 & GANO & $8.62 \pm 4.56 \times 10^{-1}$ & $2.88 \pm 2.57 \times 10^{0}$  \\
 & FFM & \textcolor{Orange}{\textbf{$1.75 \pm 0.36 \times 10^{-1}$}} & $1.64 \pm 0.57 \times 10^{0}$  \\
 & kFFM & \cellcolor[gray]{0.9}\textcolor{ForestGreen}{\textbf{$7.43 \pm 0.76 \times 10^{-2}$}} & \cellcolor[gray]{0.9}\textcolor{Orange}{\textbf{$4.87 \pm 1.17 \times 10^{-1}$}}  \\
\midrule
\multirow{5}{*}{Economy}
 & DDO/NCSN & $4.91 \pm 1.24 \times 10^{-1}$ & $4.70 \pm 1.87 \times 10^{0}$  \\
 & DDPM & $1.26 \pm 3.23 \times 10^{0}$ & $9.00 \pm 34.10 \times 10^{1}$  \\
 & GANO & $1.05 \pm 0.57 \times 10^{0}$ & $3.78 \pm 1.71 \times 10^{0}$  \\
 & FFM & \textcolor{Orange}{\textbf{$1.80 \pm 0.80 \times 10^{-1}$}} & \textcolor{Orange}{\textbf{$1.75 \pm 0.69 \times 10^{0}$}}  \\
 & kFFM & \cellcolor[gray]{0.9}\textcolor{ForestGreen}{\textbf{$7.65 \pm 2.94 \times 10^{-2}$}} & \cellcolor[gray]{0.9}\textcolor{ForestGreen}{\textbf{$9.77 \pm 4.01 \times 10^{-1}$}}  \\
\midrule
\multirow{5}{*}{Heston}
 & DDO/NCSN & $1.36 \pm 0.01 \times 10^{1}$ & $1.58 \pm 0.00 \times 10^{3}$  \\
 & DDPM & \textcolor{Orange}{\textbf{$3.97 \pm 0.91 \times 10^{0}$}} & \textcolor{Orange}{\textbf{$9.53 \pm 2.04 \times 10^{2}$}}  \\
 & GANO & $6.19 \pm 0.99 \times 10^{0}$ & $1.33 \pm 0.06 \times 10^{3}$  \\
 & FFM & $5.30 \pm 2.79 \times 10^{0}$ & $1.59 \pm 1.12 \times 10^{3}$  \\
 & kFFM & \cellcolor[gray]{0.9}\textcolor{ForestGreen}{\textbf{$3.31 \pm 0.61 \times 10^{0}$}} & \cellcolor[gray]{0.9}\textcolor{ForestGreen}{\textbf{$8.38 \pm 1.60 \times 10^{2}$}}  \\
\bottomrule
\end{tabular}
\end{adjustbox}
\end{table}

\subsection{Implementation Details and Good Practices}\label{app:practices}

Algorithm \ref{alg:mbotcfm} (Section \ref{sec:coupling}) summarizes kFFM training; here we collect the implementation choices behind it.

Unless stated otherwise, the Sinkhorn regularization is $\epsilon=0.05$ on the median-normalized cost matrix and the RBF bandwidth is $\sigma=1$ on the same normalized scale (i.e.\ the median heuristic). The coupling is computed per minibatch with the following choices, which we found to matter for stability and recommend as defaults. (i) \emph{Median-normalize the per-batch cost matrix} before Sinkhorn; this is the most important stability lever and makes the entropic regularization $\epsilon$ scale-free. (ii) Use \emph{log-domain Sinkhorn} \citep{peyre2020computationaloptimaltransport}. (iii) \emph{Sample endpoint pairs from the plan} rather than using the barycentric projection. (iv) \emph{Use a bounded (kernel) cost}: results are then invariant to $\epsilon$ across three orders of magnitude and Sinkhorn converges faster ($15$--$25\%$ faster end-to-end training than with the unbounded $L^2$ cost; Table \ref{tab:compute}); with the raw $L^2$ cost keep $\epsilon\le0.1$ (we observed a $\approx10\times$ degradation at $\epsilon\ge0.5$ on long volatility paths). (v) Per-batch re-coupling was stable on all datasets. (vi) \emph{Batch size.} Minibatch kernel OT induces the expected minibatch plan $\bar\pi_b$ (Section \ref{sec:coupling}); in a sweep over $b\in\{64,128,256,512\}$ on Heston and Stoch.\ KdV, kFFM tracks FFM across the $8\times$ range, and the only seed-consistent edge favors kFFM at the largest batch (Stoch.\ KdV, $b=512$: $3/3$ seeds, $0.11$ vs.\ $0.52\times10^{-3}$ MMD-RBF).

\subsection{Compute and Memory Cost}\label{app:compute}

Table \ref{tab:compute} summarizes the measured cost of the coupling step, profiled on Heston, Stoch.\ KdV, and Navier--Stokes at $64^2$ and $128^2$ for batch sizes $b\in\{64,128,256,512\}$. The end-to-end overhead is $5$--$7\%$ at the largest resolution we train ($128^2$), and because the $O(b^2)$ coupling cost is independent of model size and resolution, its relative weight shrinks as the backbone grows. The bounded RBF cost also conditions the Sinkhorn iterations (the Gibbs kernel $e^{-c_\kappa/\epsilon}$ stays bounded away from $0$ and $1$ across batches; Theorem \ref{thm:regularity}), which is why it is about $8\times$ faster than unbounded-$L^2$ Sinkhorn at $b=512$. Signature-kernel memory grows with path length, which is why the selection rule assigns RBF or $L^2$ costs to very long sequences.

\begin{table}[t]
\small
\centering
\caption{Measured training cost of the kernel-OT coupling step across benchmarks. The $O(b^2)$ Sinkhorn solve is independent of model size and resolution.}
\label{tab:compute}
\begin{tabular}{p{0.30\linewidth}p{0.62\linewidth}}
\toprule
Quantity & Measurement \\
\midrule
End-to-end training overhead vs.\ FFM & $5$--$7\%$ at the largest trained resolution ($128^2$ Navier--Stokes) \\
Peak coupling memory & $80$--$151$\,MB across benchmarks and $b\in\{64,\dots,512\}$; $1.5$--$9\%$ of the FNO forward+backward peak (e.g.\ $116$\,MB vs.\ $7.62$\,GB on Navier--Stokes $64^2$) \\
Per-batch coupling time & $1$--$80$\,ms, about one FNO step \\
Bounded RBF vs.\ unbounded $L^2$ Sinkhorn ($b=512$) & $9$ vs.\ $73$\,ms per batch ($\approx8\times$); $15$--$25\%$ faster end-to-end training \\
Signature-kernel memory & $0.63$\,GB at $100$ time points vs.\ $12.5$\,GB at $512$; the selection rule therefore assigns RBF/$L^2$ costs to very long sequences \\
Larger-model control & Giving FFM the $+6\%$ budget as extra epochs does not close the gap (loss curves plateau early) \\
\bottomrule
\end{tabular}
\end{table}

\subsection{Non-Kernel Metrics}\label{app:nonkernel}

Table \ref{tab:nonkernel} (Section \ref{sec:quality}) reports sliced Wasserstein distance and marginal $W_1$ for kFFM versus FFM on the same runs. Coupling and evaluation kernels already differ for our strongest configurations (signature coupling, RBF-MMD evaluation), and the gains persist under every non-kernel metric. Table \ref{tab:nonkernel_full} adds the autocorrelation and log-spectrum errors on the same matched runs for the sequence datasets; the pointwise diagnostics by kernel variant on the original 10-seed runs are in Appendix \ref{app:detailed}.

\begin{table}[t]
\small
\setlength{\tabcolsep}{5pt}
\centering
\caption{Non-kernel and structural metrics for kFFM-Sig vs.\ FFM on the sequence datasets (same prior and architecture; lower is better), computed on the matched runs of Table~\ref{tab:paired_tests}: mean$\pm$std over 10 shared seeds (Economy averages the population and GDP series within each seed). Every comparison is individually significant when paired by seed (Wilcoxon $p\le0.027$; kFFM better on 9--10 of the 10 seeds on both datasets); the wide autocorrelation spread on Gene Expression is across-seed variation shared by both methods, which the pairing removes. These runs are distinct from the 10-seed runs behind Table~\ref{tab:sequence_kernel}, so the autocorrelation values differ between the two tables; both favor kFFM.}
\label{tab:nonkernel_full}
\begin{tabular}{llcc}
\toprule
Dataset & Metric & FFM & kFFM-Sig \\
\midrule
\multirow{4}{*}{Gene Expr.} & Sliced-$W$ & $0.106\pm0.006$ & $\mathbf{0.087\pm0.006}$ \\
 & Marginal-$W_1$ & $0.074\pm0.005$ & $\mathbf{0.059\pm0.004}$ \\
 & Autocorr.\ MSE ($\times10^{-5}$) & $9.3\pm6.8$ & $\mathbf{3.3\pm3.6}$ \\
 & log-Spectrum MSE ($\times10^{-2}$) & $1.76\pm0.42$ & $\mathbf{0.86\pm0.22}$ \\
\midrule
\multirow{4}{*}{Economy} & Sliced-$W$ & $0.0291\pm0.0019$ & $\mathbf{0.0233\pm0.0012}$ \\
 & Marginal-$W_1$ & $0.0171\pm0.0016$ & $\mathbf{0.0137\pm0.0009}$ \\
 & Autocorr.\ MSE ($\times10^{-4}$) & $1.59\pm0.39$ & $\mathbf{1.12\pm0.35}$ \\
 & log-Spectrum MSE & $1.09\pm0.07$ & $\mathbf{0.97\pm0.04}$ \\
\bottomrule
\end{tabular}
\end{table}

\subsection{Turbulent Navier--Stokes: Samples and Statistics}\label{app:turbulent}

Table \ref{tab:turbulent_ns_full} gives the turbulent benchmark of Section \ref{sec:turbulent} with explicit configurations and unscaled values, and Figure \ref{fig:turbulent_ns} shows vorticity samples (pooled vorticity PDFs and energy spectra are in Figure \ref{fig:turbulent_stats}). We also re-ran the standard Navier--Stokes configurations of Section \ref{sec:quality} with the same physics diagnostics (3 seeds): the flow-matching family leads (enstrophy $W_1$ of $0.16$ vs.\ $0.28$ for DDPM; vorticity-PDF $W_1$ of $0.14$--$0.15$ vs.\ $0.39$), and kFFM matches FFM within noise on the physics metrics while retaining its distributional edge from Table \ref{tab:kernel_vs_l2_main}. The kernel-cost mismatch is larger in the turbulent regime: prior-data $L^2$ distances more than double relative to the standard benchmark (median $91$ vs.\ $40$), pushing the RBF cost deeper into saturation, the regime where Theorem \ref{thm:approx_error_correct} is weakest; the empirical results carry the method there.

\begin{table}[t]
\scriptsize
\setlength{\tabcolsep}{3pt}
\centering
\caption{Full version of Table \ref{tab:turbulent_ns} with explicit configurations (cost/base measure) and unscaled values. Turbulent 2D Navier--Stokes ($\nu=10^{-5}$, $1200$ trajectories, $\mathrm{Re}\approx2000$): distributional and physics diagnostics (lower is better), mean$\pm$std over 5 seeds, identical FNO backbone and training budget for all flow/diffusion methods. Best per column in bold (on MMD, kFFM and CFM-OT($L^2$) tie within std). Vort-PDF $W_1$: $W_1$ between pooled vorticity distributions; Enstrophy $W_1$: $W_1$ between per-snapshot enstrophy distributions; Skew/Kurt: absolute errors of pooled-vorticity skewness and excess kurtosis; log-Spec: log-spectral error.}
\label{tab:turbulent_ns_full}
\begin{adjustbox}{max width=\linewidth}
\begin{tabular}{lccccccc}
\toprule
Method & MMD & Sliced-W & Vort-PDF $W_1$ & Enstrophy $W_1$ & Skew err. & Kurt err. & log-Spec \\
\midrule
kFFM (selected: Euclidean-RBF/wn) & $\mathbf{0.00148\pm0.0006}$ & $\mathbf{0.193\pm0.01}$ & $\mathbf{0.0375\pm0.01}$ & $\mathbf{0.155\pm0.02}$ & $\mathbf{0.0027\pm0.003}$ & $0.0777\pm0.01$ & $\mathbf{0.0204\pm0.008}$ \\
CFM-OT($L^2$) (raw $L^2$/wn) & $\mathbf{0.00147\pm0.001}$ & $0.194\pm0.01$ & $0.0408\pm0.01$ & $0.157\pm0.01$ & $0.00404\pm0.002$ & $0.0752\pm0.01$ & $0.0223\pm0.007$ \\
kFFM-Sob (Sobolev-RBF/gp) & $0.111\pm0.02$ & $0.612\pm0.05$ & $0.104\pm0.03$ & $0.194\pm0.05$ & $0.0456\pm0.02$ & $\mathbf{0.0679\pm0.05}$ & $0.0652\pm0.01$ \\
kFFM-Euc (raw $L^2$/gp) & $0.109\pm0.02$ & $0.623\pm0.03$ & $0.104\pm0.04$ & $0.197\pm0.06$ & $0.0495\pm0.03$ & $0.0774\pm0.04$ & $0.067\pm0.01$ \\
FFM (gp, independent) & $0.108\pm0.02$ & $0.601\pm0.02$ & $0.107\pm0.04$ & $0.201\pm0.06$ & $0.0487\pm0.03$ & $0.0723\pm0.05$ & $0.0646\pm0.01$ \\
DDO/NCSN & $0.569\pm0.009$ & $1.39\pm0.06$ & $1.22\pm0.01$ & $1.07\pm0.02$ & $0.0083\pm0.01$ & $1.29\pm0.01$ & $0.937\pm0.02$ \\
DDPM & $0.38\pm0.02$ & $1.18\pm0.1$ & $0.848\pm0.03$ & $0.889\pm0.02$ & $0.0789\pm0.05$ & $3.37\pm2$ & $0.117\pm0.03$ \\
\bottomrule
\end{tabular}
\end{adjustbox}
\end{table}

\begin{figure}[t]
\centering
\includegraphics[width=0.62\linewidth]{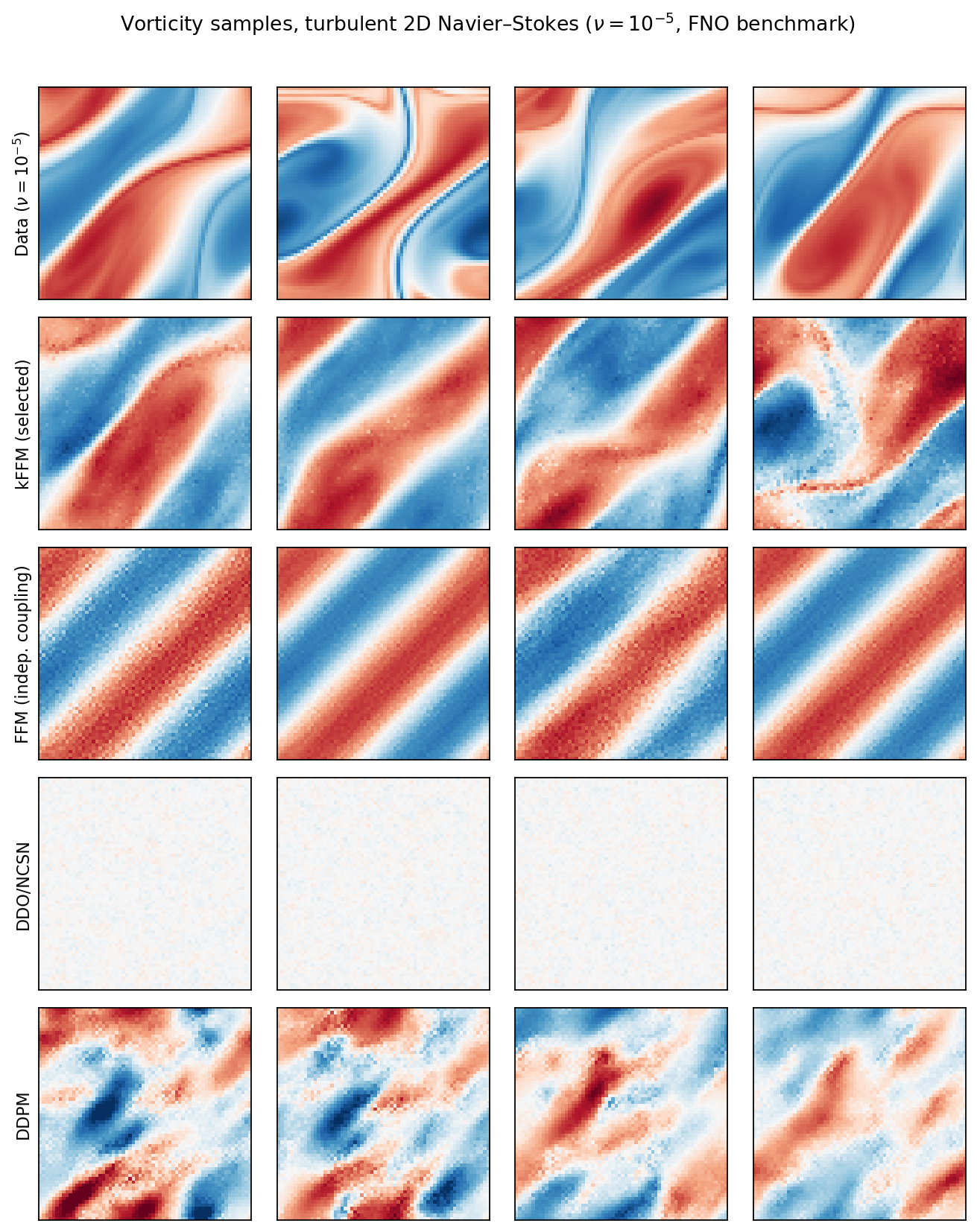}
\caption{Turbulent 2D Navier--Stokes ($\nu=10^{-5}$): vorticity samples on a shared color scale (for each method, the samples whose enstrophy is closest to the data median). kFFM (selected) and FFM reproduce the data's large-scale vortical organization, DDO/NCSN produces near-noise fields, and DDPM coarse blobs. The corresponding pooled vorticity PDFs and energy spectra are in Figure \ref{fig:turbulent_stats}.}
\label{fig:turbulent_ns}
\end{figure}

\subsection{Measuring the Kernel-Cost Mismatch}\label{app:mismatch}

Analytically, $c_{\mathrm{RBF}}(f,g)=2(1-e^{-\|f-g\|^2/2\sigma^2})$ is a rescaled $L^2$ cost up to relative error $O((D/\sigma)^2)$, with $D$ the batch diameter; median normalization cancels the rescaling, so $\Delta_\kappa$ is small at large bandwidth. We measured the mismatch on training-faithful batches through the rank correlation between kernel and $L^2$ cost matrices and the total variation (TV) distance between the induced plans. (1) Where the RBF cost is not saturated it is a monotone transform of $L^2$ (rank correlation $\approx1$: Gene Expression $1.000$, Heston $0.98$), i.e.\ the same geometry with compressed outliers. (2) The signature kernel genuinely re-ranks pairs (rank correlation with $L^2$ of only $0.33$--$0.37$ on AEMET and Gene Expression), the data class where its wins are largest and most significant; $\Delta_\kappa$ is large there by design, and Theorem \ref{thm:approx_error_correct} controls the deviation from Wasserstein with the chosen cost rather than proximity to $L^2$ OT. (3) On smooth Navier--Stokes fields the plans agree across costs (plan TV $\le0.02$), the small-$\Delta_\kappa$ regime where the bound is tightest and the bounded cost's stability and speed are the operative benefits.

\subsection{Synthetic Couplings: Where \texorpdfstring{$L^2$}{L2} Pairing Fails}\label{app:toy}

Figure \ref{fig:toy_coupling} isolates the two failure modes of $L^2$ pairing discussed in Section \ref{sec:quality} on synthetic data at the training coupling protocol (stable over 6 seeds): a conditioning failure under heavy-tailed nuisance bursts, cured by any bounded cost, and a geometry failure on paths matched by volatility, cured only by the signature kernel.

\begin{figure}[t]
\centering
\includegraphics[width=\linewidth]{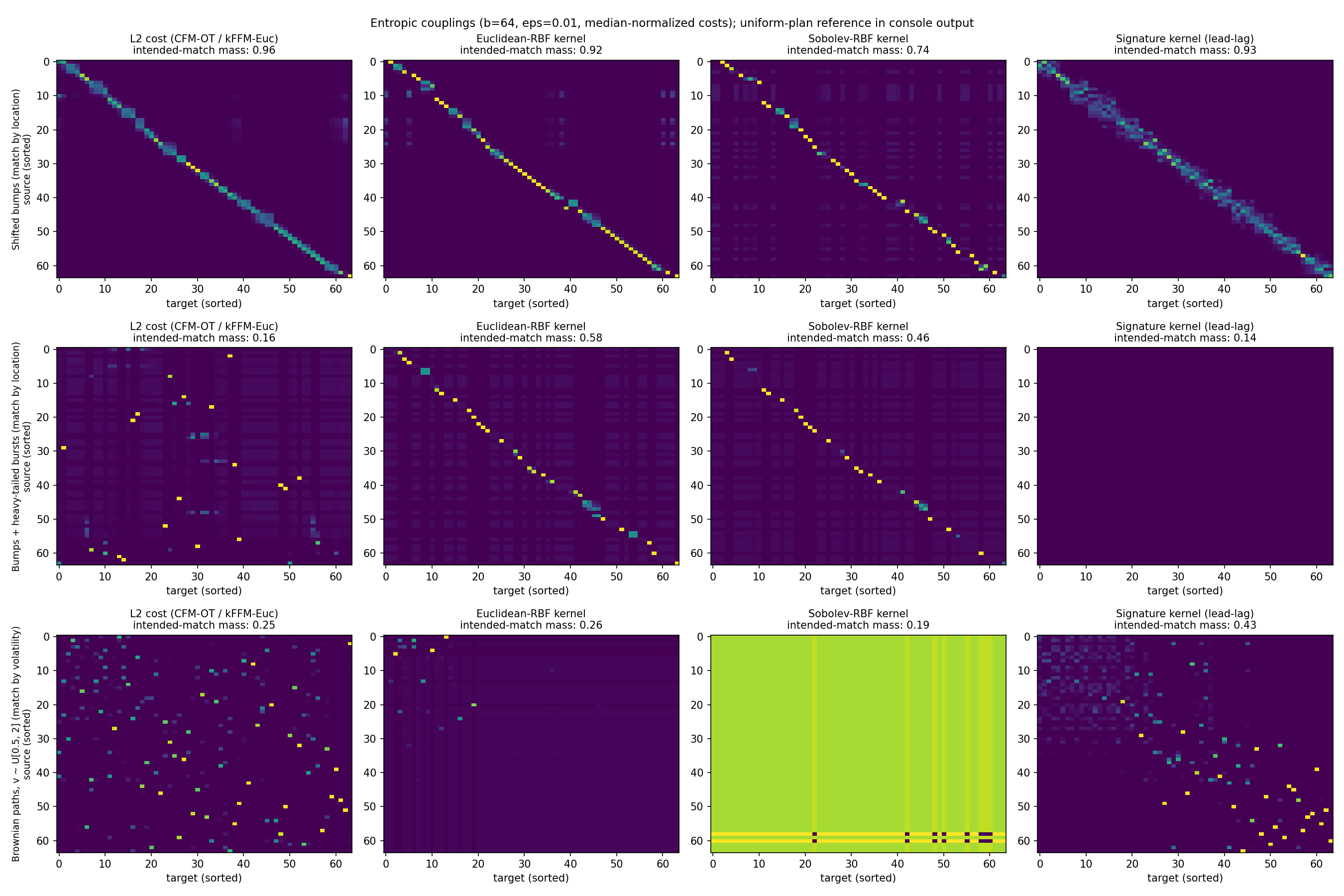}
\caption{Entropic minibatch couplings ($b=64$, $\epsilon=0.01$, median-normalized costs as in training) for three synthetic scenarios (rows) and four costs (columns). Sources and targets are sorted so that the intended matching is the diagonal, and each panel reports the plan mass on intended matches (a uniform plan gives $\approx0.14$--$0.19$). Top: shifted bumps matched by location; every cost recovers the matching. Middle: bumps where $40\%$ of samples carry large, independent nuisance bursts. Burst pairs dominate the median normalization of the unbounded $L^2$ cost, collapsing clean-pair contrasts below $\epsilon$, so its plan degenerates to near uniform ($0.16$); the bounded RBF cost saturates the bursts and recovers the matching ($0.58$), invariant to burst amplitude (the conditioning effect of Theorem~\ref{thm:regularity}). Bottom: Brownian paths $f=vW(t)$, $v\sim U[0.5,2]$, matched by volatility. The expected $L^2$ cost is separable, $\mathbb{E}\|f-g\|^2=(v^2+w^2)\int t\,dt$, so it carries no assortative signal in expectation and its plan stays near uniform ($0.25$); the signature kernel (lead-lag) reads volatility through quadratic variation and concentrates on the intended matching ($0.43$). No single cost wins every scenario, which motivates the per-data-class selection rule of Appendix~\ref{app:selection}.}
\label{fig:toy_coupling}
\end{figure}

\subsection{Why Straightness Arguments Do Not Directly Transfer}\label{app:straightness}

Finite-dimensional OT-CFM is often motivated by the Euclidean $W_2$ picture: a better coupling, combined with linear interpolation, can lead to straighter paths and hence lower numerical integration cost. That intuition depends on geometric facts that are specific to the finite-dimensional Wasserstein setting and should not be treated as automatic in function space.

\begin{proposition}[Coupling-only role of kernel OT]\label{prop:coupling_only}
Fix a coupling $\pi \in \Pi(\mu_0,\mu_1)$. In kFFM, conditional on $(f_0,f_1)\sim \pi$, the path is
\[
g_t^{f_0,f_1} = t f_1 + \sigma_t f_0,\qquad \sigma_t=1-(1-\sigma_{\min})t,
\]
so that
\[
\partial_t g_t^{f_0,f_1} = f_1-(1-\sigma_{\min})f_0,\qquad \partial_t^2 g_t^{f_0,f_1}=0:
\]
the conditional path is affine in $t$ and deterministic given the pair, and its geometry does not depend on $\pi$. Moreover, the training objective can be written as
\[
\mathcal L_{CFM}^{\pi}(\theta)= \mathbb E_{t,\,(f_0,f_1)\sim \pi}
\left\| \partial_t g_t^{f_0,f_1}- v_\theta\big(g_t^{f_0,f_1},t\big)\right\|_{\mathcal F}^2 .
\]
Therefore, optimizing over $\pi$ changes only the distribution of endpoint pairs presented to the model; it does not introduce any explicit pathwise straightness, geodesic, or action-minimization term.
\end{proposition}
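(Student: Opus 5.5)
The proof is a direct computation; nothing deep is needed. I would first fix a pair $(f_0,f_1)$ in the support of $\pi$ and regard $g_t^{f_0,f_1}=tf_1+\sigma_t f_0$ as a curve $t\mapsto g_t$ in $\mathcal F$. Since $\sigma_t=1-(1-\sigma_{\min})t$ is affine in $t$ with $\sigma_t'=-(1-\sigma_{\min})$, the curve is an affine combination of two fixed elements of $\mathcal F$. Differentiating in the norm topology of $\mathcal F$, which is legitimate because $t\mapsto tf_1+\sigma_t f_0$ is a polynomial in $t$ with vector coefficients, gives $\partial_t g_t=f_1-(1-\sigma_{\min})f_0$. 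This is constant in $t$, so $\partial_t^2 g_t=0$. The conditional path is therefore affine and deterministic given the pair. Its formula involves only $(f_0,f_1,t,\sigma_{\min})$ and never $\pi$, which is the claim that its geometry does not depend on the coupling.

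Next I would rewrite the loss. Algorithm \ref{alg:mbotcfm} draws $(f_0,f_1)\sim\pi$ and $t\sim U[0,1]$ independently. It sets $g=g_t^{f_0,f_1}$ and regresses $v_\theta(g,t)$ onto $v_t^{f_0,f_1}(g)$. By the identity already displayed in Section \ref{sec:coupling}, $v_t^{f_0,f_1}(g_t^{f_0,f_1})=(f_1-(1-\sigma_{\min})g_t)/\sigma_t=f_1-(1-\sigma_{\min})f_0$. This equals $\partial_t g_t^{f_0,f_1}$ by the first step. I would check this algebra explicitly: $f_1-(1-\sigma_{\min})(tf_1+\sigma_tf_0)=\sigma_t f_1-(1-\sigma_{\min})\sigma_t f_0$, using $1-(1-\sigma_{\min})t=\sigma_t$. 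This step also needs $\sigma_t\ge\sigma_{\min}>0$, so the division is valid. Substituting into \eqref{eq:cfm} with the product law of $t$ and $(f_0,f_1)$ gives the displayed expression $\mathcal L^{\pi}_{CFM}$.

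For the final interpretive claim, I would observe that in $\mathcal L^\pi_{CFM}(\theta)$ the coupling enters only as the integrating measure. Both the integrand $\|\partial_t g_t-v_\theta(g_t,t)\|^2$ and the map $(f_0,f_1,t)\mapsto g_t$ are fixed functionals independent of $\pi$. Varying $\pi$ therefore reweights the same family of affine paths and adds no term penalizing curvature, length, or action.

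There is no real obstacle here. The only care point is measurability and integrability, needed so that the expectation is well defined and the substitution is justified. Measurability holds because $g_t$ is continuous in $(t,f_0,f_1)$. For integrability, I would assume $\pi$ has finite second moments in $\mathcal F$, which holds since the Gaussian prior has $\operatorname{tr}(C_0)<\infty$ and the data measure is assumed to have a second moment. The loss may otherwise be $+\infty$, and the identity still holds in $[0,\infty]$.
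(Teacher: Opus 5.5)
Your proposal is correct and follows the same route as the paper's proof. You differentiate the affine path, use the identity $v_t^{f_0,f_1}(g_t^{f_0,f_1})=f_1-(1-\sigma_{\min})f_0=\partial_t g_t^{f_0,f_1}$ to substitute into the CFM objective, and observe that $\pi$ enters only as the sampling law of the endpoint pairs; your explicit algebra check, the remark that $\sigma_t\ge\sigma_{\min}>0$ justifies the division, and the measurability and integrability caveats are careful additions that the paper leaves implicit.
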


\begin{proof}
The path and its velocity are exactly those of Section \ref{sec:coupling}; they coincide with the FFM path $tf+\sigma_t\xi$ and conditional field $f-(1-\sigma_{\min})\xi$ of Section \ref{sec:ffm_background} with the base noise $\xi$ replaced by the paired prior sample $f_0$. Differentiating the affine path gives the displayed derivatives. Substituting the conditional path into the conditional flow matching objective yields the displayed loss. Since $\pi$ enters only through the sampling law of $(f_0,f_1)$ and no additional functional of the path $t\mapsto g_t^{f_0,f_1}$ appears in the objective, the method does not explicitly optimize any global path-geometry quantity.
\end{proof}

\begin{corollary}[Coupling-invariance of ambient path straightness]\label{cor:straightness_invariance}
For a twice differentiable path $m:[0,1]\to\mathcal F$, define the ambient acceleration and speed-variation functionals
\[
\mathsf A(m):=\int_0^1 \|\partial_t^2 m_t\|_{\mathcal F}^2\,dt,\qquad
\mathsf V(m):=\int_0^1\left(\|\partial_t m_t\|_{\mathcal F}-\int_0^1 \|\partial_s m_s\|_{\mathcal F}\,ds\right)^2dt.
\]
Then for every coupling $\pi\in \Pi(\mu_0,\mu_1)$ and every endpoint pair $(f_0,f_1)\sim \pi$, the kFFM conditional path $g^{f_0,f_1}$ satisfies
\[
\mathsf A(g^{f_0,f_1})=0,\qquad \mathsf V(g^{f_0,f_1})=0.
\]
Hence these natural ambient-space straightness measures of the prescribed conditional path are identical for all couplings.
\end{corollary}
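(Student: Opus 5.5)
The corollary should follow directly from Proposition~\ref{prop:coupling_only}, so I will reduce it to a pathwise computation that never involves $\pi$. First, fix any coupling $\pi\in\Pi(\mu_0,\mu_1)$ and any endpoint pair $(f_0,f_1)$ in its support. Then write the conditional path explicitly as $g_t=tf_1+\sigma_t f_0$, with $\sigma_t=1-(1-\sigma_{\min})t$. This map is affine in $t$ with values in $\mathcal F$, so it is twice (indeed infinitely) differentiable as an $\mathcal F$-valued map. That means $\mathsf A$ and $\mathsf V$ are well defined on it.

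\textbf{Acceleration term.} Proposition~\ref{prop:coupling_only} gives $\partial_t^2 g_t^{f_0,f_1}=0$ for every $t$. Hence the integrand $\|\partial_t^2 g_t\|_{\mathcal F}^2$ vanishes identically, and $\mathsf A(g^{f_0,f_1})=0$.

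\textbf{Speed-variation term.} The same proposition gives $\partial_t g_t^{f_0,f_1}=f_1-(1-\sigma_{\min})f_0=:u$, and $u$ does not depend on $t$. The speed $\|\partial_t g_t\|_{\mathcal F}=\|u\|_{\mathcal F}$ is therefore constant in $t$. Its time average $\int_0^1\|\partial_s g_s\|_{\mathcal F}\,ds$ then equals $\|u\|_{\mathcal F}$. The integrand of $\mathsf V$ is thus $(\|u\|_{\mathcal F}-\|u\|_{\mathcal F})^2=0$ pointwise, so $\mathsf V(g^{f_0,f_1})=0$.

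\textbf{Conclusion.} Neither computation used any property of $\pi$ beyond the fact that it supplies the pair $(f_0,f_1)$. Both functionals therefore vanish for every coupling and every pair, which gives the claimed invariance. There is no genuine obstacle here. The only point to state carefully is that the $\mathcal F$-valued derivatives exist in the norm sense. This holds because $t\mapsto tf_1+\sigma_t f_0$ is a finite linear combination of fixed vectors with polynomial coefficients. Note also that $\|u\|_{\mathcal F}<\infty$ because $f_0,f_1\in\mathcal F$.
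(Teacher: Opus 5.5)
Your proposal is correct and follows the paper's own proof: both read off $\partial_t g_t^{f_0,f_1}=f_1-(1-\sigma_{\min})f_0$ (constant in $t$) and $\partial_t^2 g_t^{f_0,f_1}=0$ from Proposition~\ref{prop:coupling_only}, so $\mathsf A$ vanishes and the constant speed equals its own time average, which makes $\mathsf V$ vanish. Your extra remark on norm-differentiability of the affine $\mathcal F$-valued path is a harmless refinement. It also shows the computation holds for every pair in $\mathcal F\times\mathcal F$, not only pairs in the support of $\pi$.
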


\begin{proof}
By Proposition \ref{prop:coupling_only}, $\partial_t g_t^{f_0,f_1}=f_1-(1-\sigma_{\min})f_0$ is constant in $t$ and $\partial_t^2 g_t^{f_0,f_1}=0$. Therefore $\mathsf A(g^{f_0,f_1})=0$. The speed $\|\partial_t g_t^{f_0,f_1}\|_{\mathcal F}=\|f_1-(1-\sigma_{\min})f_0\|_{\mathcal F}$ is also constant in $t$, so it coincides with its own time average and $\mathsf V(g^{f_0,f_1})=0$.
\end{proof}

Proposition \ref{prop:coupling_only} and Corollary \ref{cor:straightness_invariance} formalize the internal reason that the usual OT-CFM straightness argument does not transfer directly to kFFM: kernel OT acts only at the level of endpoint coupling, while the interpolation template remains the same affine FFM path. Any improvement in convergence or sampling cost could therefore only arise indirectly through better endpoint pairings and the learned velocity field, not from an explicit path-straightness objective, and we do not claim one.

Our setting is different in a second, geometric way. FFM is formulated on Gaussian measures over separable Hilbert spaces, where absolute continuity and transport already depend on Cameron--Martin and Feldman--Hajek type conditions rather than Lebesgue-density arguments. Even when one restricts attention to Gaussian optimal transport, the appropriate geometry is the Bures--Wasserstein geometry rather than the flat Euclidean geometry used by standard OT-CFM heuristics.

Recent work by \citet{yun2025gaussianoptimaltransportbreniers} makes this distinction explicit. For Gaussian measures on separable Hilbert spaces, they show that the optimal transport map can fail to admit the usual Brenier variational interpretation: formally, it is the subgradient of a convex function that is infinite almost everywhere, and the geodesic structure between degenerate Gaussian measures is richer than the classical finite-dimensional McCann interpolation picture. This does not mean that efficient flows are impossible in function space; rather, it means that the usual ``OT implies straighter paths'' argument is not a theorem in the present setting.

For this reason, our paper does not claim that kFFM is geodesic or pathwise optimal in a Bures--Wasserstein sense. The theoretical guarantees in this paper concern the well-posedness, error decomposition, and discretization invariance of the kernel OT surrogate, not global straightness of the learned ODE.

This is also why Section \ref{sec:exp} makes no efficiency claim: the main supported claim of the paper is better distributional matching from geometry-aware coupling, and Appendix \ref{app:convergence} reports only the training-time convergence of the target metric.

\subsection{Target-Metric Convergence During Training}\label{app:convergence}

In finite-dimensional OT-CFM, optimal transport is often motivated by improved convergence and straighter flows \citep{tong2024improvinggeneralizingflowbasedgenerative}. As discussed in Appendix \ref{app:straightness}, we do not treat such properties as theoretical consequences in function space and make no efficiency claim. We report only the convergence of the target metric during training: Table \ref{tab:convergence_summary} and Figure \ref{fig:convergence_main} show the MMD-RBF of FFM and kFFM over the final phase of training, with both models sharing the GP prior and FNO backbone so that the only change is the coupling. The curves are the best-so-far value (cumulative minimum) of the across-seed mean; on KdV, where the unbiased estimate is noisy at the level of single checkpoints, the curves are instead the running mean over the final phase. kFFM ends training at a better MMD-RBF on every dataset, with the largest gains on Stoch.\ KdV, Gene Expr., AEMET, Economy, and Heston.

\begin{table}[t]
\small
\setlength{\tabcolsep}{4pt}
\centering
\caption{Relative MMD-RBF improvement of kFFM over FFM at the end of the convergence curves of Figure \ref{fig:convergence_main} (higher is better; values above $100\%$ occur where the unbiased estimate for kFFM is negative). Both methods share the GP prior and FNO backbone, isolating the coupling as the only changed component.}
\label{tab:convergence_summary}
\resizebox{\linewidth}{!}{%
\begin{tabular}{lcccccccc}
\toprule
Dataset & Heston & AEMET & KdV & Economy & Gene Expr. & Stoch.\ KdV & Stoch.\ NS & Navier--Stokes \\
\midrule
$\Delta$MMD-RBF (\%) & $\mathbf{+32\%}$ & $\mathbf{+53\%}$ & $\mathbf{+8\%}$ & $\mathbf{+52\%}$ & $\mathbf{+64\%}$ & $\mathbf{+225\%}$ & $\mathbf{+7\%}$ & $\mathbf{+9\%}$ \\
\bottomrule
\end{tabular}}
\end{table}

\begin{figure}[t]
\centering
\includegraphics[width=\linewidth]{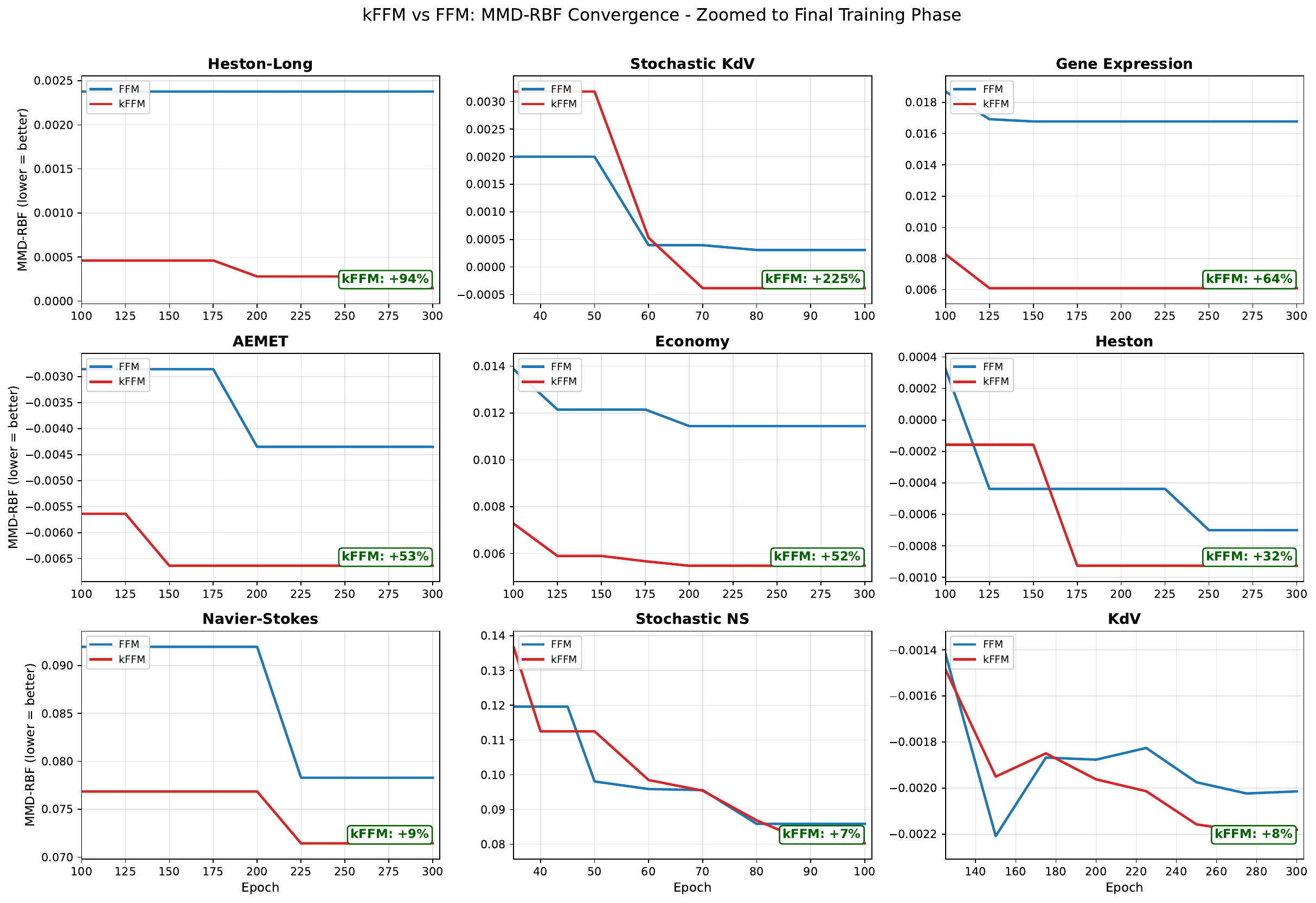}
\caption{Per-dataset target-metric convergence using MMD-RBF (lower is better) corresponding to the headline summary in Table \ref{tab:convergence_summary}. The plot shows the best-so-far across-seed mean MMD-RBF over the final phase of training (on KdV, the running mean over the final phase, from epoch 125) for kFFM versus the original FFM under matched GP prior and FNO backbone, so the only change is the coupling. The kFFM curve uses the signature kernel on Stoch.\ KdV, Gene Expr., AEMET, Economy, and Heston, the Sobolev-RBF cost on KdV, the Euclidean--RBF cost on Stoch.\ NS, and the raw $L^2$ cost on Navier--Stokes and Heston-Long. kFFM ends training at a better MMD-RBF on every dataset, with especially large gains on Stoch.\ KdV, Gene Expr., AEMET, Economy, and Heston. The additional Heston-Long panel is a long-horizon variant of Heston ($1000$ time steps) used only in this diagnostic and in the sensitivity study; it is not one of the eight benchmark datasets.}
\label{fig:convergence_main}
\end{figure}

\subsection{Hyperparameter Sensitivity}\label{app:sensitivity}

Unless otherwise noted, we keep the training setup, neural architectures, Gaussian prior, and remaining OT settings fixed at the default values from the main experiments, and vary only one hyperparameter at a time. We use the \texttt{POT}\footnote{\url{https://pythonot.github.io/}} package \citep{flamary2021pot} for optimal transport. The figures below are direct one-at-a-time sweeps: they vary $\sigma$ or $\epsilon$ while holding the other settings fixed. We additionally report the sensitivity to the Gaussian-prior length scale on the 2D PDE datasets, where it has the clearest effect.

The kernel-specific hyperparameters are: 
\begin{enumerate}
    \item RBF kernel: the bandwidth $\sigma\in\{0.1, 0.2, 0.5, 1, 2, 5, 10\}$.
    \item Signature kernel: 
    \begin{enumerate}
        \item Boolean switches for the lead-lag and time augmentations (\verb|lead_lag|, \verb|time_aug|).
        \item Dyadic order in $\{0, 1, 2, 3\}$.
        \item Static-kernel bandwidth between $0.1$ and $5$.
        \item Maximum sequence length (subsampling) in $\{32, 50, 64, 128\}$.
    \end{enumerate}
\end{enumerate}

The hyperparameters for the Sinkhorn algorithm are:
\begin{enumerate}
    \item \verb|ot_reg|, the regularization parameter $\epsilon$; the sweeps of Figure \ref{fig:sensitivity_sweeps} use $\{0.01, 0.05, 0.1, 0.5, 1.0\}$, and the full grids extend this range.
    \item \verb|ot_method|: Sinkhorn or exact OT; Sinkhorn generally performs better.
    \item \verb|ot_coupling|: sampling from the plan (default) or barycentric projection; sampling usually performs better.
\end{enumerate}

Figure \ref{fig:sensitivity_summary} summarizes the aggregate MMD-RBF spread across datasets for the main tunable kernel families, where the spread for a given dataset is computed as $(\max - \min)/|\mathrm{mean}|$ over the tested settings. The dominant pattern is that the Sinkhorn regularization is typically quite stable, while the RBF bandwidth can matter more on selected datasets. Figure \ref{fig:sensitivity_sweeps} then shows the direct sweeps on representative datasets, run at the white-noise base measure: the top panel varies the RBF bandwidth, and the bottom panel varies the Sinkhorn regularization. We focus these direct sweeps on the axes that showed the most variation; the signature-kernel settings were substantially flatter on the tested sequence datasets and are summarized in Figure \ref{fig:sensitivity_summary}. Re-aggregating the full grids: results are indistinguishable within seed-to-seed variation for RBF bandwidths $\sigma\in[0.1,2]$ on every dataset (the wider sweeps in Figure \ref{fig:sensitivity_sweeps} show where larger bandwidths start to matter on selected datasets); the Sinkhorn regularization is flat across three orders of magnitude ($\epsilon\in[0.001,1]$) for bounded costs, whereas with the raw $L^2$ cost $\epsilon\ge0.5$ degrades long volatility paths by $\approx10\times$; and the signature hyperparameters (dyadic order $0$--$3$, static bandwidth $0.1$--$5$) leave the results unchanged. The kernel \emph{family} and the base measure are the choices that matter: Euclidean costs, raw or bounded, still lose to the signature kernel by $2.6$--$2.9\times$ on Gene Expression and Economy (Section \ref{sec:quality}).

\begin{figure}[t]
\centering
\includegraphics[width=0.82\textwidth]{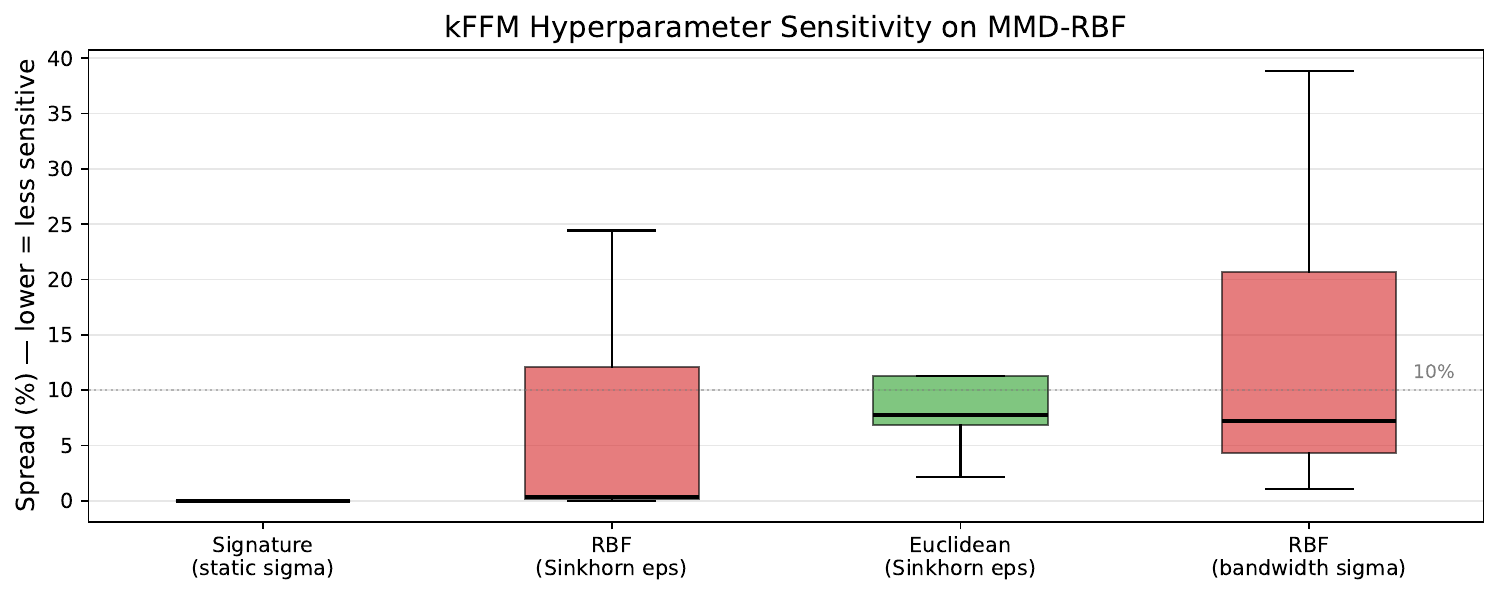}
\caption{Aggregate MMD-RBF sensitivity of kFFM hyperparameters across datasets; the spread is $(\max-\min)/|\mathrm{mean}|$ over the tested settings of the hyperparameter in parentheses, and lower is less sensitive. Signature-kernel settings are nearly invariant on the tested sequence datasets, the Sinkhorn regularization is typically less influential than the RBF bandwidth, and the results depend more on $\epsilon$ under the raw $L^2$ (Euclidean) cost than under the bounded RBF cost (median spread of about $8\%$ vs.\ near zero).}
\label{fig:sensitivity_summary}
\end{figure}

\begin{figure}[p]
\centering
\includegraphics[width=\textwidth]{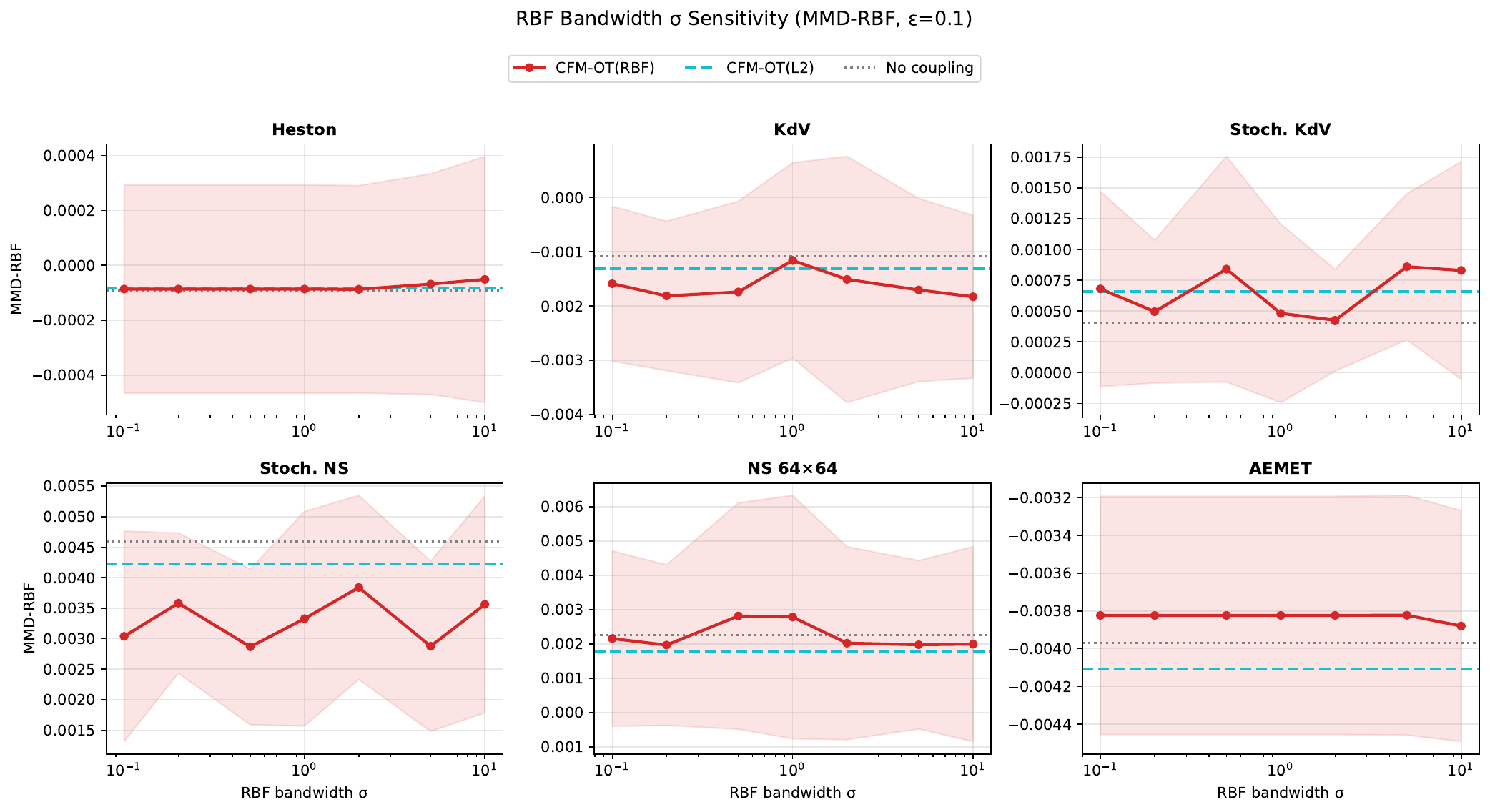}
\vspace{6pt}
\includegraphics[width=\textwidth]{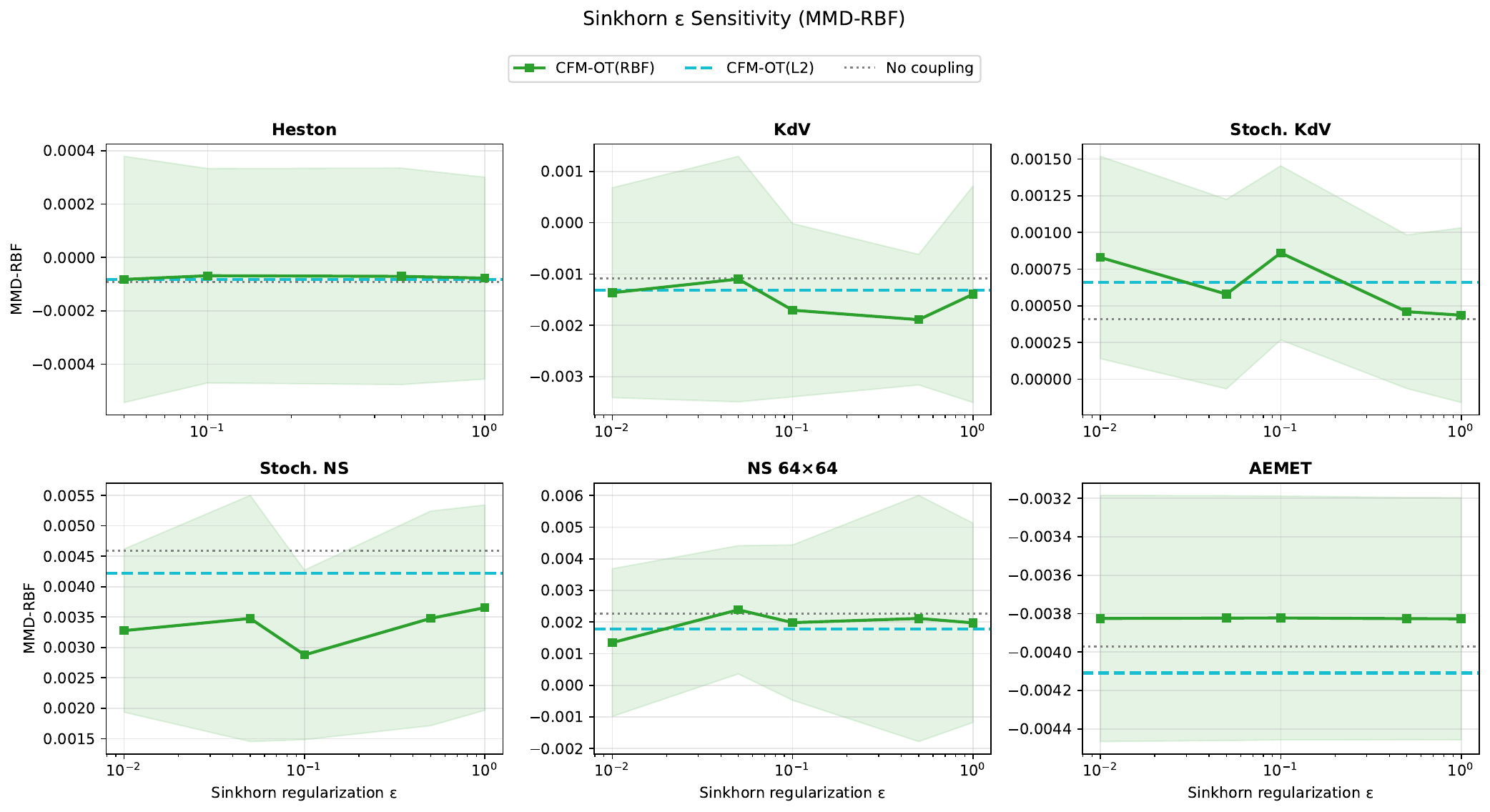}
\caption{Direct one-at-a-time MMD-RBF sweeps on representative datasets (mean and one standard deviation across seeds). Top: varying the RBF bandwidth $\sigma$ at fixed Sinkhorn regularization ($\epsilon=0.1$). Bottom: varying $\epsilon$ at fixed bandwidth. All arms in this figure use the white-noise base measure: the solid curve (legend: CFM-OT(RBF)) is the Euclidean--RBF kernel coupling, the dashed line is CFM-OT($L^2$), and the dotted line (legend: No coupling) is independent coupling at the same base. Across the tested values the swept arm moves within its seed-to-seed variation.}
\label{fig:sensitivity_sweeps}
\end{figure}

The choice of Gaussian prior matters more on the 2D PDE datasets (Navier--Stokes and Stoch.\ NS), independently of the kernel. Figure \ref{fig:gp_sensitivity} shows the sensitivity of these two datasets to the length scale of the Gaussian prior, which encodes a prior belief about the smoothness of the data.

\begin{figure}[t]
    \centering
    \includegraphics[width=\linewidth]{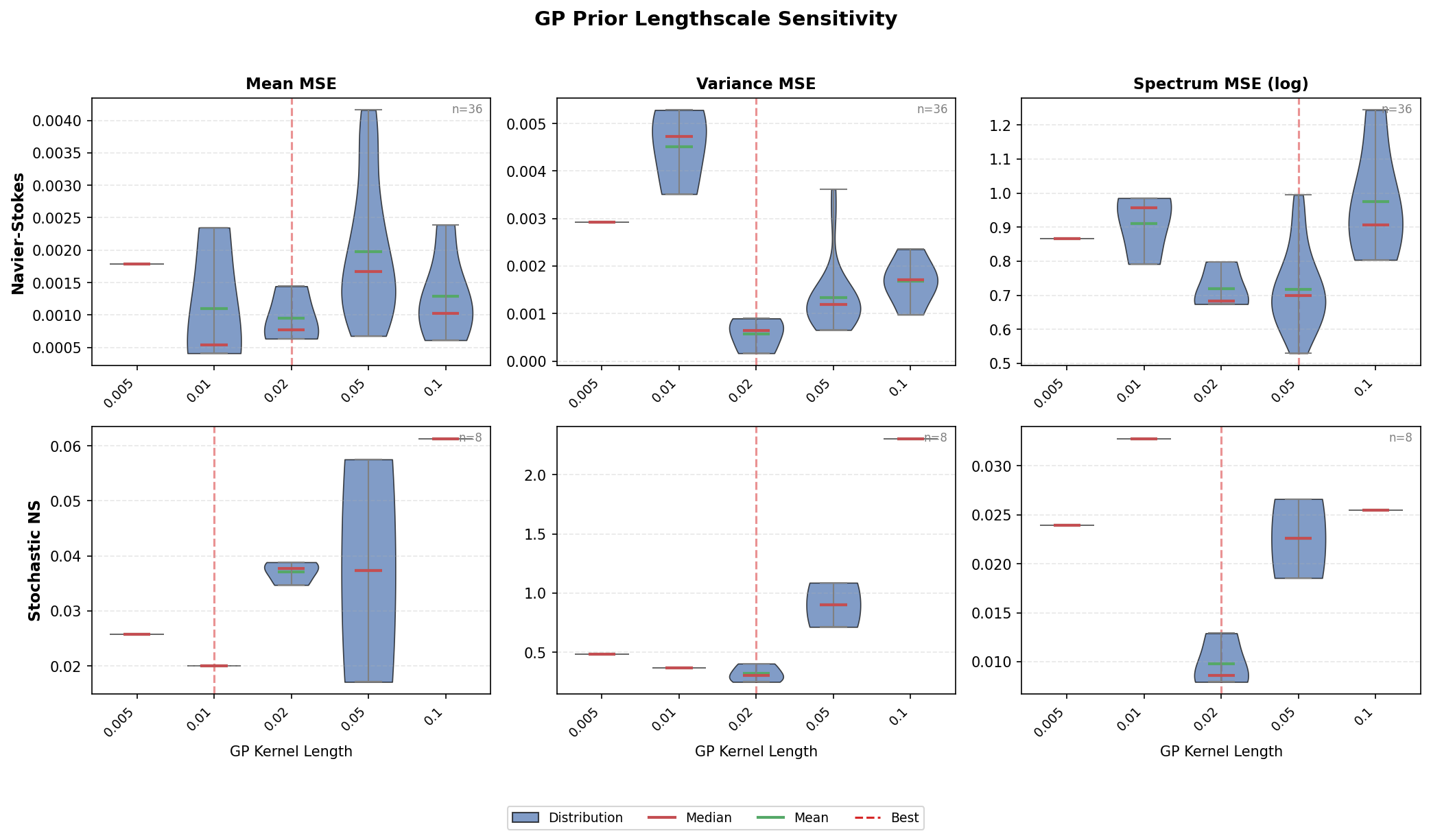}
    \caption{Distribution of the pointwise metrics (mean MSE, variance MSE, log-spectrum error) on the two 2D PDE datasets, Navier--Stokes and Stoch.\ NS, as a function of the length scale of the Gaussian prior.}
    \label{fig:gp_sensitivity}
\end{figure}

\section{Extended Related Work}\label{app:d}

This section expands the discussion of Section \ref{sec:related}. The study of using optimal transport \citep{Villani2008OptimalTO} to improve flow matching was first proposed by \citet{tong2024improvinggeneralizingflowbasedgenerative, pooladian2023multisampleflowmatchingstraightening}, which add an OT-induced coupling of each minibatch before the interpolation and training step. Several follow-up works \citep{kornilov2024optimalflowmatchinglearning, yue2025oatfmoptimalaccelerationtransport} have aimed to further expand the framework. All have assumed that the sample space is $\mathbb{R}^d$. In order to better capture the resolution-invariance of grid data and continuity of path data, \citet{kerrigan2023functionalflowmatching} first proposed Functional Flow Matching, a framework to perform flow matching in Hilbert space, relying on Gaussian measure theory. However, OT-based endpoint coupling had not been brought to this space.

The line of research that embeds probability measures in RKHS has been proposed by the pioneering work of \citet{Sriperumbudur2010Hilbert, gretton2008kernelmethodtwosampleproblem}, where kernel-induced metrics, such as Integral Probability Metrics (IPM), have been studied extensively \citep{Dudley_2002, Sriperumbudur2010Hilbert} and used for two-sample  hypothesis testing \citep{gretton2008kernelmethodtwosampleproblem}, most recently in path space \citep{chevyrev2022signaturemoments} with the signature kernel. The IPM in the form of the Maximum Mean Discrepancy (MMD) has been linked with both probability divergences and Wasserstein distances \citep{feydy2018interpolatingoptimaltransportmmd}, and it has an entropic OT formulation in the Hilbert Sinkhorn Divergence of \citet{Li_2021_CVPR}. It has also been used as the objective of gradient flows \citep{galashov2024deepmmdgradientflow, arbel2019maximummeandiscrepancygradient}, which have been adopted in deep generative models. The extension of these frameworks to function space is an active area of research, and to our knowledge our work is the first to explore it in the context of flow matching.

\paragraph{Concurrent function-space flow matching.} Three concurrent works are closest to ours; none studies the endpoint coupling. \citet{kollovieh2025flowmatchinggp} use flow matching with GP priors for probabilistic time-series forecasting, a conditional task with an independent prior-data coupling; this is closest in spirit to our GP-prior setup and orthogonal on the axis we study, and our results suggest that framework could itself benefit from kernel-OT coupling. \citet{zhang2025functionalrectifiedflow} straighten flows in Hilbert space via rectification, iteratively re-training on model-generated endpoint pairs; rectification changes pairs across training rounds using the model, whereas kernel OT changes pairs within each batch using data geometry in a single run and extends beyond Hilbert spaces (signature kernel on path space), so the two can in principle be composed. \citet{lee2025operatorflowmatching} propose operator flow matching for time-series forecasting, again conditional prediction with independent coupling. On the finite-dimensional side, \citet{fatras2020minibatchwasserstein, fatras2021minibatchot} analyze minibatch OT plans and the expected-minibatch coupling that Section \ref{sec:coupling} relies on.

\ifarxiv\else
\clearpage
\input{sections/checklist}
\fi

\end{document}